\pgfplotsset{compat=1.18}
\newtheorem*{rep@theorem}{\rep@title}
\newcommand{\newreptheorem}[2]{%
\newenvironment{rep#1}[1]{%
 \def\rep@title{#2 \ref{##1}}%
 \begin{rep@theorem}}%
 {\end{rep@theorem}}}
\newtheorem{theorem}{Theorem}
\newtheorem{proposition}[theorem]{Proposition}
\newtheorem{corollary}[theorem]{Corollary}
\newtheorem{lemma}[theorem]{Lemma}
\newtheorem{conjecture}[theorem]{Conjecture}
\theoremstyle{remark}
\newtheorem*{remark}{Remark}
\theoremstyle{definition}
\newtheorem{definition}[theorem]{Definition}
\newtheorem{example}[theorem]{Example}
\newcommand{\gr}{\mathrm{gr}}
\newcommand{\defeq}{:=}
\newcommand{\eqdef}{=:}
\newcommand{\littleo}{o}
\newcommand{\restriction}{|}
\newcommand{\isdnet}{ISDnet}
\newcommand{\ODESolve}{\texttt{ODESolve}}
\newcommand{\SANN}{\texttt{SANN}}
\newcommand{\MAtt}{\mathrm{MAtt}}
\newcommand{\Att}{\mathrm{Att}}
\newcommand{\SoftMax}{\mathrm{SoftMax}}
\newcommand{\SArgMax}{\mathrm{SArgMax}}
\newcommand{\saxp}{\mathrm{saxp}}
\newcommand{\LayerNorm}{\mathrm{LayerNorm}}
\newcommand{\commented}[1]{}
\def\eqref#1{equation~\ref{#1}}
\def\1{\bm{1}}
\DeclareMathAlphabet{\mathsfit}{\encodingdefault}{\sfdefault}{m}{sl}
\SetMathAlphabet{\mathsfit}{bold}{\encodingdefault}{\sfdefault}{bx}{n}
\newcommand{\R}{\mathbb{R}}
\DeclareMathOperator*{\argmax}{arg\,max}
\DeclareMathOperator{\sign}{sign}
\title{Semialgebraic Neural Networks:\\From roots to representations}
\author{S.~David Mis \\
Rice University \\
\texttt{dmis@rice.edu} \\ 
\And
Matti Lassas \\
University of Helsinki \\
\texttt{matti.lassas@helsinki.fi} \\
\And
Maarten V. de Hoop \\
Rice University \\
\texttt{mdehoop@rice.edu}}
\def\R{\mathbb R}
\def \det {\hbox{det}\,}
\newcommand{\beq}{\begin{eqnarray}}
\newcommand{\eeq}{\end{eqnarray}}
\newcommand{\ba}{\begin{eqnarray*}}
\newcommand{\ea}{\end{eqnarray*}}
\begin{document}

\maketitle

\begin{abstract}
    Many numerical algorithms in scientific computing---particularly in areas like numerical linear algebra, PDE simulation, and inverse problems---produce outputs that can be represented by semialgebraic functions; that is, the graph of the computed function can be described by finitely many polynomial equalities and inequalities. 
    In this work, we introduce Semialgebraic Neural Networks (SANNs), a neural network architecture capable of representing any bounded semialgebraic function, and computing such functions up to the accuracy of a numerical ODE solver chosen by the programmer.
    Conceptually, we encode the graph of the learned function as the kernel of a piecewise polynomial selected from a class of functions whose roots can be evaluated using a particular homotopy continuation method.
    We show by construction that the SANN architecture is able to execute this continuation method, thus evaluating the learned semialgebraic function.
    Furthermore, the architecture can exactly represent even discontinuous semialgebraic functions by executing a continuation method on each connected component of the target function.
    Lastly, we provide example applications of these networks and show they can be trained with traditional deep-learning techniques.
\end{abstract}

\section{Introduction}

Many classical numerical algorithms compute semialgebraic functions---functions whose graphs are defined by finitely many polynomial equalities and inequalities.
Such functions include anything computable using finitely many real variables and finitely many operations addition, subtraction, multiplication, division, extraction of roots, and branching \texttt{if} statements with polynomial decision boundaries.
Familiar primitives from numerical linear algebra, such as solving linear systems, Schur complements, and extraction of real eigenvalues are also semialgebraic, as are the solution operators of optimization problems with polynomial objective functions and constraints \citep{Numericalrecipes}.
Polynomial functions of $n$-th roots $\sqrt[n]{\cdot}$ are widely used in computation of
eigenvalues of non-symmetric matrices and in the perturbation theory of eigenvalues of linear operators \citep{Kato}. Also, Finite Element approximations of partial differential equations
lead to solving linear systems whose solutions are  semialgebraic functions \citep{FEM-symbolic}.
Due to their ubiquity, it is likely that the space of semialgebraic functions provides good inductive bias for operator learning with neural networks. 
This ansatz leads to our central question: Is it possible to design a neural network architecture capable of exactly representing any semialgebraic function?

In this paper, we combine techniques from classical numerical analysis, namely polynomial \emph{homotopy continuation methods} for root finding, with ReLU activation functions to create neural networks capable of exactly representing any bounded semialgebraic function.
The starting point for our idea is the observation that, under mild conditions, the graph of a semialgebraic function $F$ given by the relation $F(x) = y$ can be encoded as the kernel of a continuous piecewise polynomial $G$, 
i.e., the set of $(x, y)$ where $G(x, y) = 0$ (Section \ref{sec-background}).
Therefore, in principle, we can use a neural network combining polynomials and ReLU activations to learn $G$, then append a root-finding procedure such as Newton's method to compute $F(x) = \texttt{root}(G(x, \cdot)) = y$ in a manner similar to \citet{baiDeepEquilibriumModels2019}.
However, this straightforward approach has numerous practical difficulties: there is no guarantee such a neural network would have a root $y$ for every $x$, roots may not be unique, the root-finding procedure may never converge, etc.
To remedy these issues, we take inspiration from homotopy continuation methods for root finding.
We construct a function $H(x,y,s)$ that continuously deforms from a simple function $G_0$ to the target function $G$:
\begin{equation}
    H(x, y, 0) = G_0(x, y) \qquad H(x, y, 1) = G(x, y).
\end{equation}
We use $H$ to find a root of $G$ by starting at a root of the simpler function $G_0$ at $s=0$, then slowly increasing $s$ and tracking the motion of the root.
In classical homotopy continuation methods, this procedure is carried out by numerically solving an ODE initial value problem describing the motion of $y$ as a function of $s$.
With a properly constructed $H$, this procedure is powerful enough to compute the roots of $G$, even if $G_0$ bears little resemblance to $G$ (Theorem \ref{thm-homotopy-continuation}).

Instead of using a neural network to compute $G$ directly, our Semialgebraic Neural Networks (SANNs) compute the vector field of an ODE system arising from a homotopy continuation method to find a root of $G$ (Section \ref{sec-sanns}).
We then integrate across the interval $s \in [0, 1]$ using an off-the-shelf ODE solver.
This approach is powerful enough to represent all bounded semialgebraic functions in the sense that $F(x)$ is the exact solution to the ODE initial value problem (Section \ref{sec-range}).
This approach always has a well-defined output, avoids costly explicit root-finding procedures, and has fixed evaluation time when using a non-adaptive ODE solver.
To our knowledge, we present the first neural networks capable of computing arbitrary bounded semialgebraic functions on high-dimensional data.

\subsection{Related work}

\textbf{Implicit deep learning.}
SANNs fall within the general framework of ``implicit deep learning'' \citep{elghaouiImplicitDeepLearning2021a,kolterDeepImplicitLayers}, particularly as popularized by Neural ODEs \citep{chenNeuralOrdinaryDifferential2019} and Deep Equilibrium Models \citep{baiDeepEquilibriumModels2019}.
In particular, SANNs can be trained using the adjoint sensitivity method \citep{pontryaginMathematicalTheoryOptimal1962} in a manner similar to Neural ODEs.
Unlike Neural ODEs, where the ODE is fixed (based on network weights) and the initial value is based on the network input $x$, SANNs instead define a family of ODEs parameterized by $x$ and the initial value is fixed.
Importantly, Neural ODEs compute diffeomorphisms \citep{dupontAugmentedNeuralODEs2019}, while SANNs compute bounded semialgebraic functions, which may not be continuous or differentiable everywhere.
Compared to Deep Equilibrium Models, the SANN architecture does not include an explicit root-finding or fixed-point computation step; instead SANNs directly parameterize a homotopy continuation method for root-finding.

\textbf{Semialgebraic machine learning.}
There are many neural network architectures designed to represent important subsets of semialgebraic functions.
For example, polynomials and piecewise polynomials are computed by \citet{dehoopDeepLearningArchitectures2022,chrysosDeepPolynomialNeural2021,ohPolynomialNeuralNetworks2003a}, and certain rational functions are computed by \citet{boulleRationalNeuralNetworks2020}.
We are unaware of prior neural networks designed to represent the entire class of bounded semialgebraic functions as a whole.
An exciting complementary approach to semialgebraic approximation has been developed by \citet{marxSemialgebraicApproximationUsing2021}, where they encode semialgebraic functions as the optima of functions constructed from Christoﬀel--Darboux polynomials.
Their approach is developed within the framework of classical approximation theory and data analysis rather than neural networks.

\subsection{Our contribution}

We introduce the SANN architecture and provide an exact characterization of the functions it represents. Our key contributions are as follows:

\textbf{Definition of SANNs:} We present the SANN architecture (Algorithm \ref{alg-sann}), a new object grounded in classical homotopy continuation methods for root-finding, that is capable of exactly representing any bounded semialgebraic function.

\textbf{Exact characterization of expressivity:} We rigorously characterize the expressivity of SANNs (Section \ref{sec-range}), proving their ability to represent all continuous bounded semialgebraic functions via a homotopy continuation argument (Section \ref{sec-range-continuous}).

\textbf{Extension to discontinuous functions:} Building on the homotopy continuation framework, we extend our results to show that SANNs can also represent discontinuous bounded semialgebraic functions through constructive proofs (Section \ref{sec-range-discontinuous}).

We also argue SANNs are naturally suited to a wide variety of difficult problems, with applications to solving linear systems (Section \ref{sec-examples}), nonlinear inverse problems (Section \ref{sec-eit}), general optimization problems (Section \ref{sec-optimization}), and transformers (Section \ref{sec-transformer}).
We also demonstrate through numerical experiments that SANNs can be trained using standard techniques (Sections \ref{sec-training-linear} and \ref{sec-training-eit}), and discuss future research directions (Section \ref{sec-discussion}).

\begin{figure}[t]
    \centering
    \includegraphics[width=\linewidth]{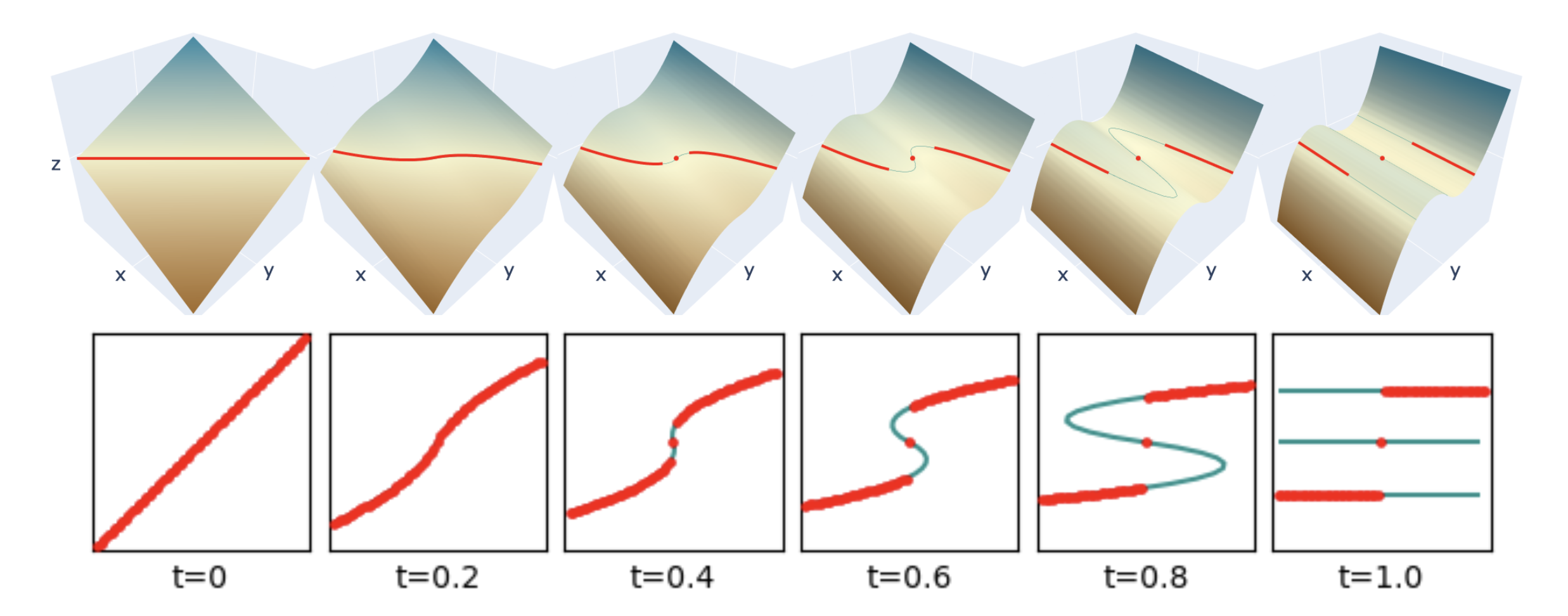}
    \caption{Computing the Heaviside step function $F$ (plotted in red) using a homotopy method. 
    The graph of $F$ is a subset of the kernel of a higher-dimensional piecewise polynomial $G$.
    In the upper row, we plot the surface of a homotopy $H$ such that $H(x,y,0)=0$ is trivial to solve (upper left), and $H(x,y,1)=G$ (upper right).
    The kernel of $H$ is shown with a green line (red points are also on top of the green line).
    The bottom row shows the projection of $H$ onto to $xy$-plane.
    We compute $F(x)=y$ by following the kernel of $H$ from time $t=0$ to $t=1$, keeping $x$ fixed.
    Although the kernel of $G$ contains points outside the graph of $F$ (the visible parts of the green lines), these points are never encountered when computing $F$ by following the homotopy.
    This process captures the discontinuity exactly, including the isolated point at $(0,0)$. 
    }
    \label{fig-heaviside-homotopy}
\end{figure}

\section{Background}

\label{sec-background}

\subsection{Inf-sup definable piecewise polynomials}

Our networks are constructed from the piecewise polynomials formed by combining polynomials with ReLU activations. Using these components, we can represent the pointwise minimum and maximum of any finite collection of polynomials. In mathematical literature, such structures are referred to as \emph{lattices}. Instead of ``min'' and ``max,'' the terms ``inf'' and ``sup'' are traditionally used in lattice theory; since our constructions rely heavily on lattice theory, we adopt this terminology here. 
Notably, we do not use the word ``lattice'' to refer to a grid-like structure (e.g., $\mathbb{Z}^d$), but rather a set of functions that is closed under the ``min'' and ``max'' operations.

We begin by defining the class of functions that form the backbone of our constructions.
The notation $\mathbb{R}[x_1, \dots, x_m]$ refers to the set of polynomials of variables $x_1, \dots, x_m$ with coefficients from $\mathbb{R}$. 

\begin{definition}[Inf-sup definable piecewise polynomials]
\thlabel{def-isd}
Let $D \subseteq \mathbb{R}^m$.
$ISD(D)$ is the lattice generated by the polynomials $\mathbb{R}[x_1, \dots, x_m]$, viewed as functions $D \to \mathbb{R}$, together with the $\min$ and $\max$ operations.
The vectors of $n$ $ISD(D)$ functions are denoted $ISD(D, \mathbb{R}^n)$.
For any non-negative integer $k$, $ISD^k(D, \mathbb{R}^n)$ are the $k$-times differentiable functions in $ISD(D, \mathbb{R}^n).$
\end{definition}

The shorthand ``$f$ is $ISD$'' means $f \in ISD(D, \mathbb{R}^n)$ for some appropriate $D$ and $n$.
Our nomenclature is taken from \citet{mahePierceBirkhoffConjecture1984} and \citet{mahePierceBirkhoffConjecture2007}.

Summarizing, the components $f_k$ of a function $f\in ISD(D, \mathbb{R}^n)$, $D\subset \mathbb R^m$ can be written as 
$$
f_k(x)=\max_{i=1,\dots, I}\min_{j=1,\dots,J} \bigg(
\sum_{\alpha_1,\alpha_2,\dots,\alpha_m=0}^N a_{k,i,j,\alpha_1,\dots,\alpha_m}x_1^{\alpha_1}x_2^{\alpha_2}\dots x_m^{\alpha_m}\bigg),\quad x=(x_1,x_2,\dots,x_m).
$$
Later, when we consider $f_k$ as a layer of a neural network, the coefficients $a_{k,i,j,\alpha_1,\dots,\alpha_m}$ are used as the parameters which are optimized in training process.

Since the $\min$ and $\max$ operations can be written using the ReLU function, all  ISD functions can be written as a feed forward neural network where in the first hidden layer one has several different polynomial activation functions followed by fully connected ReLU layers.
Furthermore, \citet{henriksenLatticeorderedRingsFunction1962} showed $ISD(D, \mathbb{R}^k)$ forms a ring, so the sum and product of any ISD function is itself ISD (see \thref{thm-lattice-over-subring-of-f-ring-is-ring} in Appendix \ref{appendix-l-rings}).
Thus $ISD(D, \mathbb{R}^k)$ are precisely the functions $D \to \mathbb{R}^k$ that can be represented using finitely many, arbitrarily interleaved vector additions, Hadamard products, scalar multiplications and ReLU activations on variables from $D$.

We focus our attention on ISD functions since these are the piecewise polynomials computable using neural networks that combine polynomials and ReLU activations.
There are many ways to construct such networks; 
we provide an example based on \emph{Operator Recurrent Neural Networks} \citep{dehoopDeepLearningArchitectures2022} in Appendix \ref{appendix-mrnn}, along with proofs of the expressivity of that architecture.
Our approach can be adapted to analyze the expressivity of other polynomial networks with ReLU activations as well, such as those proposed by \citet{chrysosDeepPolynomialNeural2021}. 

It is currently unknown whether every continuous real piecewise
polynomial is ISD; this is the famous \emph{Pierce--Birkhoff conjecture} \citep{bochnakRealAlgebraicGeometry1980, maddenHenriksenIsbellFrings2011}.
ISD functions suffice for our purpose to build semialgebraic neural networks, regardless whether the Pierce--Birkhoff conjecture ultimately holds or not.

\begin{definition}[Locally ISD functions]
\thlabel{def-isd-loc}
A function $f: \mathbb{R}^m \to \mathbb{R}^n$ is \emph{locally ISD}, denoted $f \in ISD_{loc}(\mathbb{R}^m, \mathbb{R}^n),$ if for every bounded $D \subset \mathbb{R}^m$, $f \restriction_{D} \in ISD(D, \mathbb{R}^n)$.
\end{definition}

Appendix \ref{appendix-l-rings} contains additional background information on lattice theory.

\subsection{Semialgebraic geometry}

We now present the basic properties of semialgebraic sets relevant to SANNs.
A more thorough introduction to semialgebraic geometry can be found in the first chapters of \citet{bochnakRealAlgebraicGeometry1998}.

\begin{definition}[Basic semialgebraic set]
A set $D \subset \mathbb{R}^m$ is a \emph{basic semialgebraic set} if there exists 
$J_1,J_2 \in \mathbb{N}$ and
finite sets of polynomials $P = \{ p_i \}_{i=1}^{J_1}$, $Q = \{ q_i \}_{i=1}^{J_2} \subset \mathbb{R}[x_1, \dots, x_m]$ such that
$$
D = \{ x   \in  \mathbb{R}^m \mid p (x) = 0,\, q(x) < 0 \text{ for all } p \in P,\, q \in Q \}.
$$
\end{definition}

\begin{definition}[Semialgebraic set]
A set $D \subset \mathbb{R}^m$ is a \emph{semialgebraic set} if it is the union
of finitely many basic semialgebraic sets.
\end{definition}

Finite unions, finite intersections, complements, projections, closures, and interiors of semialgebraic sets are all semialgebraic themselves, a result of the famous Tarski--Seidenberg theorem \citep{tarskiDecisionMethodElementary1951,seidenbergNewDecisionMethod1954} (on the formulation of the Tarski--Seidenberg theorem we use in this paper, see
Appendix \ref{subsec:Tarski}).
Furthermore, every closed semialgebraic set can be defined using only non-strict inequalities \citep{lojasiewiczENSEMBLESSEMIANALYTIQUES1964}.

\begin{definition}[Semialgebraic function]
    $f: \mathbb{R}^m \to \mathbb{R}^n$ is a \emph{semialgebraic function} if its graph is a semialgebraic subset of $\mathbb{R}^m \times \mathbb{R}^n$.
\end{definition}

Special cases of semialgebraic functions include (piecewise) polynomials and (piecewise) rational functions, among others.
For example, the graph of $x \mapsto 1/x$ is the set $\{(x,y)\in \R^2\mid xy=1\}.$
\begin{example}
Consider the semialgebraic function $F: \mathbb{R} \to \mathbb{R}$ 
below.
\begin{figure}[ht]
  \begin{minipage}{.45\textwidth}
    \begin{equation*}
        F(x) = \begin{cases}
            \left(x+1\right)^{2}+\frac{1}{2}, & x < -1 \\[3mm]
            \sqrt{ 1 - x^2} & x \in [0, 1] \\[3mm]
            \frac{1}{x} - 1 & x > 1.
        \end{cases}
    \end{equation*}
  \end{minipage}
  \begin{minipage}{.45\textwidth}
    \centering
    \begin{tikzpicture}[every mark/.append style={mark size=1pt}]
    \usetikzlibrary {arrows.meta} 
    \begin{axis}[
        samples=100, 
        width=8cm, height=5cm,
        axis equal image,
        axis lines = left,
        enlarge x limits = true,
        enlarge y limits = true,]
      \addplot[domain=-2:-1, {Stealth}-] ({x}, {((x+1)^2 + 0.5)});
      \addplot[domain=-1:1] ({x}, {sqrt(1 - x^2)});
      \addplot[domain=1:3, -{Stealth}] ({x}, {(x>1)*(1/x - 1)});
      \addplot[mark=*] coordinates {(1, 0)};
      \addplot[mark=*] coordinates {(-1, 0)};
      \addplot[mark=*, fill=white] coordinates {(-1, 0.5)};
    \end{axis}
    \end{tikzpicture}
  \end{minipage}
  \centering 
\end{figure}

$F$ is a polynomial when $x < 1$, a rational function when $x > 1$, and neither when $x \in [0, 1].$
It has a single discontinuity at $x=-1$, and its graph is not closed there.
It is unbounded as $x \to -\infty$, but becomes bounded whenever the domain is restricted from below.
\end{example}

\subsection{Semialgebraic functions as kernels of ISD functions}

In this subsection, we show how to represent semialgebraic sets as the kernel of ISD functions.
This is a semialgebraic analogue to the standard technique in differential geometry of constructing smooth manifolds from the level sets of smooth functions \citep{leeIntroductionSmoothManifolds2013}. Proofs for Proposition \ref{prop-semialgebraic-sets-as-kernels} and Corollary \ref{cor-semialgebraic-graphs-as-kernels} as well as additional commentary and an illustration can be found in Appendix \ref{sec-semialgebraic-functions-as-kernels}.

\begin{proposition}
\thlabel{prop-semialgebraic-sets-as-kernels}
$S \subset \mathbb{R}^m$ is a closed semialgebraic set if and only if there exists $f \in ISD^1(\mathbb{R}^m, \mathbb{R}_{\ge 0})$ such that $\ker(f) = S$, where $\ker(f) := \{ x \in \mathbb{R}^m : f(x) = 0 \}$.
\end{proposition}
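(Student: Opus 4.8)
The plan is to prove the two implications separately, treating the reverse implication as a short observation and the forward implication as an explicit construction.

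For the ``if'' direction, suppose $f \in ISD^1(\mathbb{R}^m, \mathbb{R}_{\ge 0})$. Then $f$ is continuous, so $\ker(f) = f^{-1}(\{0\})$ is closed. Moreover every $ISD$ function is obtained from polynomials by finitely many applications of $\min$ and $\max$; polynomials are semialgebraic, $\min$ and $\max$ are semialgebraic, and a composition of semialgebraic maps is semialgebraic by Tarski--Seidenberg, so $f$ has a semialgebraic graph. Hence $\ker(f)$, being a fiber of $f$, is a semialgebraic subset of $\mathbb{R}^m$, and $S = \ker(f)$ is closed semialgebraic.

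For the ``only if'' direction, let $S \subseteq \mathbb{R}^m$ be closed semialgebraic. Since every closed semialgebraic set can be defined using only non-strict inequalities, and since an equality $p = 0$ is the conjunction of $p \ge 0$ and $-p \ge 0$, I would write $S = \bigcup_{a=1}^{A} \bigcap_{b=1}^{B_a} \{ x : p_{a,b}(x) \ge 0 \}$ for finitely many polynomials $p_{a,b} \in \mathbb{R}[x_1, \dots, x_m]$. The building block is, for a polynomial $p$, the function $h_p(x) := \big( \min(p(x), 0) \big)^2$: it is nonnegative, it vanishes exactly on $\{ p \ge 0 \}$, it is $ISD$ (it is the product $\min(p,0)\cdot\min(p,0)$, where $\min(p,0)$ is the min of the two polynomials $p$ and $0$, and $ISD(\mathbb{R}^m, \mathbb{R})$ is closed under products by the Henriksen ring theorem, \thref{thm-lattice-over-subring-of-f-ring-is-ring}), and it is $C^1$ because the one-variable map $t \mapsto \min(t,0)^2$ has continuous derivative $t \mapsto 2\min(t,0)$ and $p$ is smooth. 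Then I would set $f(x) := \prod_{a=1}^{A} \sum_{b=1}^{B_a} h_{p_{a,b}}(x)$. Finite sums and products preserve both membership in $ISD$ (sums by \thref{def-isd}, products by the ring result) and $C^1$ regularity, and $f \ge 0$, so $f \in ISD^1(\mathbb{R}^m, \mathbb{R}_{\ge 0})$; and $f(x) = 0$ iff some inner sum vanishes iff $x \in \bigcap_b \{ p_{a,b} \ge 0 \}$ for some $a$, i.e.\ iff $x \in S$, giving $\ker(f) = S$.

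The one point that needs care is the $C^1$ requirement, which is precisely why the square appears in $h_p$: the naive kernel encoder $\min(p, 0)$ (equivalently $\max(-p,0)$) is merely Lipschitz, with a corner along $\{ p = 0,\ \nabla p \ne 0 \}$, so squaring is what produces a continuous derivative without changing the zero set, the sign, or the $ISD$ property. Everything else is bookkeeping: closure of $ISD$ under finite sums and products, and the correspondence ``sum $\leftrightarrow$ intersection, product $\leftrightarrow$ union'' at the level of zero sets. It is also worth noting that this construction does not in general yield regularity beyond $C^1$, which is exactly what the statement claims.
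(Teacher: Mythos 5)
Your proof is correct and follows essentially the same route as the paper's: the paper encodes a closed basic piece by $\sum_i p_i^2 + \sum_i \bigl(\max\{0,-q_i\}\bigr)^2$ (your $h_p=\min(p,0)^2$ is the same building block up to sign), gets $C^1$ from the derivative of the squared clip, and uses sums for intersections and products for unions exactly as you do, the only cosmetic differences being that you fold equalities into pairs of non-strict inequalities via the \citet{lojasiewiczENSEMBLESSEMIANALYTIQUES1964} representation while the paper keeps $p_i=0$ terms and passes to closures of the basic pieces. One tiny attribution note: closure of $ISD$ under sums is not immediate from \thref{def-isd} (the definition only gives the lattice generated by polynomials) but, like closure under products, comes from \thref{thm-lattice-over-subring-of-f-ring-is-ring}, which you cite anyway.
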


We extend the above theorem to graphs of vector-valued semialgebraic functions with closed graphs.

\begin{corollary}
\thlabel{cor-semialgebraic-graphs-as-kernels}
$F: \mathbb{R}^m \to \mathbb{R}^n$ is a semialgebraic function with closed graph if and only if there exists a $G \in ISD^1(\mathbb{R}^m \times \mathbb{R}^n, \mathbb{R}^n_{\ge 0})$ 
such that $\ker(G) = \text{gr}(F),$ where $\text{gr}(F) \defeq \{ (x, F(x)) \} \subset \mathbb{R}^m \times \mathbb{R}^n$ is the graph of $F$.
\end{corollary}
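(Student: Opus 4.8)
The plan is to derive \thref{cor-semialgebraic-graphs-as-kernels} essentially immediately from \thref{prop-semialgebraic-sets-as-kernels}, applied to the set $S = \gr(F)$ regarded as a subset of $\mathbb{R}^{m+n} = \mathbb{R}^m \times \mathbb{R}^n$. The only extra ingredient is a trivial reconciliation of codomain shapes: the proposition produces a \emph{scalar}-valued $ISD^1$ function, whereas the corollary asks for one valued in the nonnegative orthant $\mathbb{R}^n_{\ge 0}$.

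For the forward implication, note that $F$ being a semialgebraic function means, by definition, that $\gr(F) \subset \mathbb{R}^{m+n}$ is semialgebraic, and by hypothesis it is closed; so \thref{prop-semialgebraic-sets-as-kernels} (in ambient dimension $m+n$) yields $g \in ISD^1(\mathbb{R}^m \times \mathbb{R}^n, \mathbb{R}_{\ge 0})$ with $\ker(g) = \gr(F)$. Taking $G \defeq (g, 0, \dots, 0)$, with the constant $0$ (a polynomial, hence $ISD^1$) in the remaining $n-1$ coordinates, gives $G \in ISD^1(\mathbb{R}^m \times \mathbb{R}^n, \mathbb{R}^n_{\ge 0})$ with $\ker(G) = \ker(g) = \gr(F)$, as required. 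For the converse, given such a $G = (G_1, \dots, G_n)$ with each $G_i \in ISD^1(\mathbb{R}^m \times \mathbb{R}^n, \mathbb{R}_{\ge 0})$, the reverse implication of \thref{prop-semialgebraic-sets-as-kernels} — that the kernel of any such function is closed and semialgebraic — applies to each $G_i$, so $\gr(F) = \ker(G) = \bigcap_{i=1}^{n} \ker(G_i)$ is a finite intersection of closed semialgebraic sets, hence itself closed and semialgebraic; thus $F$ is a semialgebraic function with closed graph.

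The argument is short because all the real work lives in \thref{prop-semialgebraic-sets-as-kernels}; the nearest thing to an obstacle here is purely cosmetic, namely the scalar-versus-vector codomain mismatch, which the zero padding (or, equally, taking $n$ copies of $g$) settles without difficulty — the $n$-component form is simply what is convenient downstream, where $G(x, \cdot)$ is read as a square system $\mathbb{R}^n \to \mathbb{R}^n$ for homotopy continuation. The only facts left to check are that finite intersections of closed semialgebraic sets are again closed and semialgebraic, and that the padding coordinates preserve membership in $ISD^1$; both are routine, and a reader who prefers not to invoke the reverse implication of the proposition can instead observe directly that every $ISD$ function is continuous and semialgebraic (immediate from closure of $C^0$ and of the semialgebraic category under $\min$, $\max$, and the ring operations, the last via Tarski--Seidenberg), so that $\gr(F) = G^{-1}(\{0\})$ is automatically closed and semialgebraic. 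Any such auxiliary verification I would relegate to the appendix, matching how the proposition itself is handled.
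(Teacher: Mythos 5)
Your proof is correct and follows essentially the same route as the paper: the forward direction invokes Proposition~\ref{prop-semialgebraic-sets-as-kernels} on $\gr(F)\subset\mathbb{R}^{m+n}$ and promotes the scalar $g$ to a vector-valued $G$ (the paper takes $n$ copies of $g$ where you pad with zeros, a cosmetic difference you yourself note), and the converse uses that $ISD$ functions are continuous and semialgebraic so that $\ker(G)$ is closed and semialgebraic, exactly as in the paper.
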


Corollary \ref{cor-semialgebraic-graphs-as-kernels} makes it computationally possible to find points of the graph of the semialgebraic function $F$ by using a SANN related to an ISD function $G$, as we will see in Section \ref{sec-range}.

\section{Semialgebraic Neural Networks}
\label{sec-sanns}

\subsection{ISD networks}

SANNs are built from auxiliary networks capable of computing ISD functions; we call such networks ``ISD networks''.
As discussed in Section \ref{sec-background}, there are various ways to design such networks. An example architecture and its expressivity theorems are provided in Appendix \ref{appendix-mrnn}.

To streamline notation, we define a class of ISD functions that will be used frequently:
\begin{definition}
    $\isdnet(m, n, k) \defeq ISD(\mathbb{R}^{m} \times \mathbb{R}^{n+k} \times \mathbb{R},\; \mathbb{R}^{(n+k) \times (n+k)} \times \mathbb{R}^{n+k})$
\end{definition}
A function in $\isdnet(m, n, k)$ accepts three parameters $x \in \mathbb{R}^{m}$, $z \in \mathbb{R}^{n+k}$, and $s \in \mathbb{R}$, and it returns a matrix $M \in \mathbb{R}^{(n+k) \times (n+k)}$ and vector $b \in \mathbb{R}^{(n+k)}.$
We further decompose $z = (y, t)$, where $y \in \mathbb{R}^n$ is the output of the computed semialgebraic function and $t \in \mathbb{R}^k$ are auxiliary ``time'' variables needed for the homotopy continuation constructions in Section \ref{sec-range}.
Including these auxiliary variables is analogous to ``lifting'' (i.e. increasing the width) common in many architectures.
Such lifting is required for narrow ReLU networks to be universal approximators in the infinite-layer limit \citep{luExpressivePowerNeural2017a}.
A similar strategy is used in \citet{dupontAugmentedNeuralODEs2019} to augment Neural ODEs.

We require only $k=1$ in our constructions, but it is easy to extend the arguments to $k > 1$.
Larger values of $k$ may have practical benefits not captured by the characterization of the range of SANNs presented here, and we leave that investigation for future work.

\subsection{SANN architecture}
\label{sec-san-architecture}

We define 
SANNs to be functions of the form $f_{\mathcal{N},c_{\text{max}}}:x\mapsto \Pi z_N$,
where ${\mathcal N}\in \isdnet(m, n, k)$, $N\in \mathbb N$, $c_{\text{max}}>0$, $x\in \R^m$, 
and $z_N$
is obtained by an approximation of the solution of an
ordinary differential equation, that is,
by setting $z_0=0$ and defining 
for $j=0,1,\dots,N-1$,
\begin{align}\label{ODE-alt disc}
    z_{j+1} &= \textrm{ODE-step}\left(\dot{z}_j, z_j, \frac{j}{N} \right) \defeq z_j +\frac{1}{N} \dot{z}_j
\end{align} 
where
\begin{align}
    \label{ODE-alt discrete}
    \dot z_j&=\textrm{clamp-sol} \left( M \Big(x,z_j,\frac jN \Big), b\Big(x,z_j,\frac jN \Big) \right).
\end{align}
Matrix $M(x,z,s)$ and vector $b(x,z,s)$ are the output of an ISD network ${\mathcal N}(x,z,s)$. The function $\textrm{clamp-sol}$ is defined 
\begin{equation}
\textrm{clamp-sol}(M,b) \defeq \begin{cases}
        \hbox{clamp}(M^{-1}b,-c_{\max},c_{\max}) & \text{if $M$ is invertible} \\
        0 & \text{otherwise.}
    \end{cases}
\end{equation}
The function ``$\textrm{clamp}$'' operates component-wise on each element of a vector, and is defined
\begin{equation}
    \textrm{clamp}(a, low, high) \defeq \begin{cases} a, & a \in [low, high] \\
    low, & a < low \\
    high & a > high.
    \end{cases}
\end{equation}
These functions guarantee the iteration (\ref{ODE-alt discrete}) is well-defined even when $\mathcal{N}$ produces singular or nearly-singular output matrices $M$. 
Lastly, $\Pi$ projects onto the first $n$ components of its input.

The iteration (\ref{ODE-alt discrete})
is the  Euler finite-difference scheme that approximates 
the solution of the ODE
\begin{align}
    \dot z(s)&=\hbox{clamp-sol}(M(x,z(s),s),b(x,z(s),s)), \label{ODE-alt} \\
    z(0)&=0. \label{ODE-alt2}
\end{align}
As $N\to \infty$, the output of the finite difference scheme converges
to the solution of the ODE.
These limit functions are of the form $f^{lim}_{\mathcal{N},c_{\text{max}}}:x\mapsto \Pi z(1)$ that are defined
by the ODE (\ref{ODE-alt})--(\ref{ODE-alt2}). We call these functions as the limit functions of 
SANNs and denote those by $SANN^{lim}(\mathcal{N}, \cdot, c_{\max})$.
We note that the forward-Euler definition for ODE-step used in (\ref{ODE-alt disc}) can be replaced with other numerical solvers, such as Runge-Kutta methods.

\begin{figure}[t]
    \centering
    \includegraphics[width=0.8\linewidth]{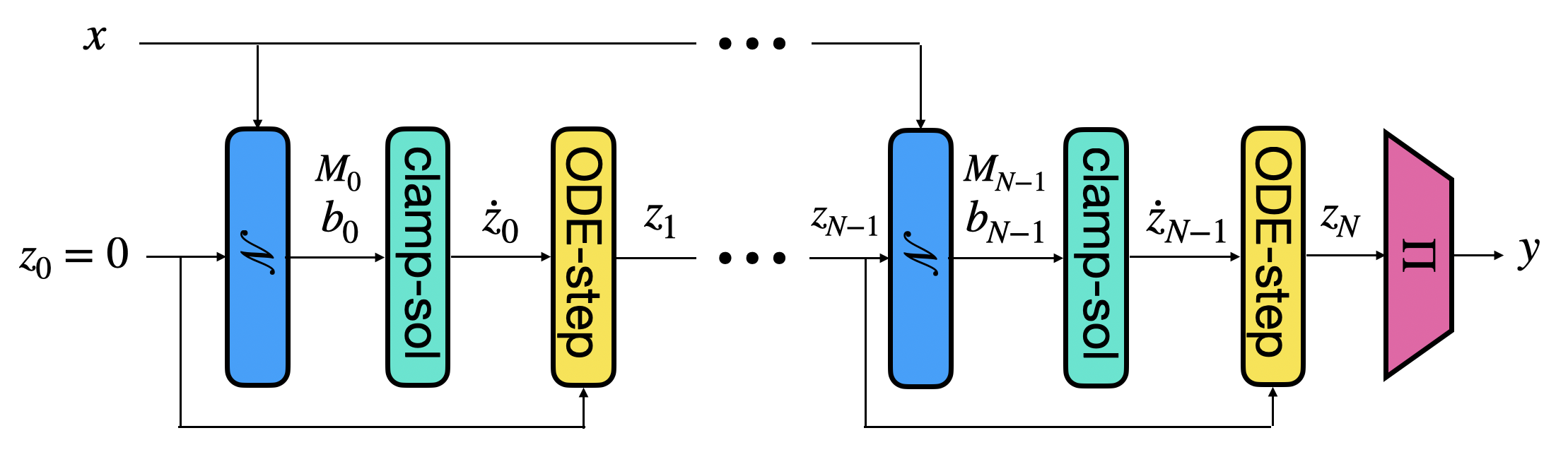}
    \caption{Architecture diagram for a SANN. The SANN outputs $y$ as a semialgebraic function of input $x$. Vectors $z_j$ are the current values of an ODE at timestep $j$. The time-derivative $\dot{z}_j$ is computed using a neural network $\mathcal{N}$ capable of computing ISD piecewise polynomials. $\mathcal{N}$ accepts the current ODE state $(z_j, j/N)$, as well as recurrent input $x$. The output of $\mathcal{N}$ is a matrix $M$ and vector $b$, from which $\dot{z}_j$ is computed using $\hbox{clamp-sol}(M, b)$ ($\dot{z}_i = M^{-1} b$ in the common case). $\hbox{ODE-step}$ is a single update of a numerical ODE solver. Finally, $\Pi$ is a projection operation such that $y$ is the first $n$ components of $z$. 
    }
    \label{fig:architecture}
\end{figure}

\subsection{SANN architecture: Algorithmic presentation}
The above definition of the SANN architecture is amenable to mathematical analysis. Algorithm \ref{alg-sann} is an alternative presentation that closely reflects how a SANN may be implemented in a functional programming style. It is completely equivalent to the description above.

\begin{algorithm}[h]
    \caption{Evaluating a SANN}
    \label{alg-sann}
    \begin{algorithmic}[1]
        \Require $\mathcal{N} \in \isdnet(m, n, k)$ \\
        $x \in \mathbb{R}^m$ \\
        $c_{\max} \in \mathbb{R}_{\ge 0}$
        
        \Function {\SANN}{$\mathcal{N}$, $x$, $c_{\max}$} 
            \Function{$\dot{z}$}{$z$, $s$}
                \State $(M, b) \leftarrow \mathcal{N}(x, z, s)$ \label{line-Mb}
                \If {$M$ is singular} \label{line-M-singular-1}
                    \State \Return $0$ \label{line-M-singular-2}
                \Else 
                    \State \Return \texttt{clamp}$(M^{-1} b$, $-c_{\max}, c_{\max})$ \label{line-clamp}
                \EndIf
            \EndFunction
            \State $(y, t) \leftarrow \ODESolve( \dot{z}, (0, 0) )$ \label{line-ODESolve}
            \State \Return $y$
        \EndFunction
\end{algorithmic}
\end{algorithm}

In Algorithm \ref{alg-sann}, $\ODESolve( \dot{z}, z(0) )$ outputs $z_N$ according to the finite difference scheme in formulas
(\ref{ODE-alt disc}), or another numerical solver chosen by the programmer. 
This interface reflects popular libraries for numerically solving ODEs, e.g. \citet{malengierODESHighLevel2018, kidgerNeuralDifferentialEquations2021}.

\begin{remark}
Our expressivity proofs below refer to the true solution $z(1)$ to the ODE (\ref{ODE-alt})--(\ref{ODE-alt2}) rather than the numerical solution $z_N.$
The numerical ODE solver may introduce approximation error into SANNs, even for semialgebraic functions, but this approximation error is of a fundamentally different character than that which normally arises in neural networks, and the mitigation strategies will be different for SANNs.
For example, increasing the number of steps $N$ will decrease the approximation error \emph{without} increasing the number of parameters in the network.
On the other hand, increasing the number of parameters in $\mathcal{N}$ does not decrease the approximation error, as it generally would for other architectures.
Furthermore, SANNs may be constructed using other numerical strategies besides the finite-difference scheme (\ref{ODE-alt disc}).
Changing this scheme does not affect our expressivity proofs in this paper, but there may be other implications in practice.
\end{remark}

\subsection{Training}

In the supervised learning regime, we are given training data $\{(x^{(\ell)}, y^{(\ell)}):\ \ell=1,2,\dots,L\}$, $y^{(\ell)}=F(x^{(\ell)})$ where
$F$ is the target semialgebraic function. 
The output of a SANN with input $x$ is $y_N(x) \defeq \Pi z_N$
and the intermediate update directions given in equation (\ref{ODE-alt discrete}) are $\dot{y}_j (x) \defeq \Pi \dot{z}_j$.
In our experiments, we found it helpful to include a ``direction loss'' term to train each $y_j$ for $j = 1, \dots, N$:
\begin{gather}
\label{eq:direction_loss}
\mathcal{L}_{\text{direction}} (x^{(\ell)}) \defeq
 \frac{1}{N} \sum_{j=1}^{N}
\left\| y^{(\ell)} - \Big( y_{j}(x^{(\ell)}) + \left(1 - \frac jN \right) \dot{y}_j (x^{(\ell)}) \Big) \right\|^2, \\
\label{eq:total_loss}
\mathcal{L}_{\text{accuracy}} (x^{(\ell)}) \defeq
     \left\| y^{(\ell)} - y_N (x^{(\ell)})  \right\|^2,
\qquad
\mathcal{L}_{\text{total}} \defeq \sum_{\ell=1}^L
     \mathcal{L}_{\text{accuracy}} \left(x^{(\ell)} \right) + \lambda \mathcal{L}_{\text{direction}} \left(x^{(\ell)}\right)
\end{gather}

 where $\lambda>0$ is a small, suitably chosen parameter, e.g. $\lambda = 10^{-2}.$
$\mathcal{L}_{\text{direction}}$ reflects the accuracy loss if the direction vector $\dot{y}_j$ were held constant for the remainder of the numerical integration.

\section{Expressivity of SANNs}
\label{sec-range}

In this section, we show that SANNs are capable of exactly representing any bounded semialgebraic function.
We start by showing that every continuous semialgebraic function can be evaluated by a particular homotopy continuation method that is computable  by iteratively solving the ODE (\ref{ODE-alt})--(\ref{ODE-alt2}) with different parameter functions $\mathcal{N}$.
We then extend this result to cover discontinuous semialgebraic functions by first separating each connected component of the graph, applying results for the continuous case to obtain homotopies that represent each piece, then glueing these homotopies together in a manner consistent with ODE (\ref{ODE-alt})--(\ref{ODE-alt2}) to obtain the SANN. Additional details, including background in semialgebraic geometry and full proofs of the theorems we present here, are in Appendices \ref{appendix-homotopy-continuation} and \ref{appendix-expressivity-proofs}.

\subsection{Continuous semialgebraic functions}
\label{sec-range-continuous}

This section introduces a family of homotopies $H_{x,a}$ from which we can construct a continuation method to evaluate the roots of ISD functions.

Let $u_F < u_H$ be positive real numbers and $U_F = (-u_F, u_F)^n$. Consider the family of homotopies $H_{x,a}: \mathbb{R}^n \times  \mathbb{R} \to \mathbb{R}^n$, parameterized by $(x, a) \in \mathbb{R}^m \times U_F,$ defined 
\begin{equation}
    \label{eq-H}
    H_{x,a} (y, t) \defeq (1-t) (y-a) + t \big( g(x,y) + G(x,y) \big)
\end{equation}
where $g \in ISD_{loc}^1(\mathbb{R}^m \times \mathbb{R}^n, \mathbb{R}^n_{\ge 0})$ and $G \in ISD^1(\mathbb{R}^m \times \mathbb{R}^n, \mathbb{R}^n).$
Each $H_{x,a}$ is constructed from ring operations on locally ISD functions, so it is locally ISD itself.

We always evaluate $H_{x,a}$ keeping $x$ fixed; to simplify notation, let $g_x \defeq g(x, \cdot)$ and $G_x \defeq G(x, \cdot)$. 

For our construction to work, we require $g_x$ to be $0$ on $U_F$ and grow quickly outside $U_F$.
Specifically, for every $x \in \mathbb{R}^m$, we require $G_x = \littleo(g_x)$ and $\text{id}_y = \littleo(g_x)$, where we have used Landau ``little-o'' notation; e.g. $G_x = \littleo(g_x)$ means $\lim_{\|y\| \to +\infty} \frac{G_x(y)}{g_x(y)} = 0.$

The next lemma shows how to construct $g_x$ based only on $U_F$.
\begin{lemma}
\thlabel{lem-g}
There exists $g \in ISD_{loc}^1(\mathbb{R}^m \times \mathbb{R}^n, \mathbb{R}^n)$ such that $g_x \equiv 0$ on $U_F$ and for every $G \in ISD^1(\mathbb{R}^m \times \mathbb{R}^n, \mathbb{R}^n)$, we have $G_x = \littleo(g_x)$.
\end{lemma}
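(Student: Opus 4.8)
The plan is to exhibit $g$ explicitly. Since $g_x$ must vanish on $U_F = (-u_F, u_F)^n$ and must dominate every ISD function component-wise as $\|y\| \to \infty$, the natural guess is to take, for each output coordinate $\ell = 1, \dots, n$, a high-degree polynomial in the squared distance from $U_F$ that is clamped to $0$ on $U_F$. Concretely, define the one-dimensional building block $\phi_d(r) \defeq \max(0, r - u_F)$ for $r \ge 0$ (and symmetrically in the negative direction), so that the scalar function $p(y_i) \defeq \big(\max(0, y_i - u_F) + \max(0, -y_i - u_F)\big)$ is ISD, nonnegative, zero exactly on $[-u_F, u_F]$, and behaves like $|y_i|$ at infinity. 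Then set $g_x(y)_\ell \defeq \big(\sum_{i=1}^n p(y_i)\big)^{D}$ for a sufficiently large even integer $D$ (independent of $x$). This is manifestly in $ISD_{loc}$ — it is a product/power of max's of polynomials — vanishes on $U_F$, and since it is independent of $x$ the local-ISD condition holds uniformly. One should double-check the $C^1$ requirement: $\max(0,\cdot)$ composed with an affine function is only Lipschitz, so to land in $ISD^1_{loc}$ I would instead use a squared version, e.g. $p(y_i) \defeq \max(0, (y_i - u_F)(y_i + u_F))^{2}$ or $p(y_i)\defeq \max(0, y_i^2 - u_F^2)^2$, which is ISD, nonnegative, zero precisely on $[-u_F,u_F]$, $C^1$ (the square kills the corner), and grows like $|y_i|^4$; then $g_x(y)_\ell = \big(\sum_i p(y_i)\big)^{D}$ is $C^1$ and grows polynomially of arbitrarily high degree $4D$.

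The key remaining step is the little-o bound: for every $G \in ISD^1(\mathbb{R}^m\times\mathbb{R}^n, \mathbb{R}^n)$ and every fixed $x$, show $\lim_{\|y\|\to\infty} G_x(y)_\ell / g_x(y)_\ell = 0$ for each $\ell$. Here I would use the structural description of ISD functions from the discussion after \thref{def-isd}: each component $G_x(\cdot)_\ell$ is a finite max of finite mins of polynomials in $y$ (the coefficients now being fixed real numbers depending on $x$), hence is bounded in absolute value by some genuine polynomial $q_\ell(y)$ of some degree $N_\ell$. On the other hand $g_x(y)_\ell = \big(\sum_i p(y_i)\big)^D \gtrsim \|y\|^{4D}$ outside a large ball, because $\sum_i p(y_i) \gtrsim \sum_i \max(0, y_i^2 - u_F^2)^2$, and for $\|y\|$ large at least one coordinate has $|y_i| \gtrsim \|y\|/\sqrt n$, giving $\sum_i p(y_i) \gtrsim \|y\|^4$. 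Choosing $D$ once and for all so that $4D$ exceeds the degree of $q_\ell$ — but wait, $D$ must be chosen uniformly in $G$, which is impossible since $G$ is arbitrary. So the correct order is reversed: the statement is ``there exists $g$ such that for every $G$...'', and a single $g$ cannot dominate polynomials of unbounded degree. This is the main obstacle, and the resolution is that $g$ need not be a polynomial of fixed degree.

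Therefore I would take $g_x(y)_\ell$ to grow \emph{super-polynomially}. Within $ISD_{loc}$ we cannot use $e^{\|y\|^2}$, but we do not need a single closed form valid on all of $\mathbb{R}^n$ — we need only that $g \in ISD^1_{loc}$, i.e. $g\restriction_D \in ISD(D,\mathbb{R}^n)$ for each bounded $D$. So define $g$ piecewise on the dyadic annuli $A_k = \{2^k \le \|y\|_\infty \le 2^{k+1}\}$ (suitably smoothed), letting $g_x$ agree on $A_k$ with $(\sum_i p(y_i))^{k}$ or some increasing sequence of polynomial degrees; on each bounded $D$ only finitely many annuli are relevant, so $g\restriction_D$ is a finite max/min of polynomials, hence ISD; and for any fixed polynomial-bounded $G_x$ the ratio $\to 0$ because eventually the local degree of $g$ exceeds $\deg q_\ell$. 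The delicate points I expect to spend real work on are: (i) making the annular gluing genuinely $C^1$ while staying ISD (use ISD partitions-of-unity-like multipliers built from clamped polynomials, exploiting that $ISD$ is a ring so products stay ISD), and (ii) keeping $g_x \equiv 0$ on all of $U_F$, which is automatic since $U_F \subset A_{-\infty}$-region where every $p(y_i)=0$. Once $g$ is constructed, verifying $G_x = \littleo(g_x)$ and $\mathrm{id}_y = \littleo(g_x)$ (the latter being the case $G(x,y)=y$, a polynomial of degree $1$) is routine from the degree comparison.
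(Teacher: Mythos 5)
Your overall strategy matches the paper's: you correctly identify that no fixed-degree choice of $g$ can work (it must dominate ISD functions of arbitrary degree), and you resolve this exactly as the paper does, by making $g$ a piecewise construction on annuli whose local polynomial degree grows without bound, noting that such a $g$ is only locally ISD and that this is all the lemma requires. The final degree-comparison step (each component of $G_x$ is a finite max of mins of polynomials in $y$, hence polynomially bounded, while the local degree of $g_x$ eventually exceeds any fixed degree) is also the paper's argument.

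However, the execution has a genuine gap in two places. First, the function you actually write down, equal to $\big(\sum_i p(y_i)\big)^{k}$ on the annulus $A_k$, is not even continuous across annulus boundaries, and the proposed repair (``ISD partition-of-unity-like multipliers'') is left unconstructed; producing $C^1$ transition functions that are themselves min/max expressions of polynomials is precisely the nontrivial part you defer. Second, and more seriously, your justification of local ISD-ness --- ``on each bounded $D$ only finitely many annuli are relevant, so $g\restriction_D$ is a finite max/min of polynomials, hence ISD'' --- is not valid: a function that is piecewise polynomial on finitely many semialgebraic pieces is not thereby ISD; that implication is exactly the open Pierce--Birkhoff conjecture discussed after \thref{def-isd}. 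Local ISD-ness must instead come from exhibiting $g\restriction_D$ as a single expression built from polynomials by ring and lattice operations. The paper's construction handles both issues at once with a telescoping sum: with $u_j = u_F + j$, it sets $g(x,y)=\sum_{j=1}^{J}\big(\mathrm{ReLU}(\sign(y)y-u_{j-1}\mathbf{1})\big)^{j+1}$ on $U_J\setminus U_{J-1}$. On any bounded set contained in some $U_J$, the terms with index greater than $J$ vanish identically, so $g$ coincides there with one explicit ISD expression (no gluing or Pierce--Birkhoff-type claim is needed), and each term vanishes to order $j+1\ge 2$ at the boundary where it activates, so $g$ is automatically $C^1$. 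If you replace your piecewise-power definition with an additive construction of this kind (each newly activated term vanishing to at least second order on its activation boundary), the remainder of your argument goes through.
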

\begin{proof}
Let $u_0 = u_F$, $u_j = u_{j-1} + 1$, $U_j = (-u_j, u_j)^n$ for $j \in \mathbb{N}$.
Notice $\mathbb{R}^n = \bigcup_{j = 1}^\infty U_j$.
Define $g$ piecewise as follows:
\begin{align}
    g(x, y) = \begin{cases} 
        0 & y \in U_F \\
        \sum_{j=1}^{J} (\mathrm{ReLU}(\sign(y)y - u_{j-1} \mathbf{1}))^{j+1}  & y \in U_{J} \setminus U_{J-1}, \text{ for each } J \in \mathbb{N}.
    \end{cases}
\end{align}
(The exponent $j+1$ denotes repeated component-wise multiplication.)
This $g$ is not ISD since it is defined over infinitely many regions, but it is locally ISD.
Furthermore, the local degree of $g$ increases without bound as $\| y \| \to \infty$, so $g$ eventually dominates any polynomial. Thus $G_x = \littleo(g_x)$ for any ISD $G$.
Finally, it is straightforward to verify $g$ is continuously differentiable.
\end{proof}

The next theorem shows the family of homotopies in equation (\ref{eq-H}) is capable of representing any continuous bounded semialgebraic function $F$ for almost every choice of $a$; more precisely every $a$ except possibly on a semialgebraic set of dimension less than $n$
(c.f. the Transversality theorem in \ref{appendix-sard-transversality}). 
The requirement that $F$ be bounded is always satisfied when the domain of continuous $F$ is compact, which is a common and reasonable assumption in the context of machine learning with finite training data. The proof of Theorem \ref{thm-continuous-semialgebraic-functions-computable-by-homotopy} is in appendix \ref{appendix-expressivity-proofs}.

\begin{theorem}
\thlabel{thm-continuous-semialgebraic-functions-computable-by-homotopy}
Let $F: \mathbb{R}^m \to U_F$ be a given continuous semialgebraic function.
For every ${x \in \mathbb{R}^m}$ there exists $H_{x,a}$ with the form (\ref{eq-H}) such that for almost every ${a \in U_F}$, the following hold:
\begin{enumerate}
    \item There exists ${y \in ISD([0,1], \mathbb{R}^n}), {t \in ISD([0,1], \mathbb{R})}$, such that ${t(0) = 0}$, ${t(1) = 1}$, 
    and ${(y(s), t(s)) \in \ker(H_{x,a})}$ for all $s \in [0,1]$.
    \item The kernel of $H_{x,a}(\cdot, 1)$ is the singleton $\{ F(x) \}.$ 
\end{enumerate}
\end{theorem}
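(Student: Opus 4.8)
The plan is to build $H_{x,a}$ from an ISD encoding of $\mathrm{gr}(F)$ together with the blow-up function $g$ of Lemma \ref{lem-g}, check (2) essentially by inspection, and prove (1) by a probability-one homotopy-continuation argument on the branch of $\ker(H_{x,a})$ issuing from $(a,0)$. \emph{Construction and part (2).} Fix $x$. As $F$ is continuous its graph is closed and semialgebraic, so by Proposition \ref{prop-semialgebraic-sets-as-kernels} / Corollary \ref{cor-semialgebraic-graphs-as-kernels} there is a scalar $f\in ISD^1(\mathbb R^m\times\mathbb R^n,\mathbb R_{\ge0})$ with $f(x,\cdot)^{-1}(0)=\{F(x)\}$ for every $x$; from it one builds $G\in ISD^1(\mathbb R^m\times\mathbb R^n,\mathbb R^n)$ (a multiple of $f$ with a prescribed sign pattern on the faces of $U_F$) and takes $g$ from Lemma \ref{lem-g}, arranging that: $g_x\equiv 0$ on $U_F$; $G_x=\littleo(g_x)$ and $\mathrm{id}_y=\littleo(g_x)$; the $j$-th coordinate of $G_x$ is $\le 0$ on the face $\{y_j=-u_F\}$ for each $j$; and $g_x+G_x$ vanishes nowhere on $\mathbb R^n\setminus\overline{U_F}$ (this last point uses that $g_x$ vanishes on $\overline{U_F}$ while its local degree, hence growth, is unbounded, so $g_x$ dominates $G_x$ far out; a rescaling $g\leadsto\lambda_x g$ handles the intermediate annulus). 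Part (2) is then immediate: $H_{x,a}(\cdot,1)=g_x+G_x$ does not depend on $a$, on $U_F$ it equals $G_x$ which vanishes only at $F(x)\in U_F$, and it vanishes nowhere off $\overline{U_F}$.

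\emph{A priori confinement of the start branch.} For any $a\in U_F$, on the face $\{y_j=-u_F\}$ the coordinate $g_{x,j}$ vanishes, so the $j$-th coordinate of $H_{x,a}(y,t)$ equals $(1-t)(-u_F-a_j)+t\,G_{x,j}(y)<0$ for $t\in[0,1)$ and equals the nonzero number $G_{x,j}(y)$ at $t=1$ (nonzero because $F(x)$ is off that face); hence $\ker(H_{x,a})$ is disjoint from $\bigcup_j\{y_j=-u_F\}\times[0,1]$. A connected subset of $\ker(H_{x,a})\cap(\mathbb R^n\times[0,1])$ containing $(a,0)$ (which has all $a_j>-u_F$) therefore stays in $\{y:y_j>-u_F\ \forall j\}$, so on it $\|y\|_\infty=\max_j y_j$; and if this maximum exceeded a threshold $R_x$, then in the achieving coordinate $k$ the super-polynomially growing $g_{x,k}(y)$ would dominate the polynomially bounded $|G_{x,k}(y)|$, making the $k$-th coordinate of $H_{x,a}(y,t)$ strictly positive for every $t\in[0,1]$. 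Thus the component $\mathcal C$ of $\ker(H_{x,a})\cap(\mathbb R^n\times[0,1])$ through $(a,0)$ is contained in the fixed bounded box $B_x:=(-u_F,R_x]^n\times[0,1]$, independently of $a$.

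\emph{Genericity and conclusion of (1).} Put $\Phi(y,t,a):=H_{x,a}(y,t)$ on an open bounded neighbourhood of $B_x$ inside $\mathbb R^n\times[0,1)\times U_F$; since $\partial_a\Phi=-(1-t)I$ is invertible for $t<1$, $0$ is a regular value of $\Phi$, and there $g$ is genuinely ISD and semialgebraic, so the semialgebraic parametric-transversality/Sard theorem (Appendix \ref{appendix-sard-transversality}) gives: for all $a\in U_F$ off a semialgebraic set of dimension $<n$, $0$ is a regular value of $H_{x,a}$, whence $\ker(H_{x,a})\cap B_x$ (away from $t=1$) is a $1$-manifold whose only boundary point at $t=0$ is $(a,0)$. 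For such $a$, $\mathcal C$ is a compact connected $1$-manifold with boundary, i.e.\ an arc; near $(a,0)$ it is a graph over $t$ (implicit function theorem, $\partial_yH_{x,a}(a,0)=I$), so $(a,0)$ is one endpoint; the other endpoint lies at $t\in\{0,1\}$, and $t=0$ would force it to be $(a,0)$, so it is $(F(x),1)$ by (2). A bounded semialgebraic set is semialgebraically path-connected, so a parametrization of $\mathcal C$, reparametrized to $[0,1]$, furnishes the required $y\in ISD([0,1],\mathbb R^n)$ and $t\in ISD([0,1],\mathbb R)$ with $t(0)=0$, $t(1)=1$, $(y(s),t(s))\in\ker(H_{x,a})$; the fact that this parametrization can be chosen piecewise polynomial rests on the structure theory of semialgebraic curves.

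I expect the confinement step to be the main obstacle: the growth of $g$ that pins down the unique zero at $t=1$ simultaneously creates spurious branches of $\ker(H_{x,a})$ that escape to infinity as $t\to0^+$, so one cannot just invoke compactness of the full zero set — the argument must isolate the branch through $(a,0)$ and trap it between the walls $\{y_j=-u_F\}$ and the far region where $g$ outgrows $G$, which is precisely why $g$ is made to vanish on $\overline{U_F}$ and grow fast and why the sign of $G$ on those faces matters. A secondary difficulty is constructing $G$ so that $\ker(g_x+G_x)=\{F(x)\}$ and the face-sign condition hold together when $F$ is not itself ISD; and a minor point is promoting the continuous semialgebraic parametrization of $\mathcal C$ to an ISD one.
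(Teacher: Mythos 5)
Your overall strategy is recognizably the same as the paper's (encode $\mathrm{gr}(F)$ as the kernel of an ISD map $G$ via \thref{cor-semialgebraic-graphs-as-kernels}, add the super-polynomially growing $g$ of \thref{lem-g}, start from the linear map $y-a$, get genericity in $a$ from the semialgebraic transversality theorem via $\partial_a H=-(1-t)I$, and trace the branch through $(a,0)$), but you deviate at the confinement step, and that deviation opens a genuine gap. The paper keeps $g$ and $G$ componentwise nonnegative; then conclusion (2) is immediate ($g_x+G_x=0$ forces $G_x=0$, hence $y=F(x)\in U_F$, where $g_x$ vanishes anyway), and confinement is obtained not by ``walls'' but by observing that the tracked component stays in $U_F$ up to some $t_0>0$ and that for $t\ge t_0$ the growth of $g$ forces the entire kernel into a bounded $U_H$; hypotheses A.1--A.4 of \thref{thm-homotopy-continuation} then deliver both the continuation to $t=1$ and the ISD parametrization you only sketch via ``structure theory of semialgebraic curves.'' You instead modify $G$ to satisfy $G_{x,j}\le 0$ on the faces $\{y_j=-u_F\}$ so that those hyperplanes become barriers. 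This is exactly where your proof breaks: once $G$ is allowed to be negative somewhere, conclusion (2) --- and your auxiliary claim that $g_x+G_x$ vanishes nowhere on $\mathbb{R}^n\setminus\overline{U_F}$ --- no longer follows, and your proposed fix (rescaling $g\leadsto\lambda_x g$) cannot supply it, because rescaling does not change where $g$ vanishes: each component $g_j$ is identically zero on the slab $\{|y_j|\le u_F\}$ and is arbitrarily small just outside the box, while near the negative faces your $G_j$ is bounded away from $0$ from below by construction.

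Concretely, take the natural realizations of your sketch, $G_j=c_j(y)\,f(x,y)$ with $c_j\le 0$ on the $j$-th face. If the $c_j$ have a common zero in $\overline{U_F}$ (e.g.\ $c_j=-(y_j+u_F)$, common zero at the corner), then $H_{x,a}(\cdot,1)=G_x$ vanishes there, violating (2). If instead the $c_j$ are strictly negative (e.g.\ $c_j=-((y_j+u_F)^2+\epsilon)$), then each coordinate $\lambda g_j+G_j$ is negative just outside the box (where $g_j\to 0$ but $f_x\ge\delta>0$) and positive far out, so it changes sign along outward paths; in a symmetric situation the intermediate value theorem applied on the diagonal $y=(c,\dots,c)$, $c>u_F$, produces a common zero of all coordinates for \emph{every} $\lambda$, i.e.\ a spurious root of $H_{x,a}(\cdot,1)$ outside $\overline{U_F}$. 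What you call a ``secondary difficulty'' is therefore the crux: no construction of the sign-adjusted $G$ compatible with (2) is given, and without (2) your endpoint argument in (1) also collapses, since the arc emanating from $(a,0)$ could terminate at such a spurious zero at $t=1$ inside your box $B_x$. (Two smaller points: your claim that the kernel is a compact $1$-manifold \emph{with boundary} needs an argument at $t=1$, where the transversality-in-$a$ computation degenerates, and the ISD parametrization of the arc is asserted rather than proved --- both of these are exactly what the paper outsources to \thref{thm-homotopy-continuation}, which you could cite instead of re-deriving.) The repair is essentially to revert to the paper's choice: keep $G\ge 0$ from \thref{cor-semialgebraic-graphs-as-kernels}, drop the face-sign condition, and replace your wall argument by the observation that, for generic $a$, the branch through $(a,0)$ remains in $U_F$ on some interval $[0,t_0]$, while for $t\ge t_0$ the term $t\,g$ dominates $(1-t)(y-a)+tG$ outside a bounded set.
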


Theorem \ref{thm-continuous-semialgebraic-functions-computable-by-homotopy} is used
as follows:
When $t=0$ the function $y \mapsto H_{x,a}(y,0)$ has the root $a$.
We apply the curve-tracing
algorithm by solving the ODE (\ref{eq-ode-1})--(\ref{eq-ode-4}) (see Appendix \ref{sec-curve-tracing}) for the function
$H_{x,a}$ to find roots of the functions $H_{x,a}(\cdot,t)$ with $t\in [0,1]$.
The root of $H_{x,a}(\cdot,1)$ gives $y=F(x)$.

Conclusion 1 guarantees the homotopy $H_{x,a}(y,t)$ can be solved to time $t=1$ for almost every choice of $a$; SANNs correspond to the choice $a=0$.
Conclusion 2 guarantees this solution is indeed the desired value $F(x)$.

To prove that SANNs can represent any continuous semialgebraic function, we need only show that the ODE (\ref{ODE-alt})--(\ref{ODE-alt2}) defining SANNs can recapitulate the ODE (\ref{eq-ode-1})--(\ref{eq-ode-4}) defining the $H_{x,a}$ homotopy continuation method. A full proof is in Appendix \ref{appendix-expressivity-proofs}.

\begin{theorem}
\label{thm-sanns-continuous}
Let $F: \mathbb{R}^m \to U_F$ be a given continuous semialgebraic function.
Then there exists $\mathcal{N} \in \isdnet(m, n, 1)$ and $c_{\max} > 0$ such that $\SANN^{\,lim}(\mathcal{N}, \cdot, c_{\max}) = F$.
\end{theorem}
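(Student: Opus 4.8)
The plan is to turn the homotopy continuation ODE associated with the family $H_{x,a}$ of Theorem~\ref{thm-continuous-semialgebraic-functions-computable-by-homotopy} (with the SANN-specified choice $a=0$) into the particular ODE form \eqref{ODE-alt}--\eqref{ODE-alt2} that defines $\SANN^{lim}$, and to exhibit an $\isdnet(m,n,1)$ network $\mathcal{N}$ whose output $(M,b)$ realizes that vector field via $\dot z = M^{-1}b$. First I would recall from Appendix~\ref{sec-curve-tracing} the standard form of the curve-tracing ODE \eqref{eq-ode-1}--\eqref{eq-ode-4}: differentiating the kernel constraint $H_{x,0}(y(s),t(s))=0$ in $s$ gives $D_y H_{x,0}\,\dot y + \partial_t H_{x,0}\,\dot t = 0$, which together with a normalization of the tangent direction (e.g. fixing $\dot t$, or an arclength-type condition) determines $(\dot y,\dot t)$ as the solution of a square linear system. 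The key observation is that since $g,G$ are (locally) $ISD$ and $H_{x,a}$ is built from ring operations on locally $ISD$ functions, the Jacobian entries $D_y H_{x,0}$ and $\partial_t H_{x,0}$ are themselves locally $ISD$ in $(x,y,t)$ --- this uses that $ISD(D,\R^k)$ is a ring and that derivatives of the specific piecewise-polynomial pieces appearing in $g$ (from Lemma~\ref{lem-g}) and in $G$ are again piecewise polynomial, hence $ISD$ on bounded sets.

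Second, I would assemble the matrix $M(x,z,s)$ and vector $b(x,z,s)$. Writing $z=(y,t)\in\R^{n+1}$, the identity $\dot z = \mathrm{clamp\text{-}sol}(M,b)$ with $M$ invertible should reproduce exactly the curve-tracing linear system: take $M$ to be the $(n+1)\times(n+1)$ matrix whose first $n$ rows are $\big[\,D_y H_{x,0}(y,t)\;\big|\;\partial_t H_{x,0}(y,t)\,\big]$ and whose last row encodes the chosen normalization of $(\dot y,\dot t)$ (for instance the row $(0,\dots,0,1)$ if we use the parametrization $t(s)=s$, so that $\dot t\equiv 1$), and take $b$ to be the corresponding right-hand side (all zeros except a $1$ in the last coordinate, say). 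All of these entries are locally $ISD$; to get a genuine $ISD$ --- not merely locally $ISD$ --- network as required by $\isdnet(m,n,1)$, I would restrict attention to a sufficiently large bounded box $D\subset\R^m\times\R^{n+1}\times\R$ containing all the curves $(x,y(s),t(s))$ we care about (possible because $F$ maps into the bounded set $U_F$ and the continuation curve stays in a bounded region by the $\littleo$ growth conditions on $g_x$), and then use that a locally $ISD$ function restricted to a bounded set is $ISD$. Outside that box one is free to extend arbitrarily within $ISD$. Then $c_{\max}$ is chosen larger than the sup of $\|(\dot y,\dot t)\|_\infty$ along all relevant curves so that the clamp is inert on the trajectory of interest.

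Third, I would verify the initial condition and the endpoint. At $s=0$ we need $z(0)=0$, i.e. $y(0)=0$ and $t(0)=0$; since $a=0$, the root of $H_{x,0}(\cdot,0)=y-0=y$ is exactly $y=0$, so $z_0=0$ is the correct start --- this is precisely why the SANN's hard-coded initial value $z_0=0$ is compatible with the $a=0$ instance of the homotopy. By Conclusion~1 of Theorem~\ref{thm-continuous-semialgebraic-functions-computable-by-homotopy} (applied at $a=0$, which is permissible since $0\in U_F$ and the theorem's conclusions hold for almost every $a$; here I must make sure the construction actually puts $a=0$ in the good set, or else argue the good set can be arranged to contain $0$ --- see the obstacle below) there is an $ISD$ curve $(y(s),t(s))$ in $\ker(H_{x,0})$ with $t(0)=0$, $t(1)=1$; reparametrizing by $t$ if necessary, this curve solves the ODE I have built. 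By uniqueness of solutions to the ODE \eqref{ODE-alt}--\eqref{ODE-alt2} (the vector field is, along the curve, given by a linear solve with nonsingular $M$, hence locally Lipschitz there) the SANN limit output $z(1)$ equals $(y(1),t(1))$, and by Conclusion~2 the kernel of $H_{x,0}(\cdot,1)$ is the singleton $\{F(x)\}$, so $y(1)=F(x)$. Applying $\Pi$ gives $\SANN^{lim}(\mathcal{N},x,c_{\max})=F(x)$ for all $x$.

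\textbf{Main obstacle.} The delicate point is the choice $a=0$: Theorem~\ref{thm-continuous-semialgebraic-functions-computable-by-homotopy} only guarantees the good behavior for \emph{almost every} $a\in U_F$, whereas the SANN architecture forces $a=0$. Resolving this requires either (i) showing that the exceptional set (a semialgebraic set of dimension $<n$) can be translated away from $0$ by an $ISD$-preserving change of coordinates --- e.g. replace $F$ by $F(\cdot)-a_0$ for a good $a_0$ and absorb the shift into $G$, solving for $y+a_0$ and projecting back --- so that $0$ becomes a good parameter, or (ii) re-examining the proof of Theorem~\ref{thm-continuous-semialgebraic-functions-computable-by-homotopy} to see that with the specific $g$ from Lemma~\ref{lem-g} the point $a=0$ is in fact always good. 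I expect option (i) is the cleanest: a constant shift is an $ISD$ operation, it does not disturb any of the growth/transversality hypotheses, and it lets us invoke the theorem as a black box. The remaining routine-but-nonzero work is checking that differentiating the piecewise pieces of $g$ in Lemma~\ref{lem-g} keeps everything $C^1$ and $ISD$ on bounded sets, and that the normalization row makes $M$ invertible exactly where the continuation curve is regular, which is guaranteed by the transversality statement underlying Theorem~\ref{thm-continuous-semialgebraic-functions-computable-by-homotopy}.
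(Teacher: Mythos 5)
Your overall strategy---realizing the curve-tracing ODE for $H_{x,a}$ as the SANN ODE (\ref{ODE-alt})--(\ref{ODE-alt2}) by having an ISD network output a Jacobian-based square linear system---is the same as the paper's, but there is a genuine gap at the step where you fix a \emph{single} normalization row once and for all (the row $(0,\dots,0,1)$ corresponding to $t(s)=s$, or any other fixed choice) and claim that ``the normalization row makes $M$ invertible exactly where the continuation curve is regular, which is guaranteed by the transversality statement.'' Regularity (hypothesis \ref{A3}) only says the $n\times(n+1)$ Jacobian $DH$ has rank $n$ along the curve; it does not make the leading $n\times n$ block $D_yH$ invertible along the whole curve, nor any fixed $(n+1)\times(n+1)$ augmentation of $DH$. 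The curve furnished by Theorem \ref{thm-continuous-semialgebraic-functions-computable-by-homotopy} satisfies $t(0)=0$, $t(1)=1$ but need not be monotone in $t$: at a turning point, $\dot t=0$ and the tangent $(\dot y,0)$ lies in $\ker DH$, so $D_yH$ is singular there, your $M$ degenerates, and the parametrization $t(s)=s$ (your ``reparametrizing by $t$ if necessary'') cannot follow the curve at all. This is exactly why the paper works with the arc-length system \eqref{eq-ode-1}--\eqref{eq-ode-4} and, crucially, lets the choice of invertible $n\times n$ submatrix $\widehat M$ (equivalently, which column of $DH$ plays the role of $\widehat b$) \emph{change along the curve}: it builds local networks $\mathcal{N}_0,\mathcal{N}_1,\dots$ on overlapping $s$-intervals where the respective submatrices are invertible, $s$-glues them via Lemma \ref{lem-s-glueing}, and then uses the clamp (so $z$ stays in $[-c_{\max},c_{\max}]^{n+1}$) together with the Lipschitz bound for the piecewise-polynomial $\mathcal{N}$ on that compact set to get a uniform lower bound $\epsilon>0$ on the length of each glueing step, so finitely many submatrix switches reach $s=1$. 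This switching-and-glueing mechanism, plus the finiteness argument, is the core technical content of the paper's proof and is absent from your proposal.

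Regarding your ``main obstacle'' about $a=0$ versus almost every $a$: the paper's proof does not elaborate on this point (the main text simply states that SANNs correspond to the choice $a=0$), and your instinct that a shift can repair it is reasonable, but your fix (i) as written outputs $F(x)-a_0$, since $\SANN^{\,lim}$ applies no post-processing beyond $\Pi z(1)$; to make it work you would need an additional integration phase glued in with Lemma \ref{lem-s-glueing} (for instance a constant drift moving $y$ from $0$ to a good $a_0$ before the homotopy phase, in the spirit of the constructions used for the discontinuous case), rather than a bare change of variables.
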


\subsection{Discontinuous semialgebraic functions}
\label{sec-range-discontinuous}

Discontinuities can arise in the ODE (\ref{ODE-alt})--(\ref{ODE-alt2}) since the value of $\dot{z} = M^{-1} b$ can be discontinuous across a boundary where $M$ is singular.
Consider the simple example
\begin{equation}
    M = \begin{bmatrix} \text{abs}(x) \end{bmatrix} 
    \qquad b = \begin{bmatrix}  x \end{bmatrix} 
    \qquad \dot{z} =\hbox{clamp-sol}(M,b)
    = \begin{cases}
        -1 & x < 0 \\
        1 & x > 0, \\
    \end{cases}
\end{equation}
where the clamp-sol operation is defined  with $c_{max}=2$.
We exploit this behavior to show via construction that SANNs can exactly represent even discontinuous semialgebraic functions.

\begin{theorem}
\label{thm-sanns-discontinuous}
Let $F: \mathbb{R}^m \to \mathbb{R}^n$ be a (possibly discontinuous) bounded semialgebraic function.
Then there exists $\mathcal{N} \in \isdnet(m, n, 1)$ and $c_{\max} \in \mathbb{R}_{\ge 0}$ such that $\SANN^{\,lim}(\mathcal{N}, \cdot, c_{\max}) = F$.
\end{theorem}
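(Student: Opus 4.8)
The plan is to reduce the discontinuous case to finitely many instances of the continuous machinery of Section~\ref{sec-range-continuous}, glued together along the ODE time variable $s$ by exploiting the switching behaviour of \texttt{clamp-sol} exhibited by the $M=\mathrm{abs}(x),\,b=x$ example above. First I would invoke semialgebraic cell decomposition to write $\mathbb{R}^m=\bigsqcup_{i=1}^{P}D_i$, a finite partition into semialgebraic sets, chosen so that $F\restriction_{D_i}$ is continuous for each $i$, so that each $D_i$ is cut out by finitely many polynomial equalities and inequalities, and so that the partition is compatible with itself (each $\overline{D_i}$ is a union of cells). Because $F$ is a function, for each $i$ the closed semialgebraic set $C_i\defeq\overline{\mathrm{gr}(F\restriction_{D_i})}$ agrees with $\mathrm{gr}(F\restriction_{D_i})$ over all of $D_i$; the closure can only add points lying over $\overline{D_i}\setminus D_i$, and those will be harmless because I only ever intend to run the $i$th homotopy on inputs from $D_i$.

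Next, for each $i$ I would re-run the argument behind Theorem~\ref{thm-continuous-semialgebraic-functions-computable-by-homotopy} and Theorem~\ref{thm-sanns-continuous}, but targeting the closed set $C_i$ rather than a globally continuous function: take $G^{(i)}\in ISD^1$ with $\ker(G^{(i)})=C_i$ (Proposition~\ref{prop-semialgebraic-sets-as-kernels}, applied coordinatewise to get an $\mathbb{R}^n$-valued map), combine it with the barrier $g$ of \thref{lem-g} to form a homotopy of the shape (\ref{eq-H}), and conclude that for $x\in D_i$ the kernel of $H^{(i)}_{x,\cdot}(\cdot,1)$ is exactly $\{F(x)\}$ --- since $C_i$ adds nothing over $D_i$, and $g_x$ vanishes precisely on $U_F$, with $u_F$ chosen large enough that the range of the bounded function $F$ lies in $U_F$ --- together with an $ISD$ curve from an admissible base point $a_i$ to this kernel. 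To meet the SANN requirement that the ODE starts at $z_0=0$, I would prepend a short affine (hence $ISD$) segment driving $y$ from $0$ to $a_i$, exactly as in the reduction from Theorem~\ref{thm-continuous-semialgebraic-functions-computable-by-homotopy} to Theorem~\ref{thm-sanns-continuous}. This produces, for each $i$, an $\isdnet$ vector field $(M_i,b_i)$ whose limiting SANN trajectory equals $F(x)$ for every $x\in D_i$.

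Then I would split $[0,1]$ into consecutive subintervals $I_1,\dots,I_P$ and assemble a single $\mathcal{N}\in\isdnet(m,n,1)$ so that on $I_i$ it runs a reparametrised copy of $(M_i,b_i)$ when $x\in D_i$, and otherwise outputs a vector field that freezes $z$ (e.g.\ $b=0$, or a singular $M$). The predicates ``$s\in I_i$'' and ``$x\in D_i$'' are semialgebraic and hence computable as finite $\min$/$\max$ of polynomials, so they fold into $\mathcal{N}$; the ``freeze versus run'' switch is precisely the singular-$M$ discontinuity of \texttt{clamp-sol}. Since the $D_i$ partition $\mathbb{R}^m$, for each fixed $x$ exactly one subinterval is active: $z$ stays at $0$ until that subinterval, traverses $H^{(i)}$ there to reach $(F(x),1)$, and is frozen afterwards, so that $\Pi z(1)=F(x)$, i.e.\ $\SANN^{\,lim}(\mathcal{N},\cdot,c_{\max})=F$.

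The hard part will be Step~3, together with its interaction with Step~2: one must show the glued ODE (\ref{ODE-alt})--(\ref{ODE-alt2}), whose right-hand side is now genuinely discontinuous, remains well-posed and has $F(x)$ as its \emph{exact} solution for \emph{every} $x$ --- including $x$ on a cell boundary $\partial D_i$, where $F(x)$ is governed by a lower-dimensional cell and where an arbitrarily small perturbation of $x$ flips which homotopy is active. Concretely, the $x$-dependent routing has to be made consistent with the stratified structure of the decomposition, and the localized continuous-case analysis of Appendix~\ref{appendix-expressivity-proofs} (transversality, together with the unbounded growth of the barrier $g$) has to be re-verified for sets $C_i$ whose part over $\overline{D_i}\setminus D_i$ may be strictly larger than a graph. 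Once these are in place, the remaining bookkeeping --- reparametrising time on each $I_i$, absorbing the affine shifts $a_i$, and checking that the assembled object lies in $\isdnet(m,n,1)$ --- is routine.
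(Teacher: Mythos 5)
Your overall skeleton (decompose $\mathbb{R}^m$ into finitely many semialgebraic pieces on which $F$ is continuous, reuse the continuous-case homotopy machinery on each piece, and glue) matches the paper, but the step you yourself flag as ``the hard part'' --- the $x$-dependent routing --- is exactly where the proposal breaks, and the missing idea is the paper's main technical contribution for this theorem. Your gate ``run the $i$-th homotopy on $I_i$ iff $x\in D_i$, else freeze'' cannot be realized in one shot by the singular-$M$ switch: for ISD data $(M,b)$, the set of $x$ at which $\mathrm{clamp\mbox{-}sol}(M(x,z,s),b(x,z,s))\neq 0$ is $\{x: \det M(x,z,s)\neq 0,\ M^{-1}b\neq 0\}$, which is open in $x$ (continuity of $\det M$ and of $M^{-1}b$ where $M$ is invertible). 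So a one-shot gate can only distinguish \emph{open} cells, whereas any finite semialgebraic partition adapted to a genuinely discontinuous $F$ necessarily contains non-open (lower-dimensional) cells --- and those are precisely the inputs where the discontinuity sits and where correctness must be enforced exactly. (A related pitfall: gating by multiplying only $b$ by a continuous cutoff merely slows the traversal near $\partial D_i$, so the continuation is not completed within $I_i$; one must multiply $M$ and $b$ by a \emph{common} scalar factor so that $M^{-1}b$ is unchanged wherever it is nonzero.)

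The paper closes this gap with Proposition~\ref{prop-characteristic-functions}: characteristic functions of \emph{arbitrary} semialgebraic sets are realized by the ODE dynamics, not in one shot, but by a multi-phase arithmetic --- open sets directly via the common-factor trick, closed sets via $1-\chi$, locally closed sets via intersections, and general sets via cylindrical decomposition into disjoint locally closed pieces, all combined with the scalar-multiplication, addition, $s$-glueing and $t$-glueing lemmas (Lemmas~\ref{lem-scalar-multiplication}, \ref{lem-addition}, \ref{lem-s-glueing}, \ref{lem-t-glueing}). The routing itself is then done differently from your time-multiplexing: on $s\in[0,1/2]$ the network maps $x\in S_j$ to $(0,3j)$, i.e.\ it writes the component index into the auxiliary $t$-coordinate; on $s\in[1/2,1]$ the $K$ continuous continuations (from Theorem~\ref{thm-sanns-continuous}, shifted to start at $(0,3j)$) all run simultaneously but in disjoint $t$-bands, so they can be $t$-glued into a single continuous ISD field, and the dispatch phase is $s$-glued in front. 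If you tried to repair your per-interval $x$-gating for the lower-dimensional cells, you would end up rebuilding this characteristic-function machinery anyway; without it, the proposal does not prove the theorem.
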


Every semialgebraic set has finitely many semialgebraic connected components, so every semialgebraic function is piecewise continuous on finitely many pieces.
Our approach is to first show how we are able to represent the characteristic function of semialgebraic sets using the ODE (\ref{ODE-alt})--(\ref{ODE-alt2}).
From these, we can represent a semialgebraic decomposition of the domain $\mathbb{R}^m$, and apply the continuous homotopy arguments above on each continuous region.
We finally glue everything together in a way consistent with solving the ODE (\ref{ODE-alt})--(\ref{ODE-alt2}).
The full proof is in Appendix \ref{appendix-expressivity-proofs}.

\section{Numerical example: Solving linear systems}
\label{sec-examples}

In this section, we give an example of a numerical algorithm that can be exactly represented by a SANN. Specifically, we construct a SANN that exactly computes (to machine precision) the solution to a linear system $Xy = b$, something that is not possible for standard neural network architectures. The parameters for this exact reconstruction are chosen by hand; 
we also show that SANNs can be trained from data using traditional techniques to perform comparably to feed-forward networks.

We focus on the Jacobi iteration method for solving dense linear systems $Xy = g.$ Split $X$ into its diagonal part $D$, upper-triangular part $U$, and lower-triangular part $L$.
\begin{equation}
\setlength\arraycolsep{2pt}
X = \begin{bmatrix}
    d_1& & U  \\  
    & \ddots  & \\
    L & & d_n
\end{bmatrix},\quad D=\hbox{diag}(d_1,d_2,\dots,d_n).
\end{equation}
From any initial guess $y_0$, the Jacobi iteration method computes the iteration $y_{j}$ using
\begin{equation}
y_{j+1} = D^{-1} \big( g - (L + U)y_j \big).
\end{equation}
Adding $y_j - y_j$ to the right side and simplifying yields
\begin{equation}
y_{j+1} = y_j + D^{-1} \big( g - X y_j \big).
\end{equation}
The sequence of iterates $( y_1, y_2, \dots )$ computed in this way converges to the exact solution $y^* = X^{-1} b$ when the spectral radius of $D^{-1}(L + U)$ is less than $1$. It is exactly of the form (\ref{ODE-alt disc}), and thus computable by a SANN with an appropriate ISD network $\mathcal{N}$.

An additional numerical example addressing a nonlinear inverse problem for electrical resistor networks is in Appendix \ref{sec-eit}. 
The problem is to find the values of an unknown resistivity function on a graph from the voltage-to-current measurements on a subset of the nodes of the graph. 
This problem is encountered in Electrical Impedance Tomography \citep{BorceaEIT}, a medical imaging modality, and its mathematical treatment is based on a Finite Element approximation of a partial differential equation \citep{LassasSaloTzou}. 

\subsection{Exact inversion using a hand-crafted network}

To demonstrate the new possibilities afforded by the expressive power of SANNs, we manually configured the parameters of a SANN to replicate the classical Jacobi iteration for solving linear systems. With this setup, the SANN is able to solve the system to machine precision—something standard feed-forward networks, which compute only piecewise-linear functions, cannot achieve. Figure \ref{fig:jacobi_sann} shows the error of $y_j = \Pi z_j$ decrease to machine precision for several inputs $(X, g)$

\begin{figure}[h]
    \centering
    \begin{minipage}[c]{0.48\textwidth}
        \includegraphics[width=\linewidth]{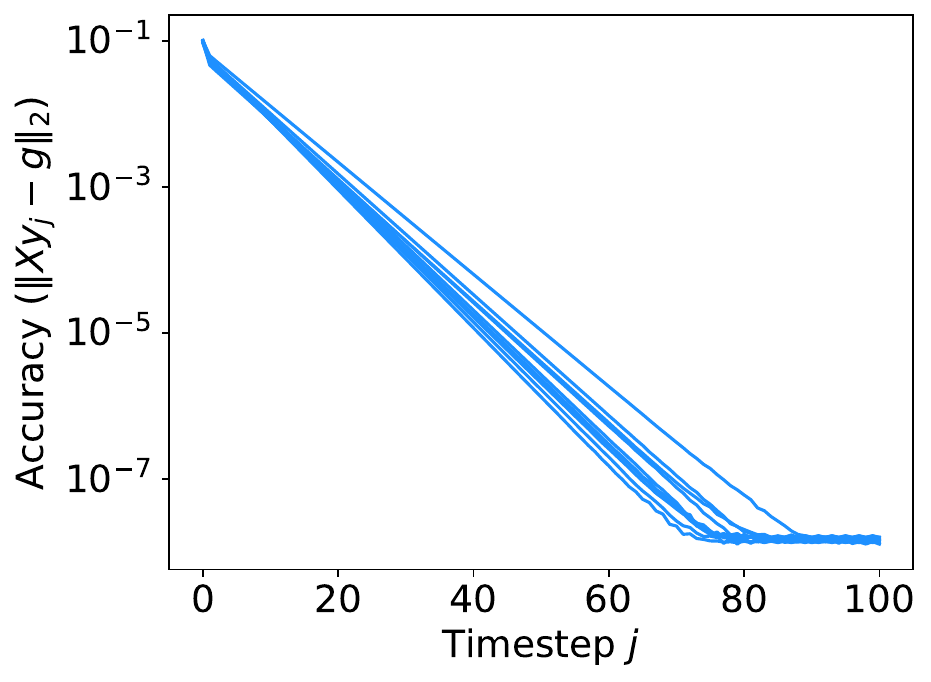}
    \end{minipage} \hfill
    \begin{minipage}[c]{0.48\textwidth}
        \caption{Trajectories of a SANN that solves linear systems to machine precision. Given a matrix $X \in \mathbb{R}^{50 \times 50}$ and vector $b \in \mathbb{R}^{50}$, the network produces a vector $y \in \mathbb{R}^{50}$ such that $Xy = b$. Each line shows the trajectory of the output $y$ for a given input pair $(X, g)$ as the SANN's ODE is iteratively solved. At each timestep, the network performs a single Jacobi iteration update. The output of the network exactly matches the output of $100$ steps of Jacobi iteration.}
        \label{fig:jacobi_sann}
    \end{minipage}
\end{figure}

\subsection{Results of training}
\label{sec-training-linear}

To demonstrate the feasibility of training SANNs, we trained a SANN to solve $10 \times 10$ linear systems; given matrix $X \in \mathbb{R}^{10 \times 10}$ and vector $g \in \mathbb{R}^{10}$, find vector $y \in \mathbb{R}^{10}$ such that $Xy = g$.  

The SANN uses a 2-layer Matrix-Recurrent Neural Network (see appendix \ref{appendix-mrnn}) as the underlying ISD network $\mathcal{N}$. We used Adam optimizer to minimize the loss $\mathcal{L}_\text{total}$ from equation (\ref{eq:direction_loss}).

To benchmark SANNs against standard techniques, we trained a two-layer feed-forward neural network with a comparable number of parameters on the same task. Table \ref{tab:training_matrix_inversion} summarizes the results.
\begin{table}[h]
    \centering
    \begin{tabular}{c|c|c}
         & \# parameters & Validation accuracy $\| Xy - g \|_2$ \\ \hline
         Feed forward network & $306,140$ & $0.166$ \\ \hline
         SANN & $\mathbf{291,720}$ & $\mathbf{0.101}$
    \end{tabular}
    \caption{Results of training a SANN and feed-forward neural network to solve linear systems. Given a $10 \times 10$ matrix $X$ and and vector $g$, the networks output $y$ such that $Xy \approx g$.}
    \label{tab:training_matrix_inversion}
\end{table}
In this experiment, the SANN achieved marginally better accuracy with slightly fewer parameters.
While we refrain from claiming significant numerical advantages for SANNs at this stage, the results underscore the feasibility of training these architectures.

The input matrices $X$ were sampled from a distribution of strictly diagonally dominant matrices, ensuring that a SANN theoretically exists that can solve for $y$ to machine precision (as in the hand-crafted example from the previous subsection). However, further research is needed to determine how to effectively optimize network parameters to achieve such precision.

\section{Discussion and future work}
\label{sec-discussion}

We consider the SANN architecture presented in this paper to be the most general form of a new family of neural networks designed to compute semialgebraic functions. In practice, this architecture can be restricted in various ways to improve performance on specific tasks. This is analogous to convolutional neural networks (CNNs), which are a subset of standard feed-forward neural networks. While CNNs have far fewer parameters and cannot compute arbitrary piecewise-linear functions, they excel at tasks involving translational symmetry (e.g., image classification). Similarly, in future work, we aim to identify and develop specialized variants of the SANN architecture that compute subsets of semialgebraic functions tailored to particular tasks.

One interesting modification is to change line \ref{line-Mb} of Algorithm \ref{alg-sann} to have the ISD network $\mathcal{N}$ output an $LU$ factorization of $M$ rather than $M$ itself. This change would increase the speed of the networks since solving two triangular systems $LUz = b$ is significantly faster than solving a general linear system $Mz = b$. Furthermore, we could create a continuous variant of SANNs by requiring the diagonal elements of $L$ and $U$ to be positive and bounded away from $0$.
Such changes likely have many practical advantages; however, our expressivity proofs would no longer directly apply. Not every ISD matrix $M$ has an $LU$ decomposition in terms of ISD $L$ and $U$, and it is not clear whether this would be sufficient to apply our homotopy continuation arguments. Regardless, these modified SANNs compute a large class of semialgebraic functions not possible for most other architectures.

Efficient training of SANNs remains an open challenge. It is currently feasible to train SANNs using established techniques such as backpropagation when explicit time-stepping is used to evaluate the network's ODE (as we have done here). The adjoint sensitivity method \citep{pontryaginMathematicalTheoryOptimal1962} can be used when other ODE solvers are employed, as in Neural ODEs \citep{chenNeuralOrdinaryDifferential2019}. However, the unique structure of the SANN architecture offers opportunities for novel training strategies. New approaches are needed to take full advantage of the expressive power of SANNs.

\section{Conclusion}

We have presented new representation theorems for semialgebraic functions along with a novel neural network architecture capable of representing all bounded semialgebraic functions. Our methods are inspired by homotopy continuation methods for root finding, and the architecture is simple to implement using existing machine learning tools. We believe our SANNs build a new bridge between semialgebraic geometry (also called ``real algebraic geometry'') and machine learning, opening new avenues for both theoretical exploration and practical applications.

\pagebreak[3]

\subsubsection*{Acknowledgments}

The authors thank the anonymous ICLR reviewers for helpful comments on an ealier version of this manuscript.
M.~V. de Hoop and S.~D. Mis acknowledge the support of the Simons Foundation under the MATH + X Program, the Department of Energy under grant DE-SC0020345, the Occidental Petroleum Corporation and the corporate members of the Geo-Mathematical Imaging Group at Rice University. M.~Lassas was partially supported by a AdG project 101097198 of the European Research Council, Centre of Excellence of Research Council of Finland and the FAME flagship of the Research Council of Finland (grant 359186).
Views and opinions expressed are those of the authors only and do not necessarily reflect those of the funding agencies or the EU. Neither the European Union nor the granting authority can be held responsible for them.

\bibliography{references}
\bibliographystyle{iclr2025_conference}

\appendix

\section{Lattice-ordered rings of piecewise polynomials}
\label{appendix-l-rings}

This appendix introduces the classical theory of ``lattice-ordered'' and ``function'' rings, culminating in a statement of Henricksen's and Isbell's theorem for generating $f$-rings from subrings. 
This is the key lemma in our proof that the space of functions computable by MRNNs is closed under ring and lattice operations---vector addition, Hadamard product, min and max.

We use the following standard conventions:
Group operations are denoted ``$+$'', and ring multiplication operations are denoted ``$*$''.
When there is no ambiguity, the ring multiplication operation $f * g$ may be written $fg$.
If $R$ is a given ring, the polynomial ring in $n$ variables with coefficients in $R$ is denoted $R[X_1, \dots, X_n]$.
Direct products are denoted ``$\times$'', and we reserve $\otimes$ for Kronecker products of matrices.
Lattice operations are denoted $\wedge$ for ``inf'' (``meet'') and $\vee$ for ``sup'' (``join'').  
We use the shorthand $f^+ \defeq f \vee 0$, $f^- \defeq (-f) \vee 0$, and $|f| \defeq f \vee (-f)$. 
In this paper, 
$$
f \vee g=\max(f,g),\quad \hbox{and}\quad f \wedge g=\min(f,g),
$$
although in general there are other choices for the lattice operations.
Observe also that
$$
f \vee g=f+\hbox{ReLU}(g-f),\quad \hbox{and}\quad f \wedge g=f-\hbox{ReLU}(f-g).
$$
Furthermore, $f \le g$ means $f \wedge g = f$ and $f \vee g = g$, while the obvious dual statement holds for $f \ge g$.

In the terminology of neural networks, $f^+$ is precisely $\mathrm{ReLU}(f)$ when the lattice is built over $\mathbb{R}^n$ and the meet/join operations are component-wise min/max.

\begin{definition}[$\ell$-rings]
\thlabel{def-ell-ring}
An \emph{$\ell$-ring} (short for ``lattice-ordered ring'') is a tuple $(R, +, *, \wedge, \vee)$ that is a ring with operations $+$, $*$, a lattice with operations $\wedge$, $\vee$, and the following compatibility conditions are satisfied: 
\begin{equation}
    \label{eq-ell-group-compat}
    f \le g \Rightarrow h + f \le h + g \text{ for all } f, g, h \in R,
\end{equation}
\begin{equation}
    \label{eq-ell-ring-compat}
    f \ge 0 \text{ and } g \ge 0 \quad \Rightarrow \quad fg \ge 0.
\end{equation}
\end{definition}

We will often refer to an $\ell$-ring $(R, +, *, \wedge, \vee)$ simply by its underlying set $R$.

An $\ell$-ring homomorphism preserves both ring an lattice operations.

\begin{remark}
The original masters did not require $\ell$-rings to have commutative addition (i.e. be an Abelian groups) or have multiplicative identities.
These generalizations are not important to us here, and what we call ``$\ell$-ring'' is often called ``$\ell$-ring with unity'' in the literature.
All our $\ell$-rings are abelian groups and have multiplicative identity.
\end{remark}

Every $\ell$-ring is a distributive lattice\footnote{Only compatibility condition (\ref{eq-ell-group-compat}) is required, so in fact every so-called \emph{$\ell$-group} is a distributive lattice.} \citep{birkhoffLatticeTheory1948}; that is,
$$
    f \vee (g \wedge h) = (f \vee g) \wedge (f \vee h) 
    \quad\text{and}\quad 
    f \wedge (g \vee h) = (f \wedge g) \vee (f \wedge h)
$$
hold for all $f, g, h \in R$.
In particular, every lattice polynomial can be written in the form
$$
\bigwedge_{i \in I} \bigvee_{j \in J_i} f_j
$$
with $I, J_i \subset \mathbb{N}$ and $f_j \in R$.

Garrett Birkhoff demonstrated $\ell$-rings can have surprising pathologies; for example, he constructed an $\ell$-ring whose multiplicative identity was both negative and a square \citep{birkhoffLatticeorderedRings1956}.
The following definition introduces an additional compatibility condition that prevents the most troublesome pathologies.

\begin{definition}
\thlabel{def-f-ring}
An \emph{$f$-ring} (short for ``function ring'') is a $\ell$-ring where 
\begin{equation}
    \label{eq-f-ring-compat}
    \text{If } f, g, h \ge 0 \text{ and } g \wedge h = 0 \text{, then } fg \wedge h = 0 \text{ and } gf \wedge h = 0.
\end{equation}
\end{definition}

Condition (\ref{eq-f-ring-compat}) can be equivalently stated \citep{maddenHenriksenIsbellFrings2011}
$$
    f^+ g^+ \wedge f^- = 0 \quad\text{and}\quad g^+ f^+ \wedge f^- = 0 \quad \text{for all } f, g \in R.
$$

\begin{lemma}
    \thlabel{lem-fns-into-f-rings-are-f-rings}
    If $R$ is an $\ell$-ring (resp. $f$-ring) and $D$ is any set, then functions $F = \{f:  D \to R\}$ form an $\ell$-ring (resp. $f$-ring) using the operations defined to turn every evaluation map $\phi_x(f) \defeq f(x)$, $x \in D$, into an $\ell$-ring homomorphism (stated simply, these are the ``component-wise'' operations).
\end{lemma}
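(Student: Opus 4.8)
\textbf{Proof proposal for \thref{lem-fns-into-f-rings-are-f-rings}.}

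The plan is to define the operations on $F = \{f : D \to R\}$ pointwise and then verify the required axioms by pushing everything through the evaluation maps $\phi_x$. Concretely, for $f, g \in F$ set $(f + g)(x) \defeq f(x) + g(x)$, $(f * g)(x) \defeq f(x) * g(x)$, $(f \vee g)(x) \defeq f(x) \vee g(x)$, and $(f \wedge g)(x) \defeq f(x) \wedge g(x)$, with the constant functions $0$ and $1$ serving as the additive and multiplicative identities. By construction each $\phi_x$ is then simultaneously a ring homomorphism and a lattice homomorphism, and this is the only family of operations making all the $\phi_x$ into $\ell$-ring homomorphisms, since the value $(f \star g)(x)$ is forced to equal $\phi_x(f) \star \phi_x(g)$ for each operation $\star$.

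The first step is the purely algebraic part: $F$ is a (commutative, unital) ring and a lattice. This is the standard fact that a product of rings (indexed by $D$, all copies of $R$) is a ring, and likewise a product of lattices is a lattice; associativity, commutativity, distributivity of the lattice, existence of identities, etc., all hold because they hold in $R$ and are checked coordinatewise. The second step is to verify the compatibility conditions. For the order, note $f \le g$ in $F$ means $f \wedge g = f$, i.e. $f(x) \wedge g(x) = f(x)$ for all $x$, i.e. $f(x) \le g(x)$ in $R$ for all $x$ --- the order on $F$ is the pointwise order. Then \eqref{eq-ell-group-compat} follows because $h(x) + f(x) \le h(x) + g(x)$ for every $x$ by \eqref{eq-ell-group-compat} in $R$; \eqref{eq-ell-ring-compat} follows because $f(x), g(x) \ge 0$ for all $x$ implies $f(x) g(x) \ge 0$ for all $x$ by \eqref{eq-ell-ring-compat} in $R$. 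This proves $F$ is an $\ell$-ring whenever $R$ is.

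For the $f$-ring claim, suppose $f, g, h \in F$ with $f, g, h \ge 0$ and $g \wedge h = 0$; the latter means $g(x) \wedge h(x) = 0$ in $R$ for every $x \in D$. Since also $f(x), g(x), h(x) \ge 0$ for every $x$, condition \eqref{eq-f-ring-compat} in $R$ gives $f(x) g(x) \wedge h(x) = 0$ and $g(x) f(x) \wedge h(x) = 0$ for every $x$, which is exactly $fg \wedge h = 0$ and $gf \wedge h = 0$ in $F$. Hence $F$ is an $f$-ring whenever $R$ is. I do not expect any genuine obstacle here: the whole lemma is a coordinatewise transfer argument, and the only thing to be careful about is to articulate clearly that the pointwise operations are \emph{the} operations forced by the requirement that every $\phi_x$ be an $\ell$-ring homomorphism, and that the induced order on $F$ is the pointwise order so that the compatibility conditions really do reduce to the corresponding conditions in $R$ at each point.
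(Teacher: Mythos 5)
Your proposal is correct and follows essentially the same route as the paper: define the operations pointwise (equivalently, so that every evaluation map $\phi_x$ is an $\ell$-ring homomorphism), observe the induced order is the pointwise order, and transfer the compatibility conditions (\ref{eq-ell-group-compat}), (\ref{eq-ell-ring-compat}), (\ref{eq-f-ring-compat}) coordinatewise from $R$. The only difference is cosmetic --- you spell out the ring/lattice axioms and the uniqueness of the pointwise operations, which the paper explicitly skips for brevity.
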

\begin{proof}
For brevity, we forego the verification of the ring and lattice axioms and focus on the compatibility conditions.
The first will be treated in detail for illustration, the other two will be more succinct.
Let $f, g, h \in F$.

\begin{itemize}
\item Condition (\ref{eq-ell-group-compat}): Suppose $f \le g$, which means $f(x) \le g(x)$ for all $x \in D$:
$$
f(x) = (f \wedge g)(x) = \phi_x(f \wedge g) = \phi_x(f) \wedge \phi_x(g) = f(x) \wedge g(x).
$$
Since $R$ is $\ell$-ring, $a + f(x) \le a + g(x)$ for all $a \in R$ and $x \in D$.
In particular, $h(x) + f(x) \le h(x) + g(x)$, and thus
$$
\begin{gathered}
\phi_x(h + f) =
h(x) + f(x) = 
\big( h(x) + f(x) \big) \wedge \big( h(x) + g(x) \big) = \\ 
\phi_x(h + f) \wedge \phi_x(h + g) = 
\phi_x \big( (h + f) \wedge (h + g) \big)
\end{gathered}
$$
Since $\wedge$ in $F$ is defined precisely to turn every $\phi_x$ into a homomorphism, we conclude $h + f = (h + f) \wedge (h + g)$.

\item Condition (\ref{eq-ell-ring-compat}): Suppose $f, g \ge 0$, meaning $f(x), g(x) \ge 0$ for all $x \in D$.
Then $f(x) g(x) \ge 0$ since $R$ is an $\ell$-ring.
\end{itemize}

For the last condition, we now suppose $R$ is an $f$-ring.

\begin{itemize}
\item Condition (\ref{eq-f-ring-compat}): Suppose $f, g, h \ge 0$ and $f \wedge h = 0$, meaning $f(x), g(x), h(x) \ge 0$ and $f(x) \wedge h(x) = 0$ for all $x \in D$.
Then 
$$
(fg \wedge h)(x) = f(x) g(x) \wedge h(x) = 0
$$
since $R$ is an $f$-ring, and we conclude $fg \wedge h = 0$.
Likewise $gf \wedge h = 0$.
\end{itemize}
\end{proof}

\begin{example}
\thlabel{exm-continuous-fn-are-f-ring}
Continuous real-valued functions with component-wise $+$/$*$/min/max operations form an $f$-ring.
\end{example}

\begin{lemma}
\thlabel{lem-dir-prod-of-f-rings-are-f-rings}
The direct product of $\ell$-rings (resp. $f$-rings) is an $\ell$-ring (resp. $f$-ring).
\end{lemma}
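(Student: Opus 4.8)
The plan is to mimic the proof of \thref{lem-fns-into-f-rings-are-f-rings} almost verbatim, with the evaluation maps $\phi_x$ replaced by the coordinate projections. Let $\{R_i\}_{i \in I}$ be a family of $\ell$-rings and let $R \defeq \prod_{i \in I} R_i$ carry the componentwise operations, i.e. the unique ring-and-lattice operations making every projection $\pi_i : R \to R_i$ simultaneously a ring homomorphism and a lattice homomorphism. That $R$ is a ring and a lattice under these operations is standard, so I would state this and proceed directly to the three compatibility conditions. (Note that if all $R_i$ equal a single fixed $\ell$-ring $R_0$, then $\prod_i R_i$ is literally the function $\ell$-ring $\{f : I \to R_0\}$ already treated in \thref{lem-fns-into-f-rings-are-f-rings}; the general case is the same argument.)

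First I would record the order characterization that makes everything go through: for $f, g \in R$, we have $f \le g$ (meaning $f \wedge g = f$ and $f \vee g = g$) if and only if $\pi_i(f) \le \pi_i(g)$ in $R_i$ for every $i \in I$. This is immediate since $\wedge$ and $\vee$ are computed \emph{coordinatewise}: $f \wedge g = f$ holds in $R$ exactly when $\pi_i(f) \wedge \pi_i(g) = \pi_i(f)$ holds in every $R_i$. With this in hand, each compatibility axiom reduces to its coordinatewise instance in the factors.

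Then I would verify the conditions. For (\ref{eq-ell-group-compat}): if $f \le g$ then $\pi_i(f) \le \pi_i(g)$ for all $i$; since each $R_i$ is an $\ell$-ring, $\pi_i(h) + \pi_i(f) \le \pi_i(h) + \pi_i(g)$, i.e. $\pi_i(h+f) \le \pi_i(h+g)$ for all $i$; by the characterization above, $h + f \le h + g$. For (\ref{eq-ell-ring-compat}): if $f, g \ge 0$ then $\pi_i(f), \pi_i(g) \ge 0$, so $\pi_i(fg) = \pi_i(f)\pi_i(g) \ge 0$ in each $R_i$, hence $fg \ge 0$. If moreover every $R_i$ is an $f$-ring, then for (\ref{eq-f-ring-compat}): from $f, g, h \ge 0$ and $g \wedge h = 0$ we get, coordinatewise, $\pi_i(f), \pi_i(g), \pi_i(h) \ge 0$ and $\pi_i(g) \wedge \pi_i(h) = 0$, so $\pi_i(fg) \wedge \pi_i(h) = 0$ and $\pi_i(gf) \wedge \pi_i(h) = 0$ in each $R_i$; since these hold in every coordinate, $fg \wedge h = 0$ and $gf \wedge h = 0$ in $R$.

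I do not expect any genuine obstacle: the lemma is pure bookkeeping. The only point deserving a moment's care is the order characterization in the second paragraph, since the compatibility axioms are phrased in terms of $\le$ rather than directly in terms of $\wedge$ and $\vee$; once that characterization is in place, each axiom transfers from the factors to the product one coordinate at a time, exactly as in the function-ring case.
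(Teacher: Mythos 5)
Your proof is correct and follows the same route as the paper: the paper's proof simply asserts that the product is a ring and a lattice and that the compatibility conditions hold componentwise, which is exactly the argument you spell out in detail (with the coordinatewise order characterization made explicit). No issues.
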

\begin{proof}
The direct product of two $\ell$-rings or $f$-rings is clearly both a ring and a lattice, and the relevant compatibility conditions hold in the product since they hold in each component.
\end{proof}

In particular, vector-valued functions $D \to \mathbb{R}^n$ form an $f$-ring when each component forms an $f$-ring. 

\begin{lemma}
    \thlabel{lem-sub-ring-lattices-of-f-rings-are-f-rings}
If $R$ is an $\ell$-ring (resp. $f$-ring) and $R_1 \subseteq R$ is both a subring and sublattice, then $R_1$ is also an $\ell$-ring (resp. $f$-ring).
\end{lemma}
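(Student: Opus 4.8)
The plan is to observe that $R_1$ already satisfies every structural axiom for free: it is a ring because it is a subring of $R$, and it is a lattice because it is a sublattice of $R$. Hence the only thing that requires checking is that the compatibility conditions of \thref{def-ell-ring} (and, in the $f$-ring case, of \thref{def-f-ring}) descend from $R$ to $R_1$.

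First I would record the one genuinely load-bearing remark. Because $R_1$ is a sublattice, the operations $\wedge$ and $\vee$ computed inside $R_1$ agree with those computed inside $R$; consequently the partial order on $R_1$---defined by $f \le g$ iff $f \wedge g = f$ and $f \vee g = g$---is exactly the restriction to $R_1$ of the partial order on $R$. Likewise, because $R_1$ is a subring, the operations $+$ and $*$ and the distinguished elements $0$ and $1$ of $R_1$ are inherited from $R$. This is immediate from the definitions of ``subring'' and ``sublattice,'' but it is what makes the rest of the argument go through.

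Then the verification is routine. Each of conditions (\ref{eq-ell-group-compat}), (\ref{eq-ell-ring-compat}), and (\ref{eq-f-ring-compat}) has the form ``for all elements satisfying a hypothesis phrased purely in terms of $+,*,\wedge,\vee,\le,0$, a conclusion phrased in those same primitives holds.'' Given $f,g,h \in R_1$ satisfying the hypothesis of one such condition, they in particular lie in $R$ and satisfy the same hypothesis there, since by the previous paragraph the primitives agree; hence the conclusion holds in $R$; and since the conclusion is again phrased solely in primitives that agree between $R$ and $R_1$, it holds in $R_1$. Applying this to (\ref{eq-ell-group-compat}) and (\ref{eq-ell-ring-compat}) shows $R_1$ is an $\ell$-ring; additionally applying it to (\ref{eq-f-ring-compat}) shows $R_1$ is an $f$-ring whenever $R$ is.

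I do not anticipate any real obstacle here---restricting a universally quantified statement to a subset can never falsify it, which is essentially all that is happening. The only point that deserves a moment's care, and which I have isolated above, is confirming that the order relation induced on $R_1$ by its own lattice operations coincides with the restriction of the order on $R$; once that is in hand the proof is a one-line appeal to inheritance of the compatibility conditions.
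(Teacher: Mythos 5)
Your proposal is correct and follows the same route as the paper's proof: since $R_1$ is already a ring and a lattice, only the compatibility conditions (\ref{eq-ell-group-compat}), (\ref{eq-ell-ring-compat}), (\ref{eq-f-ring-compat}) need checking, and they descend from $R$ because they are universally quantified statements in operations that agree on $R_1$. Your explicit remark that the sublattice hypothesis is what makes the induced order coincide is exactly the point the paper highlights right after its (terser) proof, via the polynomials-in-continuous-functions example.
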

\begin{proof}
We need only verify the compatibility conditions (\ref{eq-ell-group-compat}), (\ref{eq-ell-ring-compat}) and (\ref{eq-f-ring-compat}).
But since these conditions hold in $R$, they clearly hold in $R_1$ as well.
\end{proof}

The requirement that $R_1 \subseteq R$ form a sublattice in addition to a subring (rather than just a subring) is a small oversight in \citep{birkhoffLatticeorderedRings1956}.
For example, polynomials form a subring of the $f$-ring of continuous real functions, but they are not a lattice ($x$ and $-x$ are both polynomials, but $x \vee -x = |x|$ is not), so polynomials do not themselves form an $f$-ring.
Indeed, this fact will be relevant below to us below.

\begin{definition}[Totally ordered $\ell$ ring]
\thlabel{def-totally-ordered-ell-ring}
An $\ell$-ring $R$ is \emph{totally ordered} if both $f \wedge g$ and $f \vee g$ are in $\{ f, g \}$ for all $f, g \in R$.
\end{definition}
In other words, the lattice of a totally ordered ring induces a total order relation.

\begin{lemma}
\thlabel{lem-totally-orderd-rings-are-f-rings}
Every totally ordered ring $R$ is an $f$-ring.
\end{lemma}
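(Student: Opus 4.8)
The plan is to leverage the fact that $R$ is already assumed to be an $\ell$-ring, so the only thing left to establish is the extra $f$-ring compatibility condition (\ref{eq-f-ring-compat}) from \thref{def-f-ring}. Concretely, I would fix $f, g, h \in R$ with $f, g, h \ge 0$ and $g \wedge h = 0$, and aim to show $fg \wedge h = 0$ and $gf \wedge h = 0$.

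The crucial observation is that total order trivializes the hypothesis $g \wedge h = 0$: by \thref{def-totally-ordered-ell-ring}, $g \wedge h \in \{g, h\}$, so $g \wedge h = 0$ forces $g = 0$ or $h = 0$. This reduces the verification to a short case split.

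In the first case, $g = 0$, so $fg = gf = 0$; since $h \ge 0$ (that is, $h \wedge 0 = 0$), we get $fg \wedge h = 0 \wedge h = 0$, and likewise $gf \wedge h = 0$. In the second case, $h = 0$, and compatibility condition (\ref{eq-ell-ring-compat}) applied to the nonnegative elements $f$ and $g$ yields $fg \ge 0$ and $gf \ge 0$, whence $fg \wedge h = fg \wedge 0 = 0$ and $gf \wedge h = gf \wedge 0 = 0$. In both cases (\ref{eq-f-ring-compat}) holds, so $R$ is an $f$-ring.

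There is essentially no obstacle here: the argument is a one-line case split once one notices that a vanishing meet in a totally ordered ring means one of the two elements already vanishes. The only point requiring a little care is using the correct reading of ``$a \ge 0$'', namely $a \wedge 0 = 0$, so that $0 \wedge h = 0$ genuinely follows, together with the already-proved fact that $\ell$-rings are distributive lattices if one prefers to argue via distributivity instead of the total-order dichotomy.
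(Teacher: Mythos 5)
Your proposal is correct and mirrors the paper's proof exactly: the same dichotomy (total order forces $g=0$ or $h=0$ from $g \wedge h = 0$), followed by the same two cases using $fg = gf = 0$ in the first and compatibility condition (\ref{eq-ell-ring-compat}) in the second. No differences worth noting.
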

\begin{proof}
Given $f, g, h \in R$ such that $f, g, h \ge 0$ and $g \wedge h = 0$, we need to show $fg \wedge h = 0$ and $gf \wedge h = 0$.
Since $R$ is totally ordered, $g \wedge h = 0$ implies either $g = 0$ or $h = 0$.

\begin{itemize}
\item Case $g = 0$:
We have $fg = gf = 0$ and $fg \wedge h = 0 \wedge h = 0$ since $h \ge 0$.
Identical logic shows $gf \wedge h = 0$.

\item Case $h = 0$:
From the $\ell$-ring compatibility condition (\ref{eq-ell-ring-compat}), $fg \ge 0$ and $gf \ge 0$.
Thus $fg \wedge h = fg \wedge 0 = 0$, likewise $gf \wedge h = gf \wedge 0 = 0.$ 
\end{itemize}
\end{proof}

We can now state the key theorem of this section, which will allow us to generate $f$-rings of neural networks by bootstrapping from a subring.

\begin{theorem}[Henricksen and Isbell]
\thlabel{thm-lattice-over-subring-of-f-ring-is-ring}
If $R$ is an $f$-ring and $R_1 \subset R$ is a subring (not necessarily a sublattice), then the lattice generated by $R_1$ is a subring of $R$.
\end{theorem}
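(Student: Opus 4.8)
The statement is the classical lemma of Henriksen and Isbell; here is how I would prove it. Write $L$ for the lattice generated by $R_1$. Since $R$ is an $\ell$-ring its underlying lattice is distributive, so every element of $L$ has the normal form $\bigwedge_i\bigvee_j a_{ij}$ with $a_{ij}\in R_1$. Closure of $L$ under $+$ and negation is immediate: translation invariance (\ref{eq-ell-group-compat}) makes $+$ distribute over $\vee$ and $\wedge$, so the sum of two normal forms is again one with entries $a_{ij}+b_{kl}\in R_1$, and the $\ell$-group De Morgan laws give $-\bigwedge_i\bigvee_j a_{ij}=\bigvee_i\bigwedge_j(-a_{ij})$. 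As $0\in R_1\subseteq L$, this also shows $L$ is closed under $x\mapsto x^+\defeq x\vee 0$ and $x\mapsto x^-\defeq(-x)\vee 0$, so $L$ is an $\ell$-subgroup of $R$. It remains to prove $L\cdot L\subseteq L$.

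I would reduce this to a single special case. For $p,q\in L$, $pq=p^+q^+-p^+q^--p^-q^++p^-q^-$, and since $L$ is closed under $+$ and $-$ it suffices to prove $xy\in L$ for nonnegative $x,y\in L$. Using distributivity and $x=x\vee 0$, any nonnegative $x\in L$ can be rewritten as $\bigvee_\alpha\bigwedge_i a_{\alpha i}^+$ with $a_{\alpha i}\in R_1$ (push the $\vee 0$ inside the normal form, then swap meets and joins), and likewise for $y$. The standard $f$-ring fact that multiplication by a nonnegative element distributes over $\vee$ and $\wedge$ — a consequence of (\ref{eq-f-ring-compat}) — then lets me expand $xy$ as a finite combination, under $\vee$ and $\wedge$, of products $a^+b^+$ with $a,b\in R_1$. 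So the theorem reduces to the claim: $a^+b^+\in L$ for all $a,b\in R_1$.

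This claim is the crux, and I expect it to be the main obstacle: the obvious identities (e.g. $a^+b^+=(a^+b)^+$) merely reproduce products of the same shape. The way around it I would use is the identity
\[
a^+b^+ \;=\; 0\,\vee\,\big((ab)\wedge(aM)\wedge(bM)\big),\qquad M\defeq a^2+b^2+1\in R_1,
\]
which holds in every $f$-ring. Granting it, the right side is built from $0,ab,aM,bM\in R_1$ using $\vee$ and $\wedge$, hence lies in $L$, and the proof is complete. To verify the identity I would appeal to the classical subdirect representation theorem for $f$-rings (every $f$-ring is a subdirect product of totally ordered rings), so that it is enough to check it in a totally ordered ring $T$; there $c^+$ equals $c$ or $0$ according to the sign of $c$, and one does a short case analysis on the signs of $a$ and $b$, using that $M\geq 1>0$ always and that $M\geq a$ and $M\geq b$ whenever $a,b\geq 0$ — the latter because $4(a^2-a+1)=(2a-1)^2+3\geq 0$ forces $a^2+1\geq a$ in any totally ordered ring.

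Two remarks. One can bypass the representation theorem: the finitely many $f$-ring identities actually used (the displayed identity and the distributivity facts) can be derived directly from the compatibility conditions (\ref{eq-ell-ring-compat}) and (\ref{eq-f-ring-compat}), at the cost of a more tedious argument. Also, the reduction and the displayed identity are written for commutative multiplication; for the general (possibly non-commutative) $f$-rings of \thref{def-f-ring} one keeps track of left versus right multiplication and uses both halves of (\ref{eq-f-ring-compat}), but the plan is otherwise unchanged — and in every application in this paper $R$ is a ring of functions with pointwise product, so commutativity is automatic.
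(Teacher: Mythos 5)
Your proposal is correct, but there is nothing in the paper to compare it against: the paper states this theorem and explicitly defers the proof to the literature, remarking that Henriksen and Isbell left it as an exercise and that the first recorded complete proof is Hager's (the works cited as \citet{henriksenLatticeorderedRingsFunction1962} and \citet{hagerCommentsExamplesGeneration2010}). So your write-up is a self-contained substitute rather than an alternative to an in-paper argument. The skeleton is sound: normal forms in the distributive lattice give closure of $L$ under $+$ and $-$; the decomposition $pq=p^+q^+-p^+q^--p^-q^++p^-q^-$ together with the $f$-ring fact that multiplication by a nonnegative element distributes over $\vee$ and $\wedge$ (which does follow from (\ref{eq-f-ring-compat}), e.g.\ from $(d-b)\wedge(d-c)=0$ for $d=b\vee c$ applied twice) reduces everything to $a^+b^+\in L$ for $a,b\in R_1$. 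Your crux identity checks out: in a totally ordered ring $M=a^2+b^2+1$ satisfies $M\ge 0$, $M-a\ge 0$, $M-b\ge 0$ (via $4(a^2-a+1)=(2a-1)^2+3\ge 0$ and the fact that $4x\ge 0$ forces $x\ge 0$ under a total order), and the four sign cases give exactly $a^+b^+=0\vee\bigl(ab\wedge aM\wedge bM\bigr)$; moreover $ab$, $aM=a^3+ab^2+a$ and $bM$ lie in $R_1$ even when $1\notin R_1$, so the right-hand side lies in $L$. The appeal to the Birkhoff--Pierce subdirect representation theorem is legitimate, since $f$-rings form an equational class whose subdirectly irreducible members are totally ordered, so an identity verified in all totally ordered rings holds in all $f$-rings; this is the only non-elementary ingredient, and, as you note, it could be traded for a direct derivation from (\ref{eq-ell-ring-compat}) and (\ref{eq-f-ring-compat}).

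One caveat to make explicit: as displayed, the identity uses commutativity in the case $a,b\ge 0$ (right-multiplying $a\le M$ by $b\ge 0$ gives $ab\le Mb$, not $ab\le bM$). Writing it as $a^+b^+=0\vee\bigl(ab\wedge aM\wedge Mb\bigr)$ makes the same case analysis go through verbatim in noncommutative $f$-rings, so your parenthetical remark is indeed all that is needed; and in every application in this paper ($ISD$, $PWP$, MRNN rings with pointwise products) commutativity holds anyway.
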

This theorem first appeared in \citet{henriksenLatticeorderedRingsFunction1962}.
The proof is surprisingly non-trivial, and while the key insights were provided by Henricksen and Isbell, they left the proof as an exercise to the reader.
The first full proof appears to have been recorded in \citet{hagerCommentsExamplesGeneration2010}.

\subsection{Inf-sup definable piecewise polynomials and the Pierce--Birkhoff conjecture}

\begin{definition}[Piecewise polynomial]
\thlabel{def-piecewise-polynomial}
Let $k, n, J \in \mathbb{N}$, and $D \subset \mathbb{R}^n$.  
Then $f: D \to \mathbb{R}$ is \emph{piecewise polynomial}, or $f \in PWP(D)$, if there exists a collection of semialgebraic sets $\{ S_i \}_{i=1}^J$ such that $D \subset \bigcup_{i=1}^J S_i$, and a collection of polynomials $\{ p_i \}_{i=1}^J \subset \mathbb{R}[X_1, \dots, X_n]$ such that $f|_{S_i} = p_i$ for all $i = 1, \dots, J$. 

A vector-valued function $f: D \to \mathbb{R}^k$ is piecewise polynomial, or $f \in PWP(D, \mathbb{R}^k)$, if every component function is piecewise polynomial.
\end{definition}
    
A piecewise polynomial is continuous if and only if there exists a collection of \emph{closed} semialgebraic sets $\{ S_i \}_{i=1}^J$ satisfying \thref{def-piecewise-polynomial}.

\begin{lemma}
\thlabel{lem-PWP-is-an-f-ring}
Let $n \in \mathbb{N}$, and $D \subset \mathbb{R}^n$. 
Then $PWP(D)$ equipped with component-wise operations is an $f$-ring.
\end{lemma}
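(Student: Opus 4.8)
The plan is to exhibit $PWP(D)$ as a sub-$\ell$-ring of the $f$-ring of \emph{all} functions $D\to\mathbb R$ and then invoke \thref{lem-sub-ring-lattices-of-f-rings-are-f-rings}. First I would record that the target ambient object is indeed an $f$-ring: $\mathbb R$ with its usual order is a totally ordered ring, hence an $f$-ring by \thref{lem-totally-orderd-rings-are-f-rings}, and therefore the set $\mathrm{Map}(D,\mathbb R)$ of all functions $D\to\mathbb R$, with component-wise (i.e. pointwise) $+$, $*$, $\wedge$, $\vee$, is an $f$-ring by \thref{lem-fns-into-f-rings-are-f-rings}. Note that one cannot shortcut through \thref{exm-continuous-fn-are-f-ring}, since piecewise polynomials need not be continuous.

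The substance of the argument is the \emph{common refinement} bookkeeping. Given $f\in PWP(D)$ with witnessing data $\{S_i\}_{i=1}^{J}$, $\{p_i\}$ and $g\in PWP(D)$ with witnessing data $\{T_j\}_{j=1}^{K}$, $\{q_j\}$, I would pass to the collection $\{S_i\cap T_j\}_{i,j}$, which still covers $D$ and consists of semialgebraic sets, since finite intersections of semialgebraic sets are semialgebraic (Tarski--Seidenberg). On each piece $S_i\cap T_j$ one has $f=p_i$ and $g=q_j$, so $f+g=p_i+q_j$, $-f=-p_i$, and $fg=p_iq_j$ are polynomials there; the constants $0$ and $1$ are polynomials on the single piece $D$. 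Hence $PWP(D)$ is a subring of $\mathrm{Map}(D,\mathbb R)$.

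For the sublattice property I would refine once more: inside $S_i\cap T_j$ split into the two semialgebraic pieces $\{x: p_i(x)-q_j(x)\ge 0\}\cap S_i\cap T_j$ and $\{x: p_i(x)-q_j(x)<0\}\cap S_i\cap T_j$ (each a polynomial inequality intersected with a semialgebraic set, hence semialgebraic). On the first, $f\vee g=p_i$ and $f\wedge g=q_j$; on the second, $f\vee g=q_j$ and $f\wedge g=p_i$. So $f\vee g, f\wedge g\in PWP(D)$, and $PWP(D)$ is a sublattice of $\mathrm{Map}(D,\mathbb R)$. Applying \thref{lem-sub-ring-lattices-of-f-rings-are-f-rings} then yields that $PWP(D)$ is an $f$-ring, completing the proof.

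The only thing requiring care—the ``main obstacle,'' such as it is—is purely combinatorial: making sure every refinement step produces finitely many pieces that are each semialgebraic and still collectively cover $D$ (the definition of $PWP$ only demands $D\subseteq\bigcup S_i$, not equality, which is harmless under intersection). All of this rests on the standard closure of semialgebraic sets under finite unions, intersections, and slicing by polynomial sign conditions; there is no deeper difficulty.
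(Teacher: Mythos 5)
Your proposal is correct and follows essentially the same route as the paper: exhibit $PWP(D)$ as a subring and sublattice of the $f$-ring of all functions $D\to\mathbb{R}$ (itself an $f$-ring via \thref{lem-fns-into-f-rings-are-f-rings}) and conclude via \thref{lem-sub-ring-lattices-of-f-rings-are-f-rings}. The only difference is that the paper dismisses the closure of $PWP(D)$ under sum, product, min and max as ``elementary to verify,'' whereas you spell out the common-refinement and sign-splitting bookkeeping explicitly, which is a faithful expansion of that step rather than a different argument.
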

\begin{proof}
We need only show $PWP(D)$ is both a subring and sublattice of the $f$-ring of functions $D \to \mathbb{R}$, then the conclusion follows from \thref{lem-fns-into-f-rings-are-f-rings}.
But it is elementary to verify the pointwise sum, product, min and max of two piecewise polynomials is also a pointwise polynomial.
\end{proof}

\begin{corollary}
\thlabel{cor-vector-valued-PWP-is-an-f-ring}
Let $k, n \in \mathbb{N}$, and $D \subset \mathbb{R}^n$. 
Then $PWP(D, \mathbb{R}^k)$ is an $f$-ring.
\end{corollary}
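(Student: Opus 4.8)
The plan is to deduce this immediately from the two facts already established: that $PWP(D)$ is an $f$-ring (\thref{lem-PWP-is-an-f-ring}) and that a direct product of $f$-rings is again an $f$-ring (\thref{lem-dir-prod-of-f-rings-are-f-rings}).

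First I would unwind \thref{def-piecewise-polynomial}: a function $f: D \to \mathbb{R}^k$ lies in $PWP(D, \mathbb{R}^k)$ exactly when each of its $k$ component functions lies in $PWP(D)$. Since the corollary equips $PWP(D, \mathbb{R}^k)$ with component-wise operations, this identifies $PWP(D, \mathbb{R}^k)$---as a set together with $+$, $*$, $\wedge$, $\vee$---with the direct product $\prod_{i=1}^{k} PWP(D)$ of $k$ copies of the $\ell$-ring $PWP(D)$. Then I would invoke \thref{lem-PWP-is-an-f-ring} to get that each factor $PWP(D)$ is an $f$-ring, and \thref{lem-dir-prod-of-f-rings-are-f-rings} (applied finitely many times) to conclude that the product, hence $PWP(D, \mathbb{R}^k)$, is an $f$-ring.

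There is essentially no obstacle here; the real content lives in \thref{lem-PWP-is-an-f-ring}. The only point requiring a moment's care is checking that the ``component-wise'' operations on vector-valued functions genuinely coincide with the direct-product operations on the tuple of component functions, which is a routine matter of definitions. Alternatively, one could bypass the direct-product step entirely: vector-valued functions $D \to \mathbb{R}^k$ form an $f$-ring by \thref{lem-fns-into-f-rings-are-f-rings} (taking $R = \mathbb{R}^k$ with its component-wise structure), and $PWP(D, \mathbb{R}^k)$ is both a subring and a sublattice of this $f$-ring because sums, products, mins, and maxes of piecewise polynomials are piecewise polynomial; \thref{lem-sub-ring-lattices-of-f-rings-are-f-rings} then finishes the argument. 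I would present whichever of these two routes is shortest given what the authors want to emphasize.
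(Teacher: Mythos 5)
Your primary route is exactly the paper's proof: identify $PWP(D, \mathbb{R}^k)$ with the $k$-fold direct product of $PWP(D)$, then apply \thref{lem-PWP-is-an-f-ring} together with \thref{lem-dir-prod-of-f-rings-are-f-rings}. The proposal is correct (the alternative subring-sublattice route you sketch would also work, but the paper uses the direct-product argument).
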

\begin{proof}
$PWP(D, \mathbb{R}^k)$ is a direct product of $PWP(D, \mathbb{R})$, so the conclusion follows from \thref{lem-dir-prod-of-f-rings-are-f-rings}.
\end{proof}

We apply \thref{thm-lattice-over-subring-of-f-ring-is-ring} to the subring of $PWP(D, \mathbb{R}^k)$ consisting of the polynomials $D \to \mathbb{R}^k$.

We obtain the following important theorem:

\begin{theorem}
\thlabel{thm-ISD-f-ring}
$ISD(\mathbb{R}^n)$ is an $f$-ring for any $n \in \mathbb{N}$.
\end{theorem}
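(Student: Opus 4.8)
The plan is to realize $ISD(\mathbb{R}^n)$ as a subset of the $f$-ring of all piecewise polynomials on $\mathbb{R}^n$ that is simultaneously a subring and a sublattice, and then invoke the general inheritance principle for $f$-rings. The only genuinely nontrivial input is the Henricksen--Isbell theorem (\thref{thm-lattice-over-subring-of-f-ring-is-ring}); everything else is bookkeeping.

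First I would fix the ambient object: by \thref{lem-PWP-is-an-f-ring}, $PWP(\mathbb{R}^n)$ with the pointwise $+$, $*$, $\min$, $\max$ operations is an $f$-ring. Next, observe that $\mathbb{R}[x_1,\dots,x_n]$, viewed as functions $\mathbb{R}^n \to \mathbb{R}$, is a subring of $PWP(\mathbb{R}^n)$: polynomials are closed under pointwise sum and product and contain the constants, and every polynomial is trivially piecewise polynomial (take the single region $S_1 = \mathbb{R}^n$ in \thref{def-piecewise-polynomial}). It is worth flagging that $\mathbb{R}[x_1,\dots,x_n]$ is \emph{not} a sublattice of $PWP(\mathbb{R}^n)$ — e.g. $|x_1| = x_1 \vee (-x_1)$ is not a polynomial — so \thref{lem-sub-ring-lattices-of-f-rings-are-f-rings} cannot be applied to it directly; this is precisely why we must first enlarge it.

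Now apply \thref{thm-lattice-over-subring-of-f-ring-is-ring} with $R = PWP(\mathbb{R}^n)$ and $R_1 = \mathbb{R}[x_1,\dots,x_n]$: the lattice generated by $R_1$ inside $R$ is again a subring of $R$. By \thref{def-isd}, that generated lattice is exactly $ISD(\mathbb{R}^n)$. Hence $ISD(\mathbb{R}^n)$ is a subring of $PWP(\mathbb{R}^n)$ (by the cited theorem) and a sublattice of $PWP(\mathbb{R}^n)$ (by construction, being the generated lattice). Finally, \thref{lem-sub-ring-lattices-of-f-rings-are-f-rings} states that a subset of an $f$-ring that is both a subring and a sublattice is itself an $f$-ring; applying it to $ISD(\mathbb{R}^n) \subseteq PWP(\mathbb{R}^n)$ yields the claim.

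The one conceptual subtlety — and the reason the statement is not immediate from the definition — is closure of $ISD(\mathbb{R}^n)$ under multiplication: the definition only builds in closure under $\min$, $\max$, and the ring operations among \emph{polynomials}, and a product of two genuine $\min$/$\max$ expressions need not obviously simplify back into the lattice-of-polynomials form. That closure is exactly what Henricksen--Isbell supplies, so the whole weight of the proof rests on \thref{thm-lattice-over-subring-of-f-ring-is-ring}; the remaining steps (polynomials form a subring of $PWP(\mathbb{R}^n)$, and $ISD(\mathbb{R}^n)$ is by definition the lattice they generate) are routine.
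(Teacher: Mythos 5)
Your proposal is correct and follows essentially the same route as the paper: take $PWP(\mathbb{R}^n)$ as the ambient $f$-ring (\thref{lem-PWP-is-an-f-ring}), apply the Henricksen--Isbell theorem (\thref{thm-lattice-over-subring-of-f-ring-is-ring}) to the subring of polynomials so that the generated lattice, which is $ISD(\mathbb{R}^n)$ by \thref{def-isd}, is a subring, and then conclude via \thref{lem-sub-ring-lattices-of-f-rings-are-f-rings}. You in fact spell out the sublattice-plus-subring step and the failure of polynomials to be a sublattice more explicitly than the paper does, but the argument is the same.
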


Both $PWP(\mathbb{R}^n)$ and $ISD(\mathbb{R}^n)$ are $f$-rings, and clearly $ISD(\mathbb{R}^n) \subseteq PWP(\mathbb{R}^n)$.
Whether the opposite inclusion holds is the famous Pierce--Birkhoff conjecture.

\begin{conjecture}[Pierce--Birkhoff]
\thlabel{cnj-Pierce--Birkhoff-Conjecture}
$ISD(\mathbb{R}^n) = PWP(\mathbb{R}^n).$
\end{conjecture}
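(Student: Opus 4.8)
The statement to be established is the Pierce--Birkhoff conjecture, which is in fact open for $n \ge 3$; accordingly, what follows is a program rather than a proof, with an explicit indication of where the real difficulty sits. One direction is free: every polynomial $\mathbb{R}^n \to \mathbb{R}$ is a continuous piecewise polynomial, and $PWP(\mathbb{R}^n)$ is closed under $\min$ and $\max$ (\thref{lem-PWP-is-an-f-ring}), so the lattice $ISD(\mathbb{R}^n)$ generated by the polynomials already lies inside $PWP(\mathbb{R}^n)$. All the substance is in the converse: given a continuous piecewise polynomial $f$ with polynomial pieces $p_1,\dots,p_N$, one must produce a finite expression $\bigvee_i \bigwedge_j p_{ij}$, with each $p_{ij}$ among the $p_k$, that equals $f$ identically. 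So the task is to choose the combinatorics of the two index families correctly.

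The standard plan is to convert this global combinatorial-geometric problem into a local one. Using a Kakutani--Stone-type lattice criterion, $f$ belongs to the lattice generated by its pieces precisely when, for every pair of points $x, y \in \mathbb{R}^n$, there is a polynomial $g_{xy}$ that dominates $f$ near $x$ and is dominated by $f$ near $y$; the hypothesis that $f$ has only finitely many pieces then upgrades this two-point data to a genuine finite $\bigvee_i \bigwedge_j$ expression by a compactness/patching argument carried out over the real spectrum $\mathrm{Sper}(\mathbb{R}[x_1,\dots,x_n])$. This is Madden's reformulation: Pierce--Birkhoff for $\mathbb{R}[x_1,\dots,x_n]$ is equivalent to the \emph{local} Pierce--Birkhoff conjecture, a separation statement about a pair of points of the real spectrum and the ``separating ideal'' attached to that pair. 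First I would assemble this machinery --- the real spectrum, the sheaf of local polynomial representatives of $f$, and the separating ideal --- and verify the reduction, which is essentially formal once the definitions are in place.

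Granting the reduction, everything comes down to the local statement: given two prime cones $\alpha$ and $\beta$ joined by a chain of specializations and valuations, interpolate the local representatives of $f$ by a single polynomial carrying the correct inequalities all along the chain. For $n=1$ this is elementary; for $n=2$ it is Mah\'e's theorem, where the decisive point is that the valuation-theoretic data in dimension two can be monomialized because plane-curve singularities resolve by blow-ups in a fully controlled way. The hard part --- and precisely the reason the conjecture is still open for $n \ge 3$ --- is carrying out this last analysis in higher dimension: it requires a sufficiently strong, valuation-theoretic form of resolution of singularities, i.e. monomialization of the separating ideals, which is exactly what is not available. My honest assessment, then, is that the reduction to the local conjecture and the cases $n \le 2$ follow known arguments, but the higher-dimensional local separation problem is the genuine obstacle and I would not expect to clear it here; fortunately, as the paper notes, $ISD$ functions suffice for the SANN construction whether or not Pierce--Birkhoff ultimately holds.
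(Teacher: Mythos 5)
This statement is labeled a conjecture in the paper precisely because it is the open Pierce--Birkhoff problem: the paper supplies no proof, only the easy observation that $ISD(\mathbb{R}^n)\subseteq PWP(\mathbb{R}^n)$ and citations of the known cases ($n\le 2$ by Bochnak--Efroymson and by Mah\'e, partial results for $n=3$). Your assessment is therefore exactly right and matches the paper's treatment: the inclusion $ISD\subseteq PWP$ follows since polynomials are piecewise polynomial and $PWP$ is closed under $\min$/$\max$, the converse is genuinely open for $n\ge 3$, and your sketch of Madden's real-spectrum reduction to the local separation problem and of why dimension two is tractable (controlled resolution/monomialization) is accurate background rather than a gap in your argument. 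The only nudge is attribution: the $n=2$ case is due to Bochnak and Efroymson as well as Mah\'e, as the paper notes; and, as you say, the SANN results do not depend on the conjecture either way.
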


If the Pierce--Birkhoff conjecture holds, then every piecewise polynomial $g$ can be written
$$
g = \min_{i \in I} \max_{j \in J_i} f_{ij}
$$
for some polynomials $f_{ij}$ and finite indexing sets $I$, $J_i$.

The case $n=2$ was proved by Jacek Bochnak and Gustave Efroymson \citep{bochnakRealAlgebraicGeometry1998}, and then again by Louis Mahé \citep{mahePierceBirkhoffConjecture1984}.
Mahé has also shown that the conjecture holds when $n=3$ up to arbitrarily small neighborhoods of finitely many points.
See \citet{maddenHenriksenIsbellFrings2011} for further discussion of the Pierce--Birkhoff conjecture and related problems.

Later, we will show that the neural network architecture introduced in the next section can exactly compute any continuous piecewise polynomial if and only if the Pierce--Birkhoff conjecture holds.

\section{Matrix-recurrent neural networks}
\label{appendix-mrnn}

Semialgebraic Neural Networks (SANNs) are built on top of a related architecture called Matrix-Recurrent Neural Networks (MRNNs), a type of polynomial network with ReLU activations.
This chapter introduces MRNNs and exactly characterize their range as inf-sup definable (ISD) piecewise polynomials.
Our analysis of the expressivity of MRNNs is inspired by \citet{balestrieroMadMaxAffine2021}.

\subsection{Architecture}

A standard feed-forward neural network is a function $f: \mathbb{R}^n \to \mathbb{R}^k$ defined as an alternating composition of affine transformations and nonlinear activation functions $\sigma$:
\begin{align}
    h_0 &= x  \\
    h_\ell &= \sigma \left( b_{\ell} + A_{\ell} h_{\ell-1} \right) & \text{ for } \ell = 1,\dots,L-1  \\
    f(x) &= b_{L} + A_{L} h_{L-1}.
\end{align}

Perhaps the most popular activation functions used in practice are variations of the rectified linear unit (ReLU) $\sigma(x) = \max (0, x).$
This activation results in networks whose ranges are conceptually simple; they are multivariate linear splines.
Although such networks can uniformly approximate any continuous function, they are extremely limited in the types of functions that can be represented exactly.
For example, the product of two linear splines is in general not a linear spline itself, so computing such a function by a neural network will require a number of weights proportional to the desired accuracy of the approximation.

In this work, we study a type of \emph{Operator Recurrent Neural Network} (ORNN) \citep{dehoopDeepLearningArchitectures2022} where the input to the network is a matrix $X \in \mathbb{R}^{n \times m}$ and the output is a vector $f(X) \in \mathbb{R}^{k_\text{out}}$.
To emphasize that our results hold specifically for matrix inputs rather than general operators, we call this subset of ORNN's the \emph{Matrix Recurrent Neural Networks} (MRNNs).
The architecture of an $L$-layer MRNN is defined
\begin{align}
\label{eq-MRNN}
    h_0 &= 0 \\
    h_\ell &= b_{\ell,0} + A_{\ell,0} h_{\ell-1} + B_{\ell,0} (I \otimes X) h_{\ell-1} + \nonumber \\
    & \qquad \sigma \left( b_{\ell,1} + A_{\ell,1} h_{\ell-1} + B_{\ell,1} (I \otimes X) h_{\ell-1} \right) & \text{ for } \ell = 1,\dots,L  \\
    f(X) &= h_L \label{eq-MRNN3}
\end{align}
where $(I \otimes X)$ denotes a Kronecker product with an identity matrix $I$ whose size may vary in each layer.
Notice in particular that the input matrix $X$ is inserted \emph{multiplicatively} into each layer.
Our definition of an MRNN matches the original definition of a width-expanded ORNN with matrix input presented in \citet{dehoopDeepLearningArchitectures2022}.
As noted in that paper, the restriction on $h_0$ does not affect the range of $f$ since $h_1$ will invariably equal $b_{1,0}$, which is learned.

We formally define the space of functions representable by an MRNN.

\begin{definition}[MRNN]
\thlabel{def-MRNN}
Let $k, L \in \mathbb{N}$, and $D$ be a subset of $\mathbb{R}^{m \times n}$.
Then $f: D \to \mathbb{R}^k$ is in $MRNN_L(D, \mathbb{R}^k)$ if $f$ can be computed via equations (\ref{eq-MRNN}-\ref{eq-MRNN3}).

When the number of layers in a network is not specified, we define
$$
    MRNN(D, \mathbb{R}^k) = \bigcup_{L=0}^\infty MRNN_L(D, \mathbb{R}^k).
$$
\end{definition}

In contrast to standard ReLU networks, we will show that MRNNs form a ring of piecewise polynomial functions.
The kernels of these functions---that is, the set of points $X$ such that $f(X)$ vanishes---form semialgebraic sets in $\mathbb{R}^{m \times n}$.

\subsection{Range}

Numerous authors have observed that linear ReLU networks compute multivariate linear splines, where each linear region is a polyhedron defined by the combination of active neurons---that is, where $b_\ell + A_\ell h_{\ell - 1} \ge 0$ (see, for example, \citet{montufarNumberLinearRegions2014} and \citet{wangMAXAFFINESPLINEPERSPECTIVE2019}).
Likewise, \citet{dehoopDeepLearningArchitectures2022} showed that MRNNs (or rather ORNNS in general) compute piecewise polynomials, where the semialgebraic decomposition of the domain is similarly defined by the combination of active neurons
$$
b_{\ell,1} + A_{\ell,1} h_{\ell-1} + B_{\ell,1} (I \otimes X) h_{\ell-1} \ge 0, \qquad \ell= 1, \dots, L
$$
at each point.
Thus $MRNN(\mathbb{R}^{n \times m}, \mathbb{R}^k)$ is isomorphic to a subset of $PWP(\mathbb{R}^{nm}, \mathbb{R}^k)$.
The converse question, whether every piecewise polynomial is computable by an MRNN, was not addressed.
In this subsection, we refine their observation to show that MRNNs compute precisely the ISD functions.

Let $\mathrm{vec}: \mathbb{R}^{n \times m} \to \mathbb{R}^{nm}$ be the column-major matrix vectorization operator.
It is clearly bijective, and its inverse is the matrixization operator denoted $\mathrm{vec}^{-1}.$
We first show every ISD function can be computed by and MRNN.

\begin{lemma}
\thlabel{lem-scalar-ISD-subset-MRNN}
Let $m, n, \in \mathbb{N}$. 
For every $f \in ISD(\mathbb{R}^{mn})$, there exists $\mathcal{N} \in MRNN(\mathbb{R}^{m \times n})$ such that $f = \mathcal{N} \circ \mathrm{vec}^{-1}$.
\end{lemma}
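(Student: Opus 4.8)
The plan is to build an MRNN for a scalar ISD function $f \in ISD(\mathbb{R}^{mn})$ by structural induction on how $f$ is generated from polynomials via $\min$ and $\max$. Since $ISD(\mathbb{R}^{mn})$ is the lattice generated by the polynomial ring $\mathbb{R}[x_1,\dots,x_{mn}]$, it suffices to show (i) every polynomial in $x_1,\dots,x_{mn}$, pulled back through $\mathrm{vec}^{-1}$, is computable by an MRNN, and (ii) the class of MRNN-computable functions is closed under $\min$ and $\max$. For the whole argument to close I would actually carry along a slightly stronger inductive hypothesis: that the desired function is computable by an MRNN in which the output layer is \emph{affine} (no ReLU on the final layer, as in (\ref{eq-MRNN3})), and — because composition of MRNNs requires feeding one network's output back in as the recurrent state — that MRNNs compose, i.e.\ if $g$ is MRNN-computable and $h:\mathbb{R}\to\mathbb{R}$ (or on an appropriate product) is MRNN-computable then $h\circ g$ is too. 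The latter is essentially immediate from the layered definition (\ref{eq-MRNN})–(\ref{eq-MRNN3}): concatenate the layers, using the fact that the first hidden layer of an MRNN can produce an arbitrary learned affine function of the recurrent input, so $h_0=0$ is not a real restriction.

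For step (i), the key observation is that the recurrent input to an MRNN enters \emph{multiplicatively} through the $B_{\ell,0}(I\otimes X)h_{\ell-1}$ terms. Thus a single MRNN layer can multiply the current state (componentwise, via a suitable Kronecker/identity arrangement) by entries of $X=\mathrm{vec}^{-1}(x)$. Starting from the constant state $1$ (producible in layer $1$ since $b_{1,0}$ is learned), iterating such layers lets us accumulate an arbitrary monomial $x_1^{\alpha_1}\cdots x_{mn}^{\alpha_{mn}}$ in the state vector; carrying a bank of such monomials in parallel components of $h_\ell$ and taking a final learned linear combination $b_L + A_L h_{L-1}$ yields an arbitrary polynomial. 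No ReLU is needed here, so the output layer is affine as required. This is a routine but slightly fiddly bookkeeping argument about wiring Kronecker products; I'd state the monomial-accumulation lemma cleanly and only sketch the index juggling.

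For step (ii), recall the identities recorded in the appendix, $f\vee g = f + \mathrm{ReLU}(g-f)$ and $f\wedge g = f - \mathrm{ReLU}(f-g)$, i.e.\ $\max(f,g) = f + \sigma(g-f)$ and $\min(f,g) = f - \sigma(f-g)$. Given MRNNs for $f$ and $g$ with affine output layers, run them in parallel (stack the state vectors — the input matrix $X$ is shared), so that some layer carries both $f$ and $g$ as affine readouts of its state; then one more MRNN layer of the form (\ref{eq-MRNN}) computes $f + \sigma(g-f)$ by choosing $A_{\ell,0}$ to read off $f$, $A_{\ell,1}$ to read off $g-f$, and $B_{\ell,\cdot}=0$. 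The result again has an affine output layer (the $+f$ piece lives in the linear part), closing the induction. Since any $f\in ISD(\mathbb{R}^{mn})$ is a finite lattice expression in polynomials, finitely many applications of (i) and (ii) produce the required $\mathcal{N}\in MRNN(\mathbb{R}^{m\times n})$ with $f = \mathcal{N}\circ\mathrm{vec}^{-1}$.

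The main obstacle is the composition/parallelization plumbing: one must check that running two MRNNs "in parallel" and then combining their outputs genuinely stays within the rigid format (\ref{eq-MRNN})–(\ref{eq-MRNN3}) — in particular that the shared multiplicative input $(I\otimes X)$ with its per-layer-variable identity size can be arranged to act as the identity on the components one wants untouched and to feed the right blocks where multiplication by $X$ is desired. I expect this to be the only place where genuine (if elementary) care is needed; everything else reduces to the two displayed ReLU identities for $\min$/$\max$ and to iterated multiplication for monomials.
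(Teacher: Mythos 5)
Your proposal is correct and follows essentially the same route as the paper: write $f$ as a lattice polynomial in finitely many polynomials $p_1,\dots,p_J$, build MRNNs computing each $p_j\circ\mathrm{vec}^{-1}$ using the multiplicative $(I\otimes X)$ terms, and close under $\min/\max$ via the ReLU identities by running networks in parallel and adding one combining layer. The paper leaves both constructions as ``easy''; your write-up simply supplies the bookkeeping (affine output layers, parallel stacking, monomial accumulation) that the paper omits.
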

\begin{proof}
By definition of $ISD(\mathbb{R}^{mn}),$ $f$ is a lattice polynomial $q$ of some $\{p_1, \dots, p_J\} \subset \mathbb{R}[X_1, \dots, X_{mn}]$.
An easy construction shows there exists networks $\mathcal{N}_1, \dots, \mathcal{N}_J$ computing the polynomials $\mathcal{N}_j = p_j \circ \mathrm{vec}^{-1}$.
Another easy construction
shows that $MRNN(\mathbb{R}^{m \times n})$ is closed under the lattice operations, so the lattice polynomial $q$ can be computed by a network.
In particular, $\mathcal{N} = q \left( \mathcal{N}_1, \dots, \mathcal{N}_J \right)$ is the desired network.
\end{proof}

We now generalize the previous lemma to vector-valued functions.

\begin{lemma}
\thlabel{lem-ISD-subset-MRNN}
Let $m, n, k \in \mathbb{N}$. 
For every $f \in ISD(\mathbb{R}^{mn}, \mathbb{R}^k)$, there exists $\mathcal{N} \in MRNN(\mathbb{R}^{m \times n}, \mathbb{R}^k)$ such that $f = \mathcal{N} \circ \mathrm{vec}^{-1}$.
\end{lemma}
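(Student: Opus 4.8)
The plan is to reduce the vector-valued case to the scalar case of \thref{lem-scalar-ISD-subset-MRNN} by building each output coordinate separately and then stacking them. First I would observe that if $f \in ISD(\mathbb{R}^{mn}, \mathbb{R}^k)$, then by \thref{def-isd} each component $f_i \colon \mathbb{R}^{mn} \to \mathbb{R}$ (for $i = 1, \dots, k$) lies in $ISD(\mathbb{R}^{mn})$. Applying \thref{lem-scalar-ISD-subset-MRNN} to each $f_i$ yields networks $\mathcal{N}_i \in MRNN(\mathbb{R}^{m \times n})$ with $f_i = \mathcal{N}_i \circ \mathrm{vec}^{-1}$. It then suffices to show that $MRNN(\mathbb{R}^{m \times n}, \mathbb{R}^k)$ is closed under concatenation: given scalar-output networks $\mathcal{N}_1, \dots, \mathcal{N}_k$, there is a single network $\mathcal{N} \in MRNN(\mathbb{R}^{m \times n}, \mathbb{R}^k)$ computing $X \mapsto (\mathcal{N}_1(X), \dots, \mathcal{N}_k(X))$.

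The concatenation step is a routine block-diagonal construction on the recurrence (\ref{eq-MRNN})--(\ref{eq-MRNN3}). After padding all $\mathcal{N}_i$ to a common depth $L$ (the remark following (\ref{eq-MRNN3}) notes that trivial identity-like layers can be prepended, or one can simply note that a shorter network's output can be carried forward unchanged), I would form, at each layer $\ell$, the block-diagonal matrices $A_{\ell,r} = \mathrm{diag}(A^{(1)}_{\ell,r}, \dots, A^{(k)}_{\ell,r})$ and $B_{\ell,r} = \mathrm{diag}(B^{(1)}_{\ell,r}, \dots, B^{(k)}_{\ell,r})$ for $r \in \{0,1\}$, and the stacked bias vectors $b_{\ell,r} = (b^{(1)}_{\ell,r}; \dots; b^{(k)}_{\ell,r})$. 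One checks that $(I \otimes X)$ acts blockwise compatibly with this decomposition (choosing the identity factor in each block to match the corresponding $\mathcal{N}_i$), and that $\sigma$ acts component-wise, so the stacked hidden state $h_\ell = (h^{(1)}_\ell; \dots; h^{(k)}_\ell)$ evolves exactly as the individual states do. Hence $h_L = (f_1(X); \dots; f_k(X)) = f(\mathrm{vec}(X))$, giving $f = \mathcal{N} \circ \mathrm{vec}^{-1}$ as desired.

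The main obstacle, such as it is, is bookkeeping rather than conceptual: one must handle the fact that the different $\mathcal{N}_i$ may have different depths and different internal widths (hence different sizes of the identity factor in $I \otimes X$ at each layer), and confirm that the Kronecker-product term genuinely respects the block decomposition. Both points are handled by depth-padding to a common $L$ and by noting that a block-diagonal operator $\mathrm{diag}(B^{(1)}, \dots, B^{(k)})$ composed with $I \otimes X$ — where the $I$ is itself block-diagonal with the appropriate sizes — is again block-diagonal with blocks $B^{(i)}(I_i \otimes X)$. No genuinely new idea beyond \thref{lem-scalar-ISD-subset-MRNN} is needed.
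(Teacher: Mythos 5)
Your proposal is correct and takes essentially the same approach as the paper: decompose $f$ into scalar components, apply \thref{lem-scalar-ISD-subset-MRNN} to each, and combine the resulting scalar networks into a single vector-valued network. The paper compresses the combination step into the observation that $MRNN(\mathbb{R}^{m \times n}, \mathbb{R}^k)$ is the $k$-fold direct product of $MRNN(\mathbb{R}^{m \times n})$; your block-diagonal stacking (with depth padding and the blockwise action of $I \otimes X$) is just an explicit verification of that same fact.
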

\begin{proof}
$ISD(\mathbb{R}^{mn}, \mathbb{R}^k)$ is the $k$-times direct product of $ISD(\mathbb{R}^{mn})$, so $f = f_1 \times \dots \times f_k$ for some $f_1, \dots, f_k \in ISD(\mathbb{R}^{mn})$.
By the previous lemma, there exist $\mathcal{N}_1, \dots, \mathcal{N}_k \in MRNN(\mathbb{R}^{m \times n})$ such that $f_j = \mathcal{N}_j \circ \mathrm{vec}^{-1}$ for $j = 1, \dots, k$.
Since $MRNN(\mathbb{R}^{m \times n}, \mathbb{R}^k)$ is the $k$-times direct product of $MRNN(\mathbb{R}^{m \times n})$, $\mathcal{N} = \mathcal{N}_1 \times \dots \times \mathcal{N}_k$ is the desired network.
\end{proof}

The next two lemmas prove the dual inclusion---every MRNN is indeed ISD.

\begin{lemma}
\thlabel{lem-MRNN-subset-ISD-simple}
Let $m, n \in \mathbb{N}$. 
For every $\mathcal{N} \in MRNN(\mathbb{R}^{m \times n})$, there exists $f \in ISD(\mathbb{R}^{mn})$ such that $\mathcal{N} = f \circ \mathrm{vec}$.
\end{lemma}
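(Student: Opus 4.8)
The plan is to induct on the layers of the network, leaning on the fact (\thref{thm-ISD-f-ring}) that $ISD(\mathbb{R}^{mn})$ is an $f$-ring, hence in particular closed under $+$, $*$, $\min$ and $\max$. Fix $\mathcal{N} \in MRNN(\mathbb{R}^{m\times n})$; by \thref{def-MRNN} it belongs to $MRNN_L(\mathbb{R}^{m\times n})$ for some finite $L$, so it is given by equations (\ref{eq-MRNN})--(\ref{eq-MRNN3}) with hidden states $h_0, \dots, h_L$. I would prove by induction on $\ell$ that, regarding $X \mapsto h_\ell(X)$ as a function of $\mathrm{vec}(X) \in \mathbb{R}^{mn}$, every component of $h_\ell$ lies in $ISD(\mathbb{R}^{mn})$.

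The base case $\ell = 0$ is immediate since $h_0 = 0$ is a constant polynomial, hence $ISD$. For the inductive step, assume every component of $h_{\ell-1}$ is $ISD$. First I would observe that each entry of the matrix $I \otimes X$ is either $0$, $1$, or a coordinate function $\mathrm{vec}(X)\mapsto(\mathrm{vec}(X))_k$, hence a polynomial and therefore $ISD$. Thus every component of $(I\otimes X)h_{\ell-1}$ is a finite sum of products of $ISD$ functions, hence $ISD$ by the ring structure, and the same then holds for both affine expressions $b_{\ell,0} + A_{\ell,0}h_{\ell-1} + B_{\ell,0}(I\otimes X)h_{\ell-1}$ and $b_{\ell,1} + A_{\ell,1}h_{\ell-1} + B_{\ell,1}(I\otimes X)h_{\ell-1}$. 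Since $\sigma(t) = \max(0,t) = t \vee 0$ and $ISD(\mathbb{R}^{mn})$ is a lattice, applying $\sigma$ componentwise stays within $ISD$, and adding the two pieces (closure under $+$) shows every component of $h_\ell$ is $ISD$, completing the induction. Taking $\ell = L$ yields the scalar function $f \defeq h_L \in ISD(\mathbb{R}^{mn})$ with $\mathcal{N}(X) = f(\mathrm{vec}(X))$ for all $X$, i.e.\ $\mathcal{N} = f \circ \mathrm{vec}$, as desired.

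I expect no serious obstacle here: the entire content of the lemma is carried by \thref{thm-ISD-f-ring}. The one place where care is needed is recognizing that it is the class $ISD$ — not merely the polynomials — that is preserved: the multiplicative insertions of $X$ are interleaved with ReLU activations, and polynomials by themselves are not closed under $\min/\max$, so the $f$-ring (Henriksen--Isbell) structure is genuinely used rather than just closure of polynomials under $+$ and $*$. The remainder is routine bookkeeping about the dimensions of the intermediate vectors $h_\ell$ and the identification of the entries of $I\otimes X$ with coordinates of $\mathrm{vec}(X)$.
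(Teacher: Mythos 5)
Your proof is correct and follows essentially the same route as the paper's: induction over the layers, with the base case $h_0=0$, the observation that entries of $I\otimes X$ are polynomials in $\mathrm{vec}(X)$, and closure of $ISD(\mathbb{R}^{mn})$ under $+$, $*$ and the lattice operation $\sigma = \cdot \vee 0$ via the $f$-ring structure (Henriksen--Isbell). The only cosmetic difference is that the paper phrases the induction as being on the number of layers $L$ of sub-networks computing the components of $h_{L-1}$, whereas you induct on the layer index within a fixed network; also, the entries of $I\otimes X$ are $0$ or coordinates of $\mathrm{vec}(X)$ (no $1$'s), but that imprecision is harmless since constants are $ISD$ anyway.
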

\begin{proof}
Fix a network computing $\mathcal{N}$, and use induction on the number of layers $L$.
The base case $L=0$ is trivial since a $0$ layer network is a constant.
Assume the condition holds for $L-1$ layers, and consider the final layer in the network computing $\mathcal{N}(X)$:
\begin{equation}
\begin{aligned}
    h_L &= b_{L,0} + A_{L,0} h_{L-1} + B_{L,0} (I \otimes X) h_{L-1} +  \\ &\quad\sigma \left( b_{L,1} + A_{L,1} h_{L-1} + B_{L,1} (I \otimes X) h_{L-1} \right).
\end{aligned}
\label{eq-lem-MRNN-subset-ISD-simple-1} 
\end{equation}
Notice $h_{L-1}$ is a vector-valued function of $X$ and can be decomposed into its component functions
$$
    h_{L-1} = g_1 \times \dots \times g_N 
$$
for some $N \in \mathbb{N}$ corresponding to the width of this layer.
For every $j = 1, \dots, N$, $g_j(X)$ is computed by an $L-1$ network, so the inductive hypothesis states there exists $f_j \in ISD(\mathbb{R}^{mn})$ such that $g_j = f_j \circ \mathrm{vec}$.
We now rewrite each term of equation (\ref{eq-lem-MRNN-subset-ISD-simple-1}) using these $f_j$.
Let $i \in \{0, 1\}$.

\begin{itemize}
    \item The constant terms $b_{L, i} \in \mathbb{R}$ are clearly isomorphic to constant functions in $ISD(\mathbb{R}^{mn})$.
    \item The linear terms $A_{L,i} h_{L-1}$ are linear combinations of $f_1, \dots, f_N \in ISD(\mathbb{R}^{mn})$. 
    Since $ISD(\mathbb{R}^{mn})$ is an $f$-ring (\thref{thm-ISD-f-ring}), there exists $f_{A,i} \in ISD(\mathbb{R}^{mn})$ such that $f_{A,i} = (A_{L,i} h_{L-1}) \circ \mathrm{vec}$.
    \item For the quadratic terms, note $(I \otimes X) \in \mathbb{R}^{rm \times N}$.
    The $k$'th component of $(I \otimes X) h_{L-1}$ equals
    \begin{equation}
    \sum_{\alpha = 1}^m X_{(k \% n, \alpha)} g_{(\lfloor k / n \rfloor + \alpha)}
    =
    \left( \sum_{\alpha = 1}^m X_{(k \% n, \alpha)} f_{(\lfloor k / n \rfloor + \alpha)} \right) \circ \mathrm{vec} (X)
    \end{equation}
\end{itemize}
where $\%$ denotes modular division, $\lfloor \cdot \rfloor$ is the floor operation, and $X_{(a,b)}$ is entry $a,b$ of matrix $X$.
In particular, each term in the sum on the right-hand-side is the product of a monomial and an $ISD$ function, so 
$$
    f_{B,i,k} \defeq \sum_{\alpha = 1}^m X_{(k \% n, \alpha)} f_{(\lfloor k / n \rfloor + \alpha)}
$$
is in $ISD(\mathbb{R}^{mn})$ for $k = 1, \dots, rm$.
The full quadratic terms $B_{L,i} (I \otimes X) h_{L-1}$ are thus isomorphic to linear combinations of ISD functions $f_{B,i,1}, \dots, f_{B,i,rm}$, so there exists $f_{B,i} \in ISD(\mathbb{R}^{mn})$ such that $f_{B,i} = (B_{L,i} (I \otimes X) h_{L-1}) \circ \mathrm{vec}$.

We can thus rewrite (\ref{eq-lem-MRNN-subset-ISD-simple-1}) in the equivalent form
\begin{equation}
    h_L = \big[ b_{L,0} + f_{A,0} + f_{B,0} + \sigma( b_{L,1} + f_{A,1} + f_{B,1} ) \big] \circ \mathrm{vec}.
\end{equation}
The term in square brackets is constructed from elements of the $f$-ring $ISD(\mathbb{R}^{mn})$ using only addition and the lattice operation $\sigma = \max \{0, \cdot \}$, so it is itself in $ISD(\mathbb{R}^{mn})$ and the lemma is proved.
\end{proof}

\begin{lemma}
\thlabel{lem-MRNN-subset-ISD}
Let $m, n, k \in \mathbb{N}$. 
For every $\mathcal{N} \in MRNN(\mathbb{R}^{m \times n}, \mathbb{R}^k)$, there exists $f \in ISD(\mathbb{R}^{mn}, \mathbb{R}^k)$ such that $\mathcal{N} = f \circ \mathrm{vec}$.
\end{lemma}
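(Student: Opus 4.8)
The plan is to reduce this vector-valued claim to the scalar case already established in \thref{lem-MRNN-subset-ISD-simple}, following exactly the same pattern by which \thref{lem-ISD-subset-MRNN} was deduced from \thref{lem-scalar-ISD-subset-MRNN}. The two facts I will lean on are that an MRNN with $\mathbb{R}^k$-valued output, post-composed with a coordinate projection, is again an MRNN, and that $ISD(\mathbb{R}^{mn},\mathbb{R}^k)$ is by definition the $k$-fold direct product of $ISD(\mathbb{R}^{mn})$ (\thref{def-isd}); together these let me treat the $k$ output coordinates in isolation.

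First I would fix a network computing $\mathcal{N}\in MRNN(\mathbb{R}^{m\times n},\mathbb{R}^k)$ and show that, for each $i\in\{1,\dots,k\}$, the scalar coordinate function $X\mapsto e_i^\top\mathcal{N}(X)$ lies in $MRNN(\mathbb{R}^{m\times n})$. This is a one-line check on the last layer: in the defining equations (\ref{eq-MRNN})–(\ref{eq-MRNN3}), left-multiplying the output layer $h_L = b_{L,0}+A_{L,0}h_{L-1}+B_{L,0}(I\otimes X)h_{L-1}+\sigma(b_{L,1}+A_{L,1}h_{L-1}+B_{L,1}(I\otimes X)h_{L-1})$ by the row vector $e_i^\top$ simply replaces $b_{L,0},A_{L,0},B_{L,0}$ by $e_i^\top b_{L,0},\,e_i^\top A_{L,0},\,e_i^\top B_{L,0}$ and leaves the $\sigma$-block untouched, which is again precisely the MRNN layer form, now with one-dimensional output. (If one preferred not to modify the last layer, one could instead append a single linear layer that extracts coordinate $i$, which is trivially an MRNN.)

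Then I would apply \thref{lem-MRNN-subset-ISD-simple} to each of these $k$ scalar MRNNs, obtaining $f_1,\dots,f_k\in ISD(\mathbb{R}^{mn})$ with $e_i^\top\mathcal{N}=f_i\circ\mathrm{vec}$ for every $i$. Setting $f\defeq f_1\times\dots\times f_k$ gives $f\in ISD(\mathbb{R}^{mn},\mathbb{R}^k)$, and since $\mathcal{N}=f\circ\mathrm{vec}$ holds coordinatewise it holds as an identity of $\mathbb{R}^k$-valued functions, which is the claim. I do not expect any genuine obstacle; the only point requiring care is the bookkeeping in the first step — verifying that post-composition with a projection preserves the exact affine-plus-ReLU shape of an MRNN layer — and this is the same elementary closure argument used implicitly throughout the MRNN/ISD correspondence.
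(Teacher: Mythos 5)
Your proposal is correct and is essentially the paper's own argument: the paper's proof simply says to ``pass through the direct products'' as in Lemma \ref{lem-ISD-subset-MRNN}, i.e.\ split $\mathcal{N}$ into its $k$ scalar coordinate networks, apply Lemma \ref{lem-MRNN-subset-ISD-simple} to each, and recombine the resulting scalar ISD functions into $f=f_1\times\dots\times f_k$, which is exactly what you do, just with the coordinate-projection step spelled out. One small wording fix: projecting the last layer does not leave the $\sigma$-block ``untouched''---you must also replace $b_{L,1},A_{L,1},B_{L,1}$ by $e_i^\top b_{L,1},e_i^\top A_{L,1},e_i^\top B_{L,1}$, which is legitimate because $e_i^\top\sigma(v)=\sigma(e_i^\top v)$ for componentwise ReLU (or, as you note, simply append a purely linear projection layer).
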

\begin{proof}
We pass through the direct products in an identical manner to the proof of \thref{lem-ISD-subset-MRNN}.
\end{proof}

Combining Lemmas \thref{lem-ISD-subset-MRNN} and \thref{lem-MRNN-subset-ISD} lets us conclude $MRNN(\mathbb{R}^{n \times m}, \mathbb{R}^k)$ is isomorphic to $ISD(\mathbb{R}^{nm}, \mathbb{R}^k)$; that is, there exists a bijective ring-homomorphism between them. Informally, the two rings are identical up to a relabeling of the elements.

\begin{theorem}
\thlabel{thm-MRNN-equals-ISD}
For all $n, m, k \in \mathbb{N}$, the rings $MRNN(\mathbb{R}^{n \times m}, \mathbb{R}^k)$ and $ ISD(\mathbb{R}^{nm}, \mathbb{R}^k)$ are homeomorphic, that is,
$$
MRNN(\mathbb{R}^{n \times m}, \mathbb{R}^k) \cong ISD(\mathbb{R}^{nm}, \mathbb{R}^k).
$$
\end{theorem}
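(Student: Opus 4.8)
The plan is to exhibit the isomorphism concretely, as (pre-/post-)composition with the fixed vectorization bijection, and then to observe that the two inclusion lemmas already proved say precisely that this map restricts to a bijection between the two function rings. Define $\Phi\colon ISD(\mathbb{R}^{nm},\mathbb{R}^k)\to MRNN(\mathbb{R}^{n\times m},\mathbb{R}^k)$ by $\Phi(f)\defeq f\circ\mathrm{vec}$, where $\mathrm{vec}\colon\mathbb{R}^{n\times m}\to\mathbb{R}^{nm}$ is the matrix vectorization operator; Lemma \thref{lem-ISD-subset-MRNN} is exactly the assertion that $\Phi$ is well defined, i.e.\ that $f\circ\mathrm{vec}$ really is an MRNN (modulo the cosmetic relabeling $n\times m$ versus $m\times n$, which does not change anything since $nm=mn$). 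Dually, define $\Psi\colon MRNN(\mathbb{R}^{n\times m},\mathbb{R}^k)\to ISD(\mathbb{R}^{nm},\mathbb{R}^k)$ by $\Psi(\mathcal{N})\defeq\mathcal{N}\circ\mathrm{vec}^{-1}$; Lemma \thref{lem-MRNN-subset-ISD} says $\Psi$ is well defined.

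Next I would verify that $\Phi$ and $\Psi$ are mutually inverse: since $\mathrm{vec}$ is a bijection with inverse $\mathrm{vec}^{-1}$, we get $\Psi(\Phi(f))=(f\circ\mathrm{vec})\circ\mathrm{vec}^{-1}=f$ and $\Phi(\Psi(\mathcal{N}))=(\mathcal{N}\circ\mathrm{vec}^{-1})\circ\mathrm{vec}=\mathcal{N}$, so $\Phi$ is a bijection. I would then note that $\Phi$ preserves all relevant algebraic structure essentially for free: for any two functions $f,g$ into $\mathbb{R}^k$ and any of the pointwise operations (vector addition, Hadamard product, scalar multiplication, $\min=\wedge$, $\max=\vee$), one has $(f\,\square\,g)\circ\mathrm{vec}=(f\circ\mathrm{vec})\,\square\,(g\circ\mathrm{vec})$, simply because evaluating either side at a matrix $X$ reduces to evaluating $f$ and $g$ at the vector $\mathrm{vec}(X)$. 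Hence $\Phi$ is simultaneously a ring homomorphism and a lattice homomorphism, i.e.\ an $\ell$-ring (indeed $f$-ring) homomorphism, and being bijective it is an isomorphism; this is precisely the claimed $MRNN(\mathbb{R}^{n\times m},\mathbb{R}^k)\cong ISD(\mathbb{R}^{nm},\mathbb{R}^k)$ (the word ``homeomorphic'' in the statement should read ``isomorphic''). As a by-product, transporting the $f$-ring structure of $ISD(\mathbb{R}^{nm},\mathbb{R}^k)$ (Theorem \thref{thm-ISD-f-ring}) across $\Phi$ shows $MRNN(\mathbb{R}^{n\times m},\mathbb{R}^k)$ is closed under these operations and is itself an $f$-ring, justifying the earlier remark that MRNNs form a ring of piecewise polynomials.

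I do not expect a real obstacle here: once Lemmas \thref{lem-ISD-subset-MRNN} and \thref{lem-MRNN-subset-ISD} are in hand, this theorem is just a packaging statement, with all the genuine content --- that lattice polynomials of coordinate polynomials can be synthesized layer-by-layer as MRNNs, and conversely that each MRNN layer performs only ring-and-$\sigma$ operations on ISD functions --- living in those two lemmas. The only points needing a moment of care are (i) recognizing that the two lemmas together furnish a single canonical pair of mutually inverse maps induced by the one fixed bijection $\mathrm{vec}$, rather than merely an abstract isomorphism; and (ii) if one wants the stronger $f$-ring isomorphism rather than a bare ring isomorphism, remembering to include $\wedge$ and $\vee$ among the operations checked --- which is immediate, since precomposition with a fixed map commutes with every pointwise operation.
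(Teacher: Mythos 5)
Your proposal is correct and takes essentially the same route as the paper, which proves Theorem \thref{thm-MRNN-equals-ISD} simply by combining Lemmas \thref{lem-ISD-subset-MRNN} and \thref{lem-MRNN-subset-ISD}, the isomorphism being composition with the fixed bijection $\mathrm{vec}$. Your write-up only makes explicit what the paper leaves implicit---the mutually inverse maps $\Phi,\Psi$ and the fact that precomposition with $\mathrm{vec}$ commutes with the pointwise ring and lattice operations---and you are right that ``homeomorphic'' in the statement should read ``isomorphic.''
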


Theorems \ref{thm-MRNN-equals-ISD} and \ref{thm-lattice-over-subring-of-f-ring-is-ring} allow us to conclude that $MRNN(\mathbb{R}^{n \times m}, \mathbb{R}^k)$ forms a unital, associative algebra over $\mathbb{R}$; that is, every scalar multiple, sum, and Hadamard product of a MRNNs is computable by an MRNN.
Associativity and existence of a multiplicative identity, as well as the ability to compute scalar multiples and sums are obvious and hold for linear ReLU networks as well.
What distinguishes MRNNs from linear ReLU networks is the ability to compute Hadamard products.
We require this property to build SANNs capable of computing any semialgebraic function.

While the proofs above are not explicitly constructive, they can be made constructive using the identities in \citet{henriksenLatticeorderedRingsFunction1962} or \citet{hagerCommentsExamplesGeneration2010}.
Unfortunately, representing the Hadamard product of two MRNNs as an MRNN requires exponentially many parameters, which limits the practical usefulness of a direct application of these constructions in code.

In their original paper introducing the architecture, \citet{dehoopDeepLearningArchitectures2022} note that the range of an MRNN is a multivariate piecewise polynomial; we have refined this result to show that MRNNs are ISD. 
We now ask the natural question, is every piecewise polynomial representable by an MRNN?
As noted previously, this question is the famous Pierce--Birkhoff conjecture, a long-standing open problem in real algebraic geometry.
MRNNs can represent every piecewise polynomials if and only if the Pierce--Birkhoff conjecture is true.

\section{Semialgebraic Functions as Kernels of ISD Functions}
\label{sec-semialgebraic-functions-as-kernels}

This appendix contains proofs of \thref{prop-semialgebraic-sets-as-kernels} and \thref{cor-semialgebraic-graphs-as-kernels} from the main text.

\begin{repproposition}{prop-semialgebraic-sets-as-kernels}
$S \subset \mathbb{R}^m$ is a closed semialgebraic set if and only if there exists $f \in ISD^1(\mathbb{R}^m, \mathbb{R}_{\ge 0})$ such that $\ker(f) = S$.
\end{repproposition}
\begin{proof}
Clearly the kernel of a piecewise polynomial $f$ is a closed semialgebraic set, so we focus on the converse.

First, suppose $S$ is a closed basic semialgebraic set described by the system 
$$
p_1(x), \dots, p_{J_1}(x) = 0 \text{ and } q_1(x), \dots, q_{J_2}(x) \ge 0.
$$

The desired ISD piecewise polynomial is
$$
f(x) = \sum_{i = 1}^{J_1} p_i^2(x) + \sum_{i = 1}^{J_2} \bigg(\max \{ 0, - q_i(x) \}\bigg)^2.
$$

Each term $p_i^2(x)$ (resp. $\max \{ 0, - q_i(x) \}^2$) is zero precisely when the corresponding condition $p_i(x) = 0$ (resp. $q_i(x) \ge 0$) is satisfied.
Also, the function $q \mapsto \max \{ 0,  q \}^2$ is in $C^1$ and has the continuous
derivative $q \mapsto 2\max \{ 0,  q \}$.
Moreover, each term is non-negative, so the sum is non-negative and zero if and only if every term is zero, meaning every condition is satisfied simultaneously.
Finally, $f$ is ISD since it is constructed using the ISD lattice-ring
operations on polynomials (see Appendix \ref{appendix-l-rings}), and it is $C^1$ since each term in the sum is $C^1$.

Now let $S$ be an arbitrary closed semialgebraic set; i.e., the union of basic semialgebraic sets.
We may take the basic sets to be closed, since
$$
\text{cl} \left( \bigcup_{i=1}^n s_i \right) = \bigcup_{i=1}^n \text{cl} (s_i)
$$
for finitely many sets $s_1, \dots, s_n$.
Since, $\ker(fg) = \ker(f) \cup \ker(g)$ for any real-valued functions $f$ and $g$, the desired ISD function can be constructed as the product of piecewise polynomials corresponding to each closed basic semialgebraic set.
\end{proof}

\begin{repcorollary}{cor-semialgebraic-graphs-as-kernels}
$F: \mathbb{R}^m \to \mathbb{R}^n$ is a semialgebraic function with closed graph if and only if there exists a $G \in ISD^1(\mathbb{R}^m \times \mathbb{R}^n, \mathbb{R}^n_{\ge 0})$ 
such that $\ker(G) = \text{gr}(F),$ where $\text{gr}(F) \defeq \{ (x, F(x)) \} \subset \mathbb{R}^m \times \mathbb{R}^n$ is the graph of $F$.
\end{repcorollary}
\begin{proof} 
If $F$ is semialgebraic and has a closed graph, then \thref{prop-semialgebraic-sets-as-kernels} guarantees there exists a $C^1$ ISD piecewise polynomial $g: \mathbb{R}^m \times \mathbb{R}^n \to \mathbb{R}_{\ge 0}$ such that $\text{gr}(F) = \ker(g).$ 
An $\mathbb{R}^n$-valued function $G$ can be obtained by defining each component function to be $g$.

Now suppose for a given $F$, there exists an ISD $G$ such that $\ker(G) = \gr(F)$.
Then $\gr(F)$ is closed since kernels of continuous functions are closed, and $F$ is a semialgebraic function since $\ker(G)$ is clearly a semialgebraic set.
\end{proof}

\begin{figure}[t]
    \centering
    \includegraphics[width=0.3\linewidth]{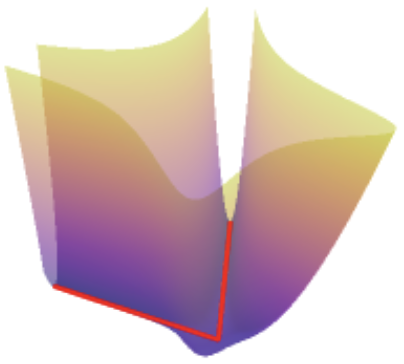}
    \caption{Encoding the graph of a semialgebraic function (red line) $F(x) = | x |$
    as the kernel of a continuously differentiable piecewise polynomial (purple and yellow surface) 
    \[ 
    G(x, y) = \big((x+y)(x-y)\big)^2 + \max(0, -y)^2. 
    \]
    }
    \label{fig-ker-graph}
\end{figure}

\begin{remark} 
The hypothesis in Corollary \ref{cor-semialgebraic-graphs-as-kernels} that $F$ has a closed graph covers all continuous semialgebraic functions, as well as some unbounded discontinuous functions like
\begin{equation}
    y = \begin{cases}
        0, & x \leq 0 \\
        1/x, & x > 0.
    \end{cases}
\end{equation}
For an example of a semialgebraic function whose graph is not closed, consider the characteristic function of $(0, +\infty)$:
\begin{equation}
    y = \begin{cases}
        0, & x \leq 0 \\
        1, & x > 0.
    \end{cases}
\end{equation}
The graph does not contain the limit point $(0, 1).$
\end{remark}

\section{Homotopy continuation methods}
\label{appendix-homotopy-continuation}

This appendix contains additional background material on homotopy continuation methods for root finding. Subsection \ref{appendix-sard-transversality} states semialgebraic variants of two important theorems from geometry that are the foundation for our proofs.
Subsection \ref{sec-background-homotopy-continuation} gives sufficient conditions for the existence of a homotopy $H$ whose kernel can be traced from time $0$ to time $1$ in order evaluate the roots of a given function.
Finally, \ref{sec-curve-tracing} builds the system of ODEs that is solved in order to execute the homotopy continuation method, and we prove this ODE system can be represented by a SANN.

\subsection{Semialgebraic Sard's and Transversality Theorems}
\label{appendix-sard-transversality}

We present semialgebraic variations on two classic theorems from differential topology.

\begin{theorem}[Semialgebraic Sard's theorem]
\thlabel{thm-semialgebraic-sard}
The set of critical values of a continuously differentiable semialgebraic function has measure zero.
\end{theorem}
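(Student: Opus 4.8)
The plan is to convert the measure-theoretic claim into a dimension count. For a semialgebraic set $X\subseteq\mathbb{R}^n$, having Lebesgue measure zero is equivalent to $\dim X<n$ (see \citet{bochnakRealAlgebraicGeometry1998}), so it suffices to show that the set of critical values of a $C^1$ semialgebraic map is semialgebraic of dimension strictly less than $n$. The tools needed are all standard: the Tarski--Seidenberg theorem (images of semialgebraic sets are semialgebraic), finite decomposition of a semialgebraic set into $C^1$ semialgebraic cells, the fact that semialgebraic maps do not raise dimension, and the $C^1$ constant-rank theorem.

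I would first fix notation: let $f\colon U\to\mathbb{R}^n$ be a $C^1$ semialgebraic map with $U\subseteq\mathbb{R}^m$ open and semialgebraic. Since differentiation of a $C^1$ function is a Tarski--Seidenberg-definable operation on its graph, $Df\colon U\to\mathbb{R}^{n\times m}$ is again semialgebraic; hence the critical locus $C=\{x\in U:\operatorname{rank}Df(x)<n\}$, cut out by the simultaneous vanishing of all $n\times n$ minors of $Df(x)$, is semialgebraic, and therefore so is $f(C)\subseteq\mathbb{R}^n$. If $m<n$ then $C=U$ and $\dim f(C)\le\dim U\le m<n$ already, so one may assume $m\ge n$.

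Next I would stratify. Apply semialgebraic cell decomposition and then refine it so that on each cell the restricted differential has constant rank---legitimate because locally each cell is a $C^1$ semialgebraic graph, which makes $x\mapsto\operatorname{rank}D(f|_{\mathrm{cell}})(x)$ a semialgebraic function that can be made constant on finitely many pieces---to obtain $C=\bigsqcup_{i=1}^{k}S_i$, with each $S_i$ a $C^1$ semialgebraic submanifold of $\mathbb{R}^m$ on which $D(f|_{S_i})$ has constant rank $r_i$. Because $D(f|_{S_i})(x)=Df(x)|_{T_x S_i}$ has rank at most $\operatorname{rank}Df(x)$, and every point of $C$ is critical for $f$, we get $r_i<n$ for all $i$. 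The constant-rank theorem then presents $f|_{S_i}$, in suitable $C^1$ charts, as a linear projection onto $r_i$ coordinates, so $f(S_i)$ is locally---hence, by finiteness, globally---contained in a $C^1$ submanifold of dimension $r_i<n$; thus $\dim f(S_i)<n$. Unioning over $i$ gives $\dim f(C)=\max_i\dim f(S_i)<n$, so $f(C)$ has Lebesgue measure zero.

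I expect the one genuinely delicate point to be the strata $S_i$ of dimension $\ge n$: the plain $C^1$ version of Sard's theorem gives nothing there, since it requires $C^{m-n+1}$ regularity, so it cannot simply be invoked. What rescues the argument is the elementary linear-algebra observation about the subspace $T_x S_i\subseteq\mathbb{R}^m$ that a critical point of $f$ is automatically a critical point of the restriction $f|_{S_i}$; combined with the constant-rank refinement of the stratification, this turns the statement that every point of $S_i$ is critical into the statement that $f(S_i)$ lies in a submanifold of dimension $<n$, with no appeal to higher derivatives. Everything else (semialgebraicity of $Df$, of $C$, and of $f(C)$; finiteness in the cell decomposition; the dimension/measure equivalence) is a routine application of the cited facts.
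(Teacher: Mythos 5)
Your proof is correct in substance, and it is worth noting that the paper does not actually prove this theorem at all: it simply cites \citet{kurdykaSemialgebraicSardTheorem2000} (which treats the stronger statement about generalized critical values), so your dimension-counting argument is a self-contained derivation of the standard semialgebraic Sard theorem rather than a variant of an in-paper proof. Your central mechanism is exactly the right one: semialgebraicity of $Df$ and of the critical locus via Tarski--Seidenberg, a stratification of the critical set into $C^1$ semialgebraic submanifolds on which the restricted differential has constant rank $r_i<n$ (the rank cannot exceed $\operatorname{rank}Df(x)<n$ since the restriction to $T_xS_i$ is a restriction of a linear map to a subspace), and then only the $C^1$ rank theorem---never the classical Sard theorem with its $C^{m-n+1}$ hypothesis---to bound the image. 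Two steps should be tightened. First, the constant-rank refinement is not a one-shot operation: after partitioning a cell $S$ by the values of the semialgebraic, integer-valued function $x\mapsto\operatorname{rank}\bigl(Df(x)|_{T_xS}\bigr)$, the pieces must be re-decomposed into cells, and on a subcell of dimension strictly less than $\dim S$ the tangent space shrinks, so the rank of the restriction can change again; the standard fix is an induction on dimension (a subcell of full dimension is open in $S$, hence has the same tangent spaces and inherits the constant rank, while lower-dimensional subcells are handled by the inductive hypothesis), which terminates after finitely many rounds. Second, the phrase ``hence, by finiteness, globally contained in a $C^1$ submanifold of dimension $r_i$'' overstates what the rank theorem yields: $S_i$ need not be compact, the cover by rank-theorem charts is only countable, and the local slices do not assemble into one global submanifold. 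What you actually need, and what follows immediately, is that $f(S_i)$ is a countable union of subsets of $r_i$-dimensional $C^1$ submanifolds of $\mathbb{R}^n$ with $r_i<n$, hence Lebesgue-null; the finite union over $i$ gives that $f(C)$ is null, which is the statement, and if you also want the dimension form you can invoke semialgebraicity of $f(C)$ together with the measure--dimension equivalence you quoted at the outset. With those two routine repairs the argument is complete.
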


In the ``textbook'' Sard's theorem, the smoothness requirements on the function depend on the dimension of the range \citep{abrahamTransversalMappingsFlows1967}.
In contrast, for semialgebraic functions continuous differentiability is sufficient for the theorem to hold.
A proof can be found in \citet{kurdykaSemialgebraicSardTheorem2000}.

Next, we use the previous theorem to adapt the Thom transversality theorem to $C^1$ semialgebraic functions.
\begin{theorem}[Semialgebraic transversality theorem]
\thlabel{thm-semialgebraic-transversality}
If $F: X \times A \to Y$ is continuously differentiable semialgebraic function and $0$ is a regular value of $F$, then $0$ is a regular value of $F(\cdot, a)$ for almost every $a \in A$.
\end{theorem}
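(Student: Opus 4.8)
The plan is to run the classical proof of the parametric (Thom) transversality theorem, replacing the one appeal to ordinary Sard's theorem by the semialgebraic Sard's theorem \thref{thm-semialgebraic-sard}. This substitution is exactly what allows the hypothesis to be only $C^1$ regardless of the dimensions of $X$, $A$, $Y$: in the smooth category one would normally need $F$ to be $C^k$ with $k$ depending on these dimensions, but in the semialgebraic category $C^1$ suffices.

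First I would set $W \defeq F^{-1}(0) \subseteq X \times A$. Since $F$ is $C^1$ and $0$ is a regular value, the implicit function theorem shows $W$ is an embedded $C^1$ submanifold of $X \times A$, of pure codimension $\dim Y$; and since $W$ is the preimage of a point under a semialgebraic map it is a semialgebraic set. If $W = \emptyset$ the conclusion holds vacuously (then $0$ is vacuously a regular value of every $F(\cdot,a)$), so assume $W \neq \emptyset$. Let $\pi : W \to A$ be the restriction to $W$ of the canonical projection $(x,a) \mapsto a$; it is $C^1$ and semialgebraic.

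The crux is a pointwise linear-algebra identity: for $a \in A$, $a$ is a regular value of $\pi$ if and only if $0$ is a regular value of $F(\cdot, a)$. I would verify this at each $(x,a) \in W$ by splitting the differential $DF_{(x,a)} = [\,\partial_x F_{(x,a)} \mid \partial_a F_{(x,a)}\,]$ into its $X$- and $A$-blocks. Since $T_{(x,a)}W = \ker DF_{(x,a)}$ and $D\pi_{(x,a)}$ is the restriction of $(v,w) \mapsto w$ to this kernel, a short computation shows $D\pi_{(x,a)}$ is onto $T_a A$ exactly when $\operatorname{image}(\partial_a F_{(x,a)}) \subseteq \operatorname{image}(\partial_x F_{(x,a)})$; combined with the surjectivity of the full $DF_{(x,a)}$ (which holds because $0$ is a regular value of $F$), this is equivalent to $\partial_x F_{(x,a)} = D\big(F(\cdot,a)\big)_x$ being surjective. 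Since this holds at every point of $W$ over $a$, i.e. at every $x$ with $F(x,a)=0$, we conclude that $a$ is a regular value of $\pi$ iff $0$ is a regular value of $F(\cdot,a)$; the case $a \notin \pi(W)$ is consistent (both conditions hold vacuously).

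Finally I would apply \thref{thm-semialgebraic-sard} to $\pi$ to conclude its critical values form a measure-zero set, so for almost every $a \in A$ it is a regular value of $\pi$, hence $0$ is a regular value of $F(\cdot,a)$. The main obstacle is that \thref{thm-semialgebraic-sard} is most naturally stated for $C^1$ semialgebraic maps between (open subsets of) Euclidean spaces, whereas $\pi$ is defined on the semialgebraic manifold $W$; the genuinely delicate step is therefore to cover $W$ by a \emph{finite} semialgebraic $C^1$ atlas — available from semialgebraic ($C^1$-)cell decomposition, which writes $W$ as a finite union of semialgebraic sets each $C^1$-semialgebraically diffeomorphic to an open semialgebraic subset of some $\mathbb{R}^{d_i}$ — so that Sard applies chart by chart and the measure-zero conclusion is reassembled by a finite union. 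A secondary bookkeeping point is to localize the implicit-function-theorem step in case $X$, $A$, $Y$ are semialgebraic $C^1$ manifolds rather than open subsets of Euclidean space. As a closing remark, since the critical-value set of a $C^1$ semialgebraic map is itself semialgebraic, the exceptional set here is a semialgebraic subset of $A$ of measure zero, hence of dimension strictly less than $\dim A$, which is the sharper form invoked in Theorem~\ref{thm-continuous-semialgebraic-functions-computable-by-homotopy}.
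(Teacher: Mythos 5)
Your proposal is correct and follows essentially the same route the paper intends: the paper simply cites the classical parametric transversality argument of Abraham--Robbin with Smale's/Sard's theorem replaced by the semialgebraic Sard theorem, and your projection $\pi: F^{-1}(0)\to A$ together with the linear-algebra equivalence of regular values is exactly that argument. Your flagged chart-by-chart step is also sound, since any critical point of $\pi$ lying in a lower-dimensional cell remains critical for the restriction of $\pi$ to that cell, so the finite cell decomposition suffices to apply \thref{thm-semialgebraic-sard} piecewise.
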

A proof can be found in \citep{abrahamTransversalMappingsFlows1967}, substituting the semialgebraic Sard's theorem for Smale's theorem as required.

\subsection{Homotopy existence}
\label{sec-background-homotopy-continuation}

We now state a standard result in the theory of homotopy continuation for solving nonlinear equations.
Theorem \ref{thm-homotopy-continuation} is adapted from \citet{krantzImplicitFunctionTheorem2003} with small modifications using the theorems from Appendix \ref{appendix-sard-transversality} to specialize to $ISD^1$ and avoid any $C^\infty$ assumptions.
It provides sufficient conditions under which we are guaranteed to be able to evaluate a homotopy from $t=0$ until $t=1$.

\begin{definition}[Regular value]
\thlabel{def-regular-value}
Let $U \subset \mathbb{R}^m$ be open. Given $f: U \to \mathbb{R}^n$, $y \in \mathbb{R}^n$ is a \emph{regular value} of $f$ if the Jacobian matrix $D f(x)$ has rank $n$ for all $x \in f^{-1}(y)$.
\end{definition}

In particular, $0$ is a regular value of $f: \mathbb{R}^n \to \mathbb{R}^n$ if and only if the Jacobian matrix is nonsingular for every $x \in \ker(f)$. 

\begin{theorem}
\thlabel{thm-homotopy-continuation}
Let $U_H$ be a bounded open subset of $\mathbb{R}^n$, 
and 
$H \in ISD^1(\mathbb{R}^n \times \mathbb{R}, \mathbb{R}^n)$.
Further suppose
\begin{enumerate}[label=\textbf{A.\arabic*}]
\item \label{A1} $H(\cdot, 0) = 0$ has a unique solution $y_0$ in $U_H$. That is, $\ker H(\cdot, 0) \cap U_H = \{ y_0 \}.$
\item \label{A2} The connected component of $\ker H$ containing $(y_0, 0)$ does not intersect $\partial U_H \times [0,1]$.
\item \label{A3} $0$ is a regular value of $H$.
\item \label{A4} $\partial_1 H(y_0, 0)$ is nonsingular.
\end{enumerate}
Then there exists $t \in ISD([0,1], \mathbb{R})$, $y \in ISD([0,1], \mathbb{R}^n)$ such that $t(0) = 0$, $t(1) = 1$, and $(y(s), t(s)) \in \ker(H)$ for all $s \in [0,1]$.
\end{theorem}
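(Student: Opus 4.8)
The plan is to run the classical homotopy--continuation argument (following \citet{krantzImplicitFunctionTheorem2003}) while tracking semialgebraicity, in three steps.

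\textbf{Step 1 (the kernel is a curve).} I would first invoke \ref{A3}: since $0$ is a regular value of $H$, the implicit function theorem makes $\ker H = H^{-1}(0)$ a $C^1$ embedded $1$-dimensional submanifold of $\mathbb{R}^n \times \mathbb{R}$ without boundary, and it is semialgebraic because $H$ is $ISD^1$. By \ref{A1}, $p_0 := (y_0, 0) \in \ker H$; let $C$ be its connected component in $\ker H$. Being a component of a closed, locally connected set, $C$ is closed in $\mathbb{R}^{n+1}$; it is a connected $C^1$ $1$-manifold, hence diffeomorphic to $\mathbb{R}$ or to $S^1$; and it is semialgebraic, since connected components of semialgebraic sets are semialgebraic. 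The whole proof then reduces to showing that following $C$ out of $p_0$ must reach the slice $\{t = 1\}$.

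\textbf{Step 2 (follow the curve to $t = 1$).} Fix a semialgebraic $C^1$ parametrisation $\gamma = (\eta, \tau) : J \to C$ with $\gamma(0) = p_0$, $J \in \{\mathbb{R}, S^1\}$. By \ref{A4}, $\partial_1 H(y_0, 0)$ is invertible, so near $p_0$ the implicit function theorem presents $\ker H$ as a graph over $t$; hence $\tau$ is a local diffeomorphism near $0$ and, after orienting $\gamma$, $\tau'(0) > 0$, so $\tau(s) > 0$ and $\eta(s) \in U_H$ for all small $s > 0$. I would then set
\[
  s_* := \sup\{\sigma > 0 :\ \tau(s) \in [0,1] \text{ and } \eta(s) \in U_H \text{ for all } s \in [0, \sigma]\} > 0
\]
and argue $s_* < \infty$: otherwise $\gamma([0,\infty)) \subseteq \overline{U_H}\times[0,1]$ is compact, which for $J = \mathbb{R}$ makes $\gamma([0,\infty))$ a compact space homeomorphic to $[0,\infty)$ --- impossible --- and for $J = S^1$ forces $C \subseteq \overline{U_H}\times[0,1]$, whence $\tau \geq 0$ on $C$ has an interior minimum at $p_0$, contradicting $\tau'(0) > 0$. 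With $s_* < \infty$, continuity gives $\tau(s_*) \in [0,1]$, $\eta(s_*) \in \overline{U_H}$, and examining how the defining condition fails at $s_*$ leaves only: (i) $\eta(s_*) \in \partial U_H$; (ii) $\tau(s_*) = 0$, $\eta(s_*) \in U_H$; (iii) $\tau(s_*) = 1$, $\eta(s_*) \in U_H$. Case (i) is impossible since $\gamma(s_*) \in C \cap (\partial U_H \times [0,1]) = \emptyset$ by \ref{A2}. In case (ii), \ref{A1} forces $\eta(s_*) \in \ker H(\cdot, 0) \cap U_H = \{y_0\}$, so $\gamma(s_*) = \gamma(0)$; injectivity of $\gamma$ then gives $s_* = 0$ when $J = \mathbb{R}$ (contradicting $s_* > 0$), or $C \subseteq \overline{U_H}\times[0,1]$ when $J = S^1$ (contradicting $\tau'(0) > 0$ as above). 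So case (iii) holds, i.e.\ $\gamma(s_*) = (y_1, 1)$ with $y_1 := \eta(s_*) \in U_H$ and $H(y_1,1)=0$.

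\textbf{Step 3 (reparametrise; obtain $ISD$ regularity).} Rescaling by $(y(s), t(s)) := \gamma(s\, s_*)$, $s \in [0,1]$, gives $t(0) = 0$, $t(1) = 1$, and $(y(s), t(s)) \in C \subseteq \ker H$ for all $s$. To finish, the traced arc $\gamma([0, s_*])$ is a compact connected $1$-dimensional semialgebraic set, so by the structure theory of $1$-dimensional semialgebraic sets it has a semialgebraic parametrisation by $[0,1]$, from which one extracts $t \in ISD([0,1], \mathbb{R})$ and $y \in ISD([0,1], \mathbb{R}^n)$ with the stated properties.

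The step I expect to be the main obstacle is Step 2: ruling out that $C$, followed from $p_0$, turns back to $\{t = 0\}$ before reaching $\{t = 1\}$ --- which is exactly where the uniqueness hypothesis \ref{A1} does its work --- or escapes through $\partial U_H \times [0,1]$ (blocked by \ref{A2}) or off to infinity (blocked by boundedness of $U_H$ together with properness of $\gamma$), and doing so uniformly over the two topological types of $C$. A secondary point requiring care is the bookkeeping in Step 3 that places the solution arc in $ISD$ rather than merely the continuous semialgebraic class.
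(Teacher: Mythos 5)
The paper never actually proves Theorem \ref{thm-homotopy-continuation}; it is stated as an adaptation of the classical continuation theorem cited from \citet{krantzImplicitFunctionTheorem2003}, and the semialgebraic Sard/transversality machinery is only used elsewhere (to verify \ref{A3} inside the proof of Theorem \ref{thm-continuous-semialgebraic-functions-computable-by-homotopy}). Your Steps 1--2 are a correct, complete reconstruction of exactly that classical argument: by \ref{A3} the set $\ker H$ is a closed $C^1$ one-dimensional manifold, the component through $(y_0,0)$ is diffeomorphic to $\mathbb{R}$ or $S^1$, and \ref{A1}, \ref{A2}, \ref{A4} together with boundedness of $U_H$ eliminate every way for the traced branch to avoid the slice $\{t=1\}$; your case analysis at $s_*$, including the circle case handled through the interior minimum of $\tau$ at $p_0$, is sound. (Minor remark: in Step 2 you do not need $\gamma$ itself to be semialgebraic --- any $C^1$ parametrisation suffices there --- so that extra requirement is an unnecessary claim.)

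The genuine gap is the final clause of Step 3. Semialgebraic triangulation, or the structure theory of one-dimensional semialgebraic sets, gives a \emph{continuous semialgebraic} parametrisation of the compact arc by $[0,1]$; it does not give an $ISD$ (piecewise-polynomial) one, and no reparametrisation can in general. Take $n=1$, $H(y,t)=y(t+2)-1$, $U_H=(0.1,1)$: all of \ref{A1}--\ref{A4} hold, but if $y,t$ were $ISD$ on $[0,1]$ with $y(s)\bigl(t(s)+2\bigr)\equiv 1$, then on each of the finitely many polynomial pieces the degree count forces $y$ and $t+2$ to be constants, so the continuous function $t$ would be globally constant, contradicting $t(0)=0$, $t(1)=1$. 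So ``extracting'' $t\in ISD([0,1],\mathbb{R})$ and $y\in ISD([0,1],\mathbb{R}^n)$ is impossible; this is really an overclaim in the statement as printed (shared by conclusion 1 of Theorem \ref{thm-continuous-semialgebraic-functions-computable-by-homotopy}) rather than a flaw in your continuation argument. What your proof genuinely delivers --- a continuous semialgebraic path in $\ker H$ from $(y_0,0)$ to the slice $t=1$ --- is also all that the paper uses downstream, namely as input to the curve-tracing ODE of Appendix \ref{sec-curve-tracing}, so the right repair is to weaken the regularity asserted for $y$ and $t$, not to strengthen Step 3.
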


\subsection{Curve tracing}
\label{sec-curve-tracing}
Let $z(s) \defeq \big( y(s), t(s) \big)$, and $H \in ISD^1(\mathbb{R}^n \times \mathbb{R}, \mathbb{R}^n)$ satisfy the hypotheses of Theorem \ref{thm-homotopy-continuation}.
Then $(y, 1) \in \ker H$ can be computed by solving the ODE initial-value problem defined by the ``arc-length parameterization'' \citep{chenHomotopyContinuationMethod2015}:
\begin{align}
    DH(z) \cdot \dot{z} &= 0 \label{eq-ode-1} \\
    \mathrm{sgn\;det} \begin{bmatrix} DH(z) \\ \dot{z} \end{bmatrix} &= \sigma_0 \label{eq-ode-2} \\
    \| \dot{z} \| &= \beta \label{eq-ode-3} \\
    z(0) &= (y_0, 0). \label{eq-ode-4}
\end{align}
where $\sigma_0$ is the sign of the determinant of that matrix at $z(0)$ and $\beta > 0$ is a constant chosen so that $t(1) = 1$.

With proper choice of $\mathcal{N}$ and $c_{\text{max}}$, the curve-tracing ODE system (\ref{eq-ode-1})--(\ref{eq-ode-4}) can be written in the form of the ODE system (\ref{ODE-alt})--(\ref{ODE-alt2}) that define SANNs.
At time $s = 0$, write 
\begin{equation}
    DH(y_0, 0) =  \left[ \begin{array}{c | c}
         \widehat{M} & \widehat{b}
    \end{array}
    \right].
\end{equation}
That is, $\widehat{M}$ is the $n \times n$ submatrix of $DH(y_0, 0)$ that does not contain the rightmost columns $\widehat{b}$.
By assumption \ref{A4}, $\widehat{M}$ is invertible.
Thus a vector $\dot{z}$ satisfying (\ref{eq-ode-1})--(\ref{eq-ode-4}) can be obtained by solving
\begin{equation}
    M \dot{z} \defeq
    \begin{bmatrix}
        \widehat{M} & 0 \\
        0 & \alpha
    \end{bmatrix} \dot{z}
    = \begin{bmatrix}
        \widehat{b} \\ 1
    \end{bmatrix}
    \eqdef b
\end{equation}
where $\alpha$ is a constant chosen to satisfy (\ref{eq-ode-2}) and  (\ref{eq-ode-3}).
The $M$ and $b$ defined above appear 
in the ODE (\ref{ODE-alt})--(\ref{ODE-alt2}).
Thus we have shown for $H$ satisfying the hypotheses of Theorem \ref{thm-homotopy-continuation}, $y_0 = 0$, and large enough $c_{\max},$ there is an $\mathcal{N} \in \isdnet(m, n, k)$ such that the integral curve traced in solving the ODE (\ref{ODE-alt})--(\ref{ODE-alt2})
is ``correct'' at time $s=0$; i.e. it is tangent to the integral curve defined by $\dot{z}$ solving the IVP (\ref{eq-ode-1})--(\ref{eq-ode-4}).

For time $s > 0$, we are no longer guaranteed that the first $n$ columns of $DH$ form an invertible submatrix.
However, by assumption \ref{A3}, $DH$ always has rank $n$, so every $s$ belongs to some interval where we can choose a column of $DH$ to be $\widehat{b}$, and the remaining columns to be $\widehat{M}$.
The computations in each of these intervals can be glued together in a way consistent with the ODE (\ref{ODE-alt})--(\ref{ODE-alt2}) used to compute the entire integral curve.
The details can be found in the proof of Theorem \ref{thm-sanns-continuous} in Appendix \ref{appendix-expressivity-proofs}.

\section{SANN Expressivity Proofs}
\label{appendix-expressivity-proofs}

In this appendix, we prove via construction that SANNs are able to represent
both continuous and discontinuous bounded semialgebraic functions $F$.

\subsection{Notation}
The theorems in this section require us to show the existence of ISD $\mathcal{N}$ generating certain initial value problems for (\ref{ODE-alt})--(\ref{ODE-alt2}).
In Algorithm \ref{alg-sann}, $\dot{z}$ implicitly depends on the inputs $\mathcal{N}$, $c_{\max}$, and $x$, and $\ODESolve$ always runs from time $s=0$ to $s=1$.  
For this section, we make the dependence on all variables explicit by using the signature 
$$
\ODESolve(\mathcal{N}, c_{\max}, x, (y_0, t_0, s_0), s_{\text{final}})
$$
where $\mathcal{N} \in \isdnet(m, n, 1)$ is the trainable ISD network, $c_{\max} \in \mathbb{R}_{\ge 0}$ bounds the output of the SANN, $x \in \mathbb{R}^m$ is the input to the SANN, $(y_0, t_0, s_0) \in \mathbb{R}^n \times \mathbb{R} \times \mathbb{R}$ is the initial condition, and $s_{\text{final}} \in \mathbb{R}$ is the final time.
We assume that $\ODESolve$ exactly solves the ODE system defined by $\dot{z}$ in Algorithm \ref{alg-sann} from time $s=s_0$ to time $s=s_{\text{final}}$.
To avoid repetition, $\mathcal{N}$, $c_{\max}$, $x$, $y_0, t_0, s_0$, and $s_{\text{final}}$ will always be in their respective spaces above.
Furthermore, for $\mathcal{N}_\alpha$ with any subscript $\alpha$, we use $M_{\alpha}$ and $b_{\alpha}$ to be the associated projections as in line \ref{line-Mb} of Algorithm \ref{alg-sann}.

We use $\Pi_{n+1}$ to denote the projection operator selecting $t_{\text{out}}$ from $(y_{\text{out}}, t_{\text{out}}).$

\subsection{Glueing}

We require two different ways to combine ISD networks $\mathcal{N}_1$ and $\mathcal{N}_2$.
For ``$s$-glueing,'' we build $\mathcal{N}_3$ that first imitates $\mathcal{N}_1$ on an interval $s \in [s_0, s_{1/2}]$, then behaves as $\mathcal{N}_2$ when $s \in [s_{1/2}, s_{\text{final}}].$
We require an additional lemma that modifies $\mathcal{N}_1$ and $\mathcal{N}_2$ to guarantee $\dot{y}, \dot{t}$ can be forced to be $0$ in an open set around $s_{1/2}$ without affecting the output of $\ODESolve.$
We accomplish this with an ISD change of variables.

For ``$t$-glueing,'' we build $\mathcal{N}_3$ to interpolate between $\mathcal{N}_1$ and $\mathcal{N}_2$ in disjoint closed $t$ intervals $[t_0, t_1]$ and $[t_2, t_3]$.
We use $t$-glueing when we can guarantee the integral curve traced by $\ODESolve$ will never enter the region $(t_1, t_2)$, so the value of $\mathcal{N}_3$ may be arbitrary here.
We are free to use this interval to interpolate between $\mathcal{N}_1$ and $\mathcal{N}_2$.

Our first lemma allows a particular change of variables for $s$-glueing.
\begin{lemma}[Change-of-variables]
\label{lem-change-of-variables}
Given $\mathcal{N}_1$, $c_1$, $y_0$, $t_0$, and $s_0 < s_1 < s_2 < s_{\text{final}}$, there exists $\mathcal{N}_2, c_2$ such that for all $x$,
\begin{equation}
\label{eq-lem-change-of-variables-1}
\ODESolve(\mathcal{N}_1, c_1, x, (y_0, t_0, s_0), s_{\text{final}}) =  \ODESolve(\mathcal{N}_2, c_2, x, (y_0, t_0, s_0), s_{\text{final}})
\end{equation}
and
\begin{equation}
\label{eq-lem-change-of-variables-2}
b_2(x, y, t, s) = 0 \quad \text{ when } s \notin [s_1, s_2].
\end{equation}
\end{lemma}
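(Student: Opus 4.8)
The plan is to obtain $\mathcal{N}_2$ by \emph{reparameterizing the $s$-axis}: I would compress the whole $\mathcal{N}_1$-dynamics, which runs over $[s_0,s_{\text{final}}]$, into the window $[s_1,s_2]$, while freezing the state (forcing $\dot z=0$ by taking $b_2=0$) on the two ``dead zones'' $[s_0,s_1]$ and $[s_2,s_{\text{final}}]$. Concretely, I would fix the ISD ramp $r(s)\defeq\hbox{clamp}\!\big(\tfrac{s-s_1}{s_2-s_1},0,1\big)$ and the smoothstep reparameterization
\[
\theta(s)\defeq s_0+(s_{\text{final}}-s_0)\big(3\,r(s)^2-2\,r(s)^3\big),
\]
which is ISD (a polynomial in the ISD function $r$), equals $s_0$ on $(-\infty,s_1]$ and $s_{\text{final}}$ on $[s_2,\infty)$, restricts to an increasing bijection $[s_1,s_2]\to[s_0,s_{\text{final}}]$, and has the nonnegative (hence $C^1$) ISD derivative $\theta'(s)=\tfrac{6(s_{\text{final}}-s_0)}{(s_2-s_1)^3}\,\mathrm{ReLU}\!\big((s-s_1)(s_2-s)\big)$, which vanishes outside $[s_1,s_2]$. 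I would set $\Theta\defeq\max_s\theta'(s)=\tfrac{3(s_{\text{final}}-s_0)}{2(s_2-s_1)}$ and $c_2\defeq\Theta\,c_1$.

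The only obstacle to simply ``composing $\mathcal{N}_1$ with $\theta$ and time-rescaling by $\theta'$'' is that $\hbox{clamp-sol}(M_1,b_1)$ is not itself an ISD function of $(M_1,b_1)$ (it involves $M_1^{-1}$ and a clamp), so I would first \emph{rationalize} it. Treating $c_1$ as a fixed constant and using the adjugate identity $M^{-1}b=(\det M)^{-1}\hbox{adj}(M)\,b$, the positive homogeneity $\lambda\,\hbox{clamp}(v,-c,c)=\hbox{clamp}(\lambda v,-\lambda c,\lambda c)$ for $\lambda>0$, and the oddness of $\hbox{clamp}$, a componentwise case analysis on the sign of $\det M$ should give, for every square matrix $M$ and vector $b$,
\[
(\det M)\,\hbox{clamp-sol}(M,b)\;=\;\hbox{clamp}\!\big(\hbox{adj}(M)\,b,\;-c_1|\det M|,\;c_1|\det M|\big)\eqdef\beta(M,b),
\]
with both sides vanishing when $\det M=0$, consistently with the singular branch of $\hbox{clamp-sol}$. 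Since $\hbox{adj}(M)\,b$, $\det M$, $|\det M|$ and componentwise $\hbox{clamp}$ are all assembled from ISD ring and lattice operations, $\beta$ is ISD in the entries of $(M,b)$ by \thref{thm-ISD-f-ring}. I would then define $\mathcal{N}_2\defeq(M_2,b_2)$ by
\[
M_2(x,z,s)\defeq\det\!\big(M_1(x,z,\theta(s))\big)\,I,\qquad b_2(x,z,s)\defeq\theta'(s)\,\beta\!\big(M_1(x,z,\theta(s)),\,b_1(x,z,\theta(s))\big)
\]
(with $z=(y,t)$). Because $\theta,\theta'\in ISD(\mathbb{R})$ and ISD maps are closed under composition and under pointwise ring and lattice operations (Section~\ref{sec-background}, \thref{thm-ISD-f-ring}), we have $\mathcal{N}_2\in\isdnet(m,n,k)$; and $b_2(x,z,s)=0$ for $s\notin[s_1,s_2]$ because $\theta'(s)=0$ there, which is exactly (\ref{eq-lem-change-of-variables-2}).

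Finally, to verify (\ref{eq-lem-change-of-variables-1}) I would fix $x$, let $z_1(\cdot)$ be the exact solution of the $\mathcal{N}_1$-ODE with $z_1(s_0)=(y_0,t_0)$ (so that $\ODESolve(\mathcal{N}_1,c_1,x,(y_0,t_0,s_0),s_{\text{final}})=z_1(s_{\text{final}})$), and show that $z_2\defeq z_1\circ\theta$ solves the $\mathcal{N}_2$-ODE started from $(y_0,t_0,s_0)$. On $[s_0,s_1]$ and $[s_2,s_{\text{final}}]$ the function $z_2$ is constant (equal to $(y_0,t_0)$ and to $z_1(s_{\text{final}})$, respectively) and $\dot z_2=0=\hbox{clamp-sol}(M_2,b_2)$ there, since $b_2=0$. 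On $[s_1,s_2]$ the chain rule gives $\dot z_2(s)=\theta'(s)\,v_1(\theta(s))$ with $v_1\defeq\hbox{clamp-sol}(M_1,b_1)$ evaluated along $z_1$; by the rationalization identity and $z_2(s)=z_1(\theta(s))$, this equals $M_2(x,z_2(s),s)^{-1}b_2(x,z_2(s),s)$ wherever $\det M_1(x,z_2(s),\theta(s))\neq0$, and equals $0$ on both sides otherwise. Since $v_1$ is already clamped by $c_1$ (so $\|v_1\|_\infty\le c_1$) and $0\le\theta'(s)\le\Theta$, we get $\|\dot z_2(s)\|_\infty\le\Theta c_1=c_2$, so the outer $c_2$-clamp never activates and $\dot z_2(s)=\hbox{clamp-sol}\big(M_2(x,z_2(s),s),b_2(x,z_2(s),s)\big)$ for almost every $s$; hence $\ODESolve(\mathcal{N}_2,c_2,x,(y_0,t_0,s_0),s_{\text{final}})=z_2(s_{\text{final}})=z_1(s_{\text{final}})$, giving (\ref{eq-lem-change-of-variables-1}) for all $x$. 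I expect the main obstacle to be the rationalization identity: one must track the sign of $\det M$ carefully so that the homogeneity of $\hbox{clamp}$ applies, and confirm that the identity degenerates correctly on the singular locus $\det M=0$. The remaining subtlety, easy once isolated, is that rescaling the already-$c_1$-clamped velocity by the bounded factor $\theta'$ cannot re-trigger the clamp — which is precisely why the choice $c_2=\Theta c_1$ works — together with the observation (as elsewhere in this paper) that well-posedness of the $\mathcal{N}_2$-ODE is inherited from that of the $\mathcal{N}_1$-ODE, its vector field being continuous at the dead-zone boundaries with both one-sided limits equal to $0$.
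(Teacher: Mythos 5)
Your proof is correct, and its core mechanism is the same as the paper's: a $C^1$ time-reparameterization of $[s_0,s_{\text{final}}]$ that is constant outside $[s_1,s_2]$ and whose derivative vanishes at the window's endpoints, so that $b_2$ carries the factor $\theta'(s)$ (hence vanishes outside $[s_1,s_2]$) while the reparameterized trajectory $z_1\circ\theta$ solves the new ODE and delivers the same endpoint value. Where you genuinely diverge is in how $\mathcal{N}_2$ realizes the rescaled field. The paper takes the minimal route: it sets $M_2(x,z,s)=M_1(x,z,v(s))$ and $b_2(x,z,s)=\dot v(s)\,b_1(x,z,v(s))$ with $v$ a Hermite-interpolant ramp, so the new velocity is $\textrm{clamp}(\dot v\,M_1^{-1}b_1,-c_2,c_2)$. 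You instead rationalize $\textrm{clamp-sol}$ through the adjugate identity, set $M_2=\det(M_1)\,I$, fold the already-$c_1$-clamped velocity into $b_2$, and verify that the outer clamp with $c_2=\Theta c_1$ never activates. That extra step buys something real: with the paper's construction, multiplying only $b_1$ by $\dot v$ commutes with the clamp exactly when the $c_1$-clamp is inactive along the trajectory, since $\textrm{clamp}(\dot v\,u,-c_2,c_2)\neq\dot v\,\textrm{clamp}(u,-c_1,c_1)$ whenever some $|u_i|>c_1$ and $\dot v$ varies, no matter how $c_2$ is chosen; the paper's ``simple calculus verifies'' glosses over this case (harmless in its downstream use, where $c_{\max}$ is taken large enough that clamping never fires along the traced curves), whereas your construction reproduces the clamped dynamics exactly for arbitrary $\mathcal{N}_1,c_1$, including the singular locus $\det M_1=0$, at the price of a bulkier ISD network involving $\det$ and $\mathrm{adj}$ of $M_1$. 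Two minor points: your aside ``nonnegative (hence $C^1$)'' should instead say that $\theta$ is $C^1$ because $\theta'$ is continuous; and, like the paper, you lean on $\ODESolve$ returning the exact (unique) solution of the new ODE --- your closing well-posedness remark is at the same level of informality as the paper's, so it is acceptable here but is the one place where neither argument is fully spelled out.
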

\begin{proof}
Let $v: \mathbb{R} \to \mathbb{R}$ be defined
\begin{equation}
    v(s) = \begin{cases}
        s_0 & s < s_1 \\
        u(s) & s \in [s_1, s_2] \\
        s_{\text{final}} & s > s_2
    \end{cases}
\end{equation}
where $u$ is the Hermite interpolant \citep{suliIntroductionNumericalAnalysis2003} such that
\begin{align}
    u(s_1) &= s_0 & \dot{u}(s_1) &= 0 \\
    u(s_2) &= s_{\text{final}} & \dot{u}(s_2) &= 0.
\end{align}
$v$ is continuously differentiable and $\dot{v} \in ISD(\mathbb{R})$, $\dot{v}(s) = 0$ for all $s \notin [s_1, s_2].$
Since $s \mapsto \dot{v}(s) b(x, z(s))$ is the product of ISD functions, it is itself ISD.
Likewise the composition of an ISD function with a polynomial is ISD.
Thus we can define
\begin{align}
    M_2(x, y, t, s) &= M_1(x, y(v(s)), t(v(s)), v(s)) \\
    b_2(x, y, t, s) &= \dot{v}(s) b_1(x, y(v(s)), t(v(s)), v(s)).
\end{align}
Simple calculus verifies conclusion (\ref{eq-lem-change-of-variables-1}), and conclusion (\ref{eq-lem-change-of-variables-2}) holds since $\dot{v}(s) = 0$ outside $[s_1, s_2]$.
\end{proof}

\begin{lemma}[$s$-glueing]
\label{lem-s-glueing}
Given $\mathcal{N}_1$, $\mathcal{N}_2$, $c_1$, $c_2$ $y_0$, $t_0$, $s_0 < s_1 < s_{\text{final}}$, let 
\begin{equation}
    (y_{\text{out}}(x), t_{\text{out}}(x)) = \ODESolve(\mathcal{N}_1, c_1, x, (y_0, t_0, s_0), s_1).
\end{equation}
There exists $\mathcal{N}_3$, $c_3$ such that for all $x$,
\begin{equation}
    \label{eq-s-glueing-1}
    \ODESolve(\mathcal{N}_3, c_3, x, (y_0, t_0, s_0), s_{\text{final}}) = \ODESolve(\mathcal{N}_2, c_2, x, (y_{\text{out}}(x), t_{\text{out}}(x), s_1), s_{\text{final}}).
\end{equation}
\end{lemma}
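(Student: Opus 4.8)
The plan is to reduce the statement to two applications of the change-of-variables lemma (Lemma~\ref{lem-change-of-variables}): one to $\mathcal{N}_1$ and one to $\mathcal{N}_2$, arranged so that the two reparametrized networks both have driving vector $b\equiv 0$ on disjoint $s$-intervals straddling $s_1$. In such a ``dead zone'' the matrix field $M$ is dynamically irrelevant, because $\text{clamp-sol}(M,0)=0$ for every $M$, so we are free to blend the two matrix fields there using an $ISD$ cutoff in $s$ without perturbing either trajectory. This produces a single $\mathcal{N}_3$ that runs the dynamics of $\mathcal{N}_1$, idles, then runs the dynamics of $\mathcal{N}_2$.

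First I fix auxiliary times $s_0<p_1<p_2<s_1<q_1<q_2<s_{\text{final}}$. Applying Lemma~\ref{lem-change-of-variables} to $(\mathcal{N}_1,c_1)$ with the quadruple $s_0<p_1<p_2<s_1$ yields $(\mathcal{N}_1',c_1')$ with $\ODESolve(\mathcal{N}_1',c_1',x,(y_0,t_0,s_0),s_1)=\ODESolve(\mathcal{N}_1,c_1,x,(y_0,t_0,s_0),s_1)=(y_{\text{out}}(x),t_{\text{out}}(x))$ and $b_1'(x,y,t,s)=0$ whenever $s\notin[p_1,p_2]$; in particular the $\mathcal{N}_1'$ trajectory started from $(y_0,t_0)$ at $s_0$ is constant, equal to $(y_{\text{out}}(x),t_{\text{out}}(x))$, on all of $[p_2,s_1]$. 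Symmetrically, applying the lemma to $(\mathcal{N}_2,c_2)$ with $s_1<q_1<q_2<s_{\text{final}}$ yields $(\mathcal{N}_2',c_2')$ with $b_2'(x,y,t,s)=0$ whenever $s\notin[q_1,q_2]$ and the same $\ODESolve$ output from $s_1$ to $s_{\text{final}}$; since the time reparametrization built in the proof of Lemma~\ref{lem-change-of-variables} does not depend on the initial state, this equality holds for every starting state, in particular the ($x$-dependent) state $(y_{\text{out}}(x),t_{\text{out}}(x))$, and the $\mathcal{N}_2'$ trajectory from that state is constant on $[s_1,q_1]$. Since $p_2<s_1<q_1$, the sets of $s$ on which $b_1'$ and $b_2'$ can be nonzero are disjoint.

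Next I pick an $ISD$ cutoff $\chi\in ISD(\mathbb{R})$ with $0\le\chi\le 1$, $\chi\equiv 1$ on $(-\infty,p_2]$, and $\chi\equiv 0$ on $[q_1,\infty)$ (a piecewise-polynomial ramp), and I define $\mathcal{N}_3=(M_3,b_3)\in\isdnet(m,n,1)$ by
\begin{align}
M_3(x,y,t,s) &\defeq \chi(s)\,M_1'(x,y,t,s)+\big(1-\chi(s)\big)\,M_2'(x,y,t,s),\\
b_3(x,y,t,s) &\defeq b_1'(x,y,t,s)+b_2'(x,y,t,s),
\end{align}
which is $ISD$ since it is assembled from the $ISD$ functions $M_1',b_1',M_2',b_2',\chi$ by ring and lattice operations, which preserve the $ISD$ class (\thref{thm-ISD-f-ring}); I take $c_3$ large enough, e.g.\ $c_3=\max(c_1',c_2')$, which as in all our constructions we may assume so large that the $\text{clamp}$ step is inactive on the trajectories considered. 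To finish, I follow the $\mathcal{N}_3$ trajectory $z_3$ from $(y_0,t_0,s_0)$ over four intervals. On $[s_0,p_2]$, $\chi\equiv 1$ and $b_2'\equiv 0$, so $(M_3,b_3)=(M_1',b_1')$ and $z_3$ coincides with the $\mathcal{N}_1'$ trajectory, reaching $(y_{\text{out}}(x),t_{\text{out}}(x))$. On $[p_2,q_1]$, $b_3=b_1'+b_2'\equiv 0$, so $\dot z_3\equiv 0$ and $z_3$ remains at $(y_{\text{out}}(x),t_{\text{out}}(x))$. On $[q_1,q_2]$, $\chi\equiv 0$ and $b_1'\equiv 0$, so $(M_3,b_3)=(M_2',b_2')$ and $z_3$ coincides with the $\mathcal{N}_2'$ trajectory launched at $q_1$ from $(y_{\text{out}}(x),t_{\text{out}}(x))$, which is exactly where the $\mathcal{N}_2'$ trajectory started at $s_1$ sits, by its dead zone on $[s_1,q_1]$. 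On $[q_2,s_{\text{final}}]$, both driving vectors vanish, so both trajectories sit still. Evaluating at $s_{\text{final}}$ gives $\ODESolve(\mathcal{N}_3,c_3,x,(y_0,t_0,s_0),s_{\text{final}})=\ODESolve(\mathcal{N}_2',c_2',x,(y_{\text{out}}(x),t_{\text{out}}(x),s_1),s_{\text{final}})=\ODESolve(\mathcal{N}_2,c_2,x,(y_{\text{out}}(x),t_{\text{out}}(x),s_1),s_{\text{final}})$, which is~(\ref{eq-s-glueing-1}).

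The main obstacle---and the reason Lemma~\ref{lem-change-of-variables} is invoked first---is that $M_1$ and $M_2$ need not agree anywhere near $s_1$: one cannot $ISD$-interpolate two unrelated matrix fields across a single point, and a naive interpolation risks a singular $M_3$ exactly where $b_3\ne 0$, which $\text{clamp-sol}$ would silently collapse to $\dot z_3=0$. Carving out a genuine $b\equiv 0$ dead zone around $s_1$ is precisely what removes this difficulty; the remainder---choosing the auxiliary times, the cutoff $\chi$, and $c_3$, and checking that the clamp bounds stay consistent under the two reparametrizations---is routine bookkeeping.
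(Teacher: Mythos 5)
Your proposal is correct and follows essentially the same route as the paper's proof: both first invoke the change-of-variables lemma to force $b_1$ and $b_2$ to vanish on an $s$-neighborhood of $s_1$, then blend the matrix fields with an ISD cutoff in $s$ across that dead zone while splicing the driving vectors (your $b_3=b_1'+b_2'$ coincides with the paper's piecewise definition since the supports are disjoint). Your explicit remarks that the reparametrization is independent of the (here $x$-dependent) initial state and that the cutoff is a clamped ISD ramp are just the details the paper leaves implicit.
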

\begin{proof}
Choose $\epsilon > 0$ such that $s_0 < s_1 - \epsilon < s_1 + \epsilon < s_{\text{final}}$, and apply Lemma \ref{lem-change-of-variables} to $\mathcal{N}_1$ and $\mathcal{N}_2$ so that both $b_1$ and $b_2$ are identically $0$ in the region $(s_1 - \epsilon, s_1 + \epsilon).$
No matter the value of $M_1$ and $M_2$, both $\dot{y}$ and $\dot{t}$ are identically $0$ here.
Let $\lambda(s) = (s - (s_1 - \epsilon)) / (2 \epsilon)$.
Define
\begin{equation}
    M_3(x, y, t, s) = 
    \begin{cases}
        M_1(x, y, t, s) & s \le s_1 - \epsilon \\
        (1 - \lambda(s)) M_1(x, y, t, s) + \lambda(s) M_2(x, y, t, s) & s \in (s_1 - \epsilon, s_1 + \epsilon) \\
        M_2(x, y, t, s) & s \ge s_1 + \epsilon
    \end{cases}
\end{equation}
and
\begin{equation}
    b_3(x, y, t, s) = \begin{cases}
        b_1(x, y, t, s) & s \le s_1 \\
        b_2(x, y, t, s) & s > s_1. \\
    \end{cases}
\end{equation}
$M_3$ is clearly continuous, and $b_3$ is continuous since $b_1$ and $b_2$ are both $0$ when $s = s_1.$
The constructed $\mathcal{N}_3$ is ISD and satisfies conclusion (\ref{eq-s-glueing-1}).
\end{proof}

\begin{remark}
    Strictly speaking, we have not written $M_3$ and $b_3$ in ISD form in the proof above, but this is not difficult to remedy.
    For example, we could use
    \begin{equation}
        \lambda(s) := \min( \max( (s - (s_1 - \epsilon)) / (2 \epsilon), 0), 1),
    \end{equation}
    then define
    \begin{align}
        M_3 &= (1 - \lambda(s)) M_1 + \lambda(s) M_2 \\
        b_3 &= (1 - \lambda(s)) b_1 + \lambda(s) b_2 
    \end{align}
    over the entire domain.
\end{remark}

\begin{lemma}[$t$-glueing]
\label{lem-t-glueing}
Given $\mathcal{N}_1$, $\mathcal{N}_2$, $c_1$, $c_2$, $y_1$, $y_2$, $t_1$, $t_2$,$s_0$, $s_{\text{final}}$, let
\begin{equation}
\mathcal{T}_j \defeq \{ \Pi_{n+1} \ODESolve(\mathcal{N}_j, c_j, x, (y_j, t_j, s_0), s) \mid s \in [s_0, s_{\text{final}}], x \in \mathbb{R}^m \}.
\end{equation}
Suppose $\mathcal{T}_1 \cap \mathcal{T}_2 = \emptyset$.
Then there exists $\mathcal{N}_3$, $c_3$ such that
\begin{equation}
\ODESolve(\mathcal{N}_3, c_3, x, (y_j, t_j, s_0), s_{\text{final}}) = \ODESolve(\mathcal{N}_j, c_j, x, (y_j, t_j, s_0), s_{\text{final}})
\end{equation}
for $j = 1, 2$ and $x \in \mathbb{R}^m$.
\end{lemma}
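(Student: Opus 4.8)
\emph{Plan.} I would build $\mathcal{N}_3$ as a convex interpolation between $\mathcal{N}_1$ and $\mathcal{N}_2$ whose weight depends only on the homotopy coordinate $t$ and is \emph{locally constant} near the $t$-values actually visited by each trajectory. Then, starting from $(y_1,t_1,s_0)$ the $\mathcal{N}_3$-field is identical to the $\mathcal{N}_1$-field all along the way, and symmetrically for $j=2$, so both trajectories are reproduced verbatim and the endpoints agree.

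The first genuine step is a separation statement. Because $\dot z=\hbox{clamp-sol}(\cdot)$ has sup-norm at most $c_j$, the $t$-coordinate of any solution of the $\mathcal{N}_j$-system issuing from $(y_j,t_j,s_0)$ stays within $|t(s)-t_j|\le c_j(s_{\text{final}}-s_0)$; hence $\mathcal{T}_1$ and $\mathcal{T}_2$ are \emph{bounded} subsets of $\mathbb{R}$. Since they are disjoint — and, in every situation where $t$-glueing is invoked, contained in two disjoint closed intervals, so that in fact $\overline{\mathcal{T}_1}\cap\overline{\mathcal{T}_2}=\emptyset$ — I can choose a continuous piecewise-linear $\lambda:\mathbb{R}\to[0,1]$ with $\lambda\equiv 0$ on an open neighbourhood of $\overline{\mathcal{T}_1}$ and $\lambda\equiv 1$ on an open neighbourhood of $\overline{\mathcal{T}_2}$. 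Any continuous piecewise-linear function of one real variable is ISD, and precomposing with the coordinate map $(x,y,t,s)\mapsto t$ (a polynomial) leaves it ISD.

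With $\lambda$ in hand, set $c_3:=\max(c_1,c_2)$ and define, with $t$ the homotopy coordinate of $z$,
\begin{align*}
M_3(x,y,t,s)&:=\bigl(1-\lambda(t)\bigr)M_1(x,y,t,s)+\lambda(t)M_2(x,y,t,s),\\
b_3(x,y,t,s)&:=\bigl(1-\lambda(t)\bigr)b_1(x,y,t,s)+\lambda(t)b_2(x,y,t,s),
\end{align*}
so that $\mathcal{N}_3:=(M_3,b_3)\in\isdnet(m,n,1)$ because $ISD$ is closed under sums, Hadamard products and composition with polynomials (\thref{thm-ISD-f-ring}). To verify the conclusion, fix $j\in\{1,2\}$ and let $z_j(s)=(y_j(s),t_j(s))$ be the exact $\mathcal{N}_j$-solution from $(y_j,t_j,s_0)$. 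By definition of $\mathcal{T}_j$ we have $t_j(s)\in\mathcal{T}_j$ for all $s$, so the entire curve $s\mapsto(z_j(s),s)$ lies in the open set where $\lambda$ is constant, and there $(M_3,b_3)\equiv(M_j,b_j)$; since $c_3\ge c_j$ and $\hbox{clamp-sol}$ is unaffected by enlarging the bound wherever clamping was already inactive — the regime of the expressivity constructions of Section~\ref{sec-range}, where the clamp only guards against a singular $M$ (or else one simply normalises $c_1=c_2$ at the outset) — the $\mathcal{N}_3$-field coincides with the $\mathcal{N}_j$-field along $z_j$. Hence $z_j$ also solves the $\mathcal{N}_3$-system with the same initial data, and $\ODESolve$ returns the same endpoint, i.e. $\ODESolve(\mathcal{N}_3,c_3,x,(y_j,t_j,s_0),s_{\text{final}})=\ODESolve(\mathcal{N}_j,c_j,x,(y_j,t_j,s_0),s_{\text{final}})$ for every $x$.

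The step I expect to carry the weight is the separation: upgrading the bare hypothesis $\mathcal{T}_1\cap\mathcal{T}_2=\emptyset$ to an honest ISD interpolant that is constant on neighbourhoods of the two reachable $t$-sets. This is exactly where the boundedness forced by $\hbox{clamp-sol}$ and the fact that $t$-glueing is always applied with the two $t$-ranges in disjoint closed intervals enter; once $\lambda$ exists, the ISD bookkeeping and the clamp-bound matching are routine.
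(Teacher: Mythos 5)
Your proof follows essentially the same route as the paper's: interpolate $M$ and $b$ between $\mathcal{N}_1$ and $\mathcal{N}_2$ with a weight $\lambda(t)$ that is constant on the $t$-regions actually visited by the trajectories, observe that each solution then sees only its own vector field, and take $c_3=\max(c_1,c_2)$. The only differences are that you are more explicit about two points the paper glosses over --- that disjointness of $\mathcal{T}_1$ and $\mathcal{T}_2$ must be upgraded to a positive gap between the two $t$-ranges (the paper asserts the $\mathcal{T}_j$ are closed and connected), and that enlarging the clamp bound is harmless only where clamping is inactive (or after normalising $c_1=c_2$) --- which is careful bookkeeping rather than a different argument.
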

\begin{proof}
Sets $\mathcal{T}_j$ contain the integral curves for the $t$ output of $\ODESolve$, so they are connected and closed.
Since they are disjoint, we can assume WLOG $\max \mathcal{T}_1 < \min \mathcal{T}_2$.
Then there exists $t_{\text{mid}}, \epsilon \in \mathbb{R}$ such that 
\begin{equation}
\max \mathcal{T}_1 < t_{\text{mid}} - \epsilon < t_{\text{mid}} + \epsilon < \min \mathcal{T}_2.
\end{equation}
The integral curves traced by $\ODESolve$ do not enter the $t$-region $(t_{\text{mid}} - \epsilon, t_{\text{mid}} + \epsilon)$, so we are free to define $\mathcal{N}_3$ to interpolate between $\mathcal{N}_1$ and $\mathcal{N}_2$ there.
Let $\lambda(t) = (t - (t_{\text{mid}} - \epsilon)) / (2 \epsilon)$.
Define
\begin{align}
    M_3 &= 
        \begin{cases}
            M_1 & t \le t_{\text{mid}} - \epsilon \\
            \lambda(t) M_1 + (1 - \lambda(t)) M_2 & t \in (t_{\text{mid}} - \epsilon, t_{\text{mid}} + \epsilon) \\
            M_2 & t \ge t_{\text{mid}} - \epsilon
        \end{cases} \\
    b_3 &= 
        \begin{cases}
            b_1 & t \le t_{\text{mid}} - \epsilon \\
            \lambda(t) b_1 + (1 - \lambda(t)) b_2 & t \in (t_{\text{mid}} - \epsilon, t_{\text{mid}} + \epsilon) \\
            b_2 & t \ge t_{\text{mid}} - \epsilon
        \end{cases} \\
    c_3 &= \max \{c_1, c_2 \}.
\end{align}
\end{proof}

\subsection{Proof of Theorem \ref{thm-continuous-semialgebraic-functions-computable-by-homotopy}}

\begin{reptheorem}{thm-continuous-semialgebraic-functions-computable-by-homotopy}
Let $F: \mathbb{R}^m \to U_F$ be a given continuous semialgebraic function.
For every ${x \in \mathbb{R}^m}$ there exists $H_{x,a}$ with the form (\ref{eq-H}) such that for almost every ${a \in U_F}$, the following hold:
\begin{enumerate}
    \item There exists ${y \in ISD([0,1], \mathbb{R}^n}), {t \in ISD([0,1], \mathbb{R})}$, such that ${t(0) = 0}$, ${t(1) = 1}$, 
    and ${(y(s), t(s)) \in \ker(H_{x,a})}$ for all $s \in [0,1]$.
    \item The kernel of $H_{x,a}(\cdot, 1)$ is the singleton $\{ F(x) \}.$ 
\end{enumerate}
\end{reptheorem}

\begin{proof}
To specify $H_{x,a},$ we need to choose $g$ and $G$. 
Let $g$ be constructed according to \thref{lem-g}. 
Let $G$ be constructed such that $\ker G = \text{gr} (F)$ using \thref{cor-semialgebraic-graphs-as-kernels}.

For conclusion 1, we will use \thref{thm-homotopy-continuation}.
We first specify a bounded, open $U_H \subset \mathbb{R}^n$ containing $U_F$ with an additional property that is relevant for \ref{A2}.
Let $\mathcal{C}$ denote the connected component of $\ker(H_{x,a})$ containing $(y_0, 0)$ (i.e. the curve to be traced), and assume for the moment \ref{A3} (which does not depend on choice of $U_H$) has been verified. 
Then for almost every $a$, there exists $t_0 \in (0,1)$ such that $\mathcal{C} \cap (\mathbb{R}^n \times [0, t_0]) = \mathcal{C} \cap (U_F \times [0, t_0])$; in other words, the curve remains in $U_F$ until at least time $t_0$.
Now, using Lemma \ref{lem-g}, we conclude there exists bounded open $U_H$ containing $U_F$ such that $H_{x,a}(t,y) > 0$ for all $(y,t) \in U_H^c \times [t_0,1]$.

We now consider each hypothesis of \thref{thm-homotopy-continuation} in turn:

\begin{enumerate}[label=\textbf{A.\arabic*}]
\item The unique solution to $H_{x,a}(y, 0) = y - a = 0$ is $y_0 = a$.

\item From the construction of $U_H$ above, the curve $\mathcal{C}$ does not intersect $\partial U_H$ during the interval $t \in [0, t_0]$, and the entire kernel of $H_{x,a}$ is disjoint from $\partial U_H$ for $t > t_0$.

\item We will show that $0$ is a regular value of $H_{x,a}$ for almost every $a$ using the Semialgebraic Transversality Theorem (see \thref{thm-semialgebraic-transversality} in appendix \ref{appendix-sard-transversality}).
Consider $\tilde{H_x}: \mathbb{R}^n \times \mathbb{R} \times \mathbb{R}^n \to \mathbb{R}^n$ obtained from $H_{x,a}$ by treating $a$ as a variable: $\tilde{H_x}(y, t, a) \defeq H_{x,a}(y, t).$
Then
$$
\partial_3 \tilde{H_x} = -(1-t) I_n,
$$
which is invertible for $t \neq 1$, so $0$ is a regular value of $\tilde{H}$.
Apply the \thref{thm-semialgebraic-transversality} to $\tilde{H_x}$ to conclude $0$ is a regular value of $H_{x,a}$ for almost every $a$.

\item $(y_0, 0)$ is in the region $U_F \times U_F$, so $\partial_1 H_{x,a}(y_0, 0) = I_n$, which is nonsingular.
\end{enumerate}

For conclusion 2, we have shown in \ref{A2} that $y(1) \in U_F$ where $g$ vanishes, so $H_{x,a} (y(1), 1) = G(x,y(1)).$ 
The conclusion follows since $\ker G = \text{gr} (F).$ 
\end{proof}

\subsection{Proof of Theorem \ref{thm-sanns-continuous}}

\begin{reptheorem}{thm-sanns-continuous}
Let $F: \mathbb{R}^m \to U_F$ be a given continuous semialgebraic function.
Then there exists $\mathcal{N} \in \isdnet(m, n, 1)$ and $c_{\max} > 0$ such that $\SANN^{\,lim}(\mathcal{N}, \cdot, c_{\max}) = F$.
\end{reptheorem}

\begin{proof}
In Section \ref{sec-curve-tracing}, we showed how to construct $\mathcal{N}_0 \in \isdnet(m,n,1)$ to compute the required 
integral curve using Algorithm \ref{alg-sann} at time $s=0$.
$\mathcal{N}_0$ can be chosen in this manner to compute the integral curve until some time $s_1 > 0$, when $\widehat{M}(s_1)$, the first $n$ columns of $DH(y(s_1), t(s_1))$, no longer form an invertible submatrix.
However, by assumption \ref{A3}, we can choose a different invertible submatrix and left-over column, say $\widehat{M}_1$ and $\widehat{b}_1$.
By the continuity of $DH$, $\widehat{M}_1$ is invertible over some interval $s \in (s_{1/2}, s_2)$ with $0 < s_{1/2} < s_1 < s_2$.
Using the same approach, we can construct $\mathcal{N}_1 \in \isdnet(m,n,1)$ to compute the integral curve over $s \in (s_{1/2}, s_2)$.
We can now $s$-glue $\mathcal{N}_0$ and $\mathcal{N}_1$ (Lemma \ref{lem-s-glueing}) to construct $\mathcal{N}$ computing the integral curve across $s \in [0, s_2).$

All that remains it to prove this procedure can be repeated until time $s=1$ is reached.
Due to the clamping in line \ref{line-clamp}  
of Algorithm \ref{alg-sann}, $z$ remains in $[-c_{\max}, c_{\max}]^{n+1}$ for time $s \in [0, 1]$.
Since $\mathcal{N} \in \isdnet(m,n,1)$ is a piecewise polynomial, it is Lipschitz on the compact domain $\{x\} \times [-c_{\max}, c_{\max}]$,
which means there exists $\epsilon > 0$ such that for each step $\tau$ of the $s$-glueing procedure above, we can choose $s_{\tau + 1}$ such that $s_{\tau} + \epsilon < s_{\tau + 1}$, and the conclusion follows.
\end{proof}

\subsection{Discontinuous functions}

We are now ready to directly tackle discontinuous semialgebraic functions.
Our strategy is to split the computation into two distinct phases.
\begin{enumerate}
\item First, we utilize the fact that $\dot{z}$ can be discontinuous across a boundary where $M$ is singular to separate the domain $\mathbb{R}^m$ into the connected components of the graph of $F$.
Every point in the $j$'th connected component is mapped to $(y_\text{out}, t_\text{out}) = (0, j)$.
\item Then, using the homotopy continuation arguments from Section \ref{sec-range-continuous}, we construct continuous ISD phase vector fields that map from $(0, j)$ to $(F(x), t_\text{out}),$ which was desired.
\end{enumerate}
Furthermore, we carefully construct each piece of the computation to lie in disjoint $(t,s)$-space, so they can be glued together in a way consistent with a single continuous ISD network $\mathcal{N}$.
This $\mathcal{N}$ computes $F$  while solving the ODE (\ref{ODE-alt})--(\ref{ODE-alt2}).
Ensuring that the required $\mathcal{N}$ is indeed an ISD network (rather than an arbitrary continuous one) is one of the primary challenges in these constructions.

\subsection{Characteristic functions}

Given $S \subseteq \mathbb{R}^m$, a function $\chi_S: \mathbb{R}^m \to \mathbb{R}$ is the \emph{characteristic function} of a set $S$ if $\chi_S(x) = 1$ for $x \in S,$ and $\chi_S(x) = 0$ otherwise.
We now show that SANNs can represent characteristic functions on semialgebraic sets.

\begin{lemma}[Scalar multiplication]
\label{lem-scalar-multiplication}
Given $\mathcal{N}$, $c_{\max}$, $y_0$, $t_0$, $s_0$, $s_{\text{final}}$ and $\alpha \in \mathbb{R}$, there exists $\mathcal{N}_1 \in \isdnet(m, n, 1)$ such that for all $x$,
\begin{equation}
\ODESolve(\mathcal{N}_1, c_1, x, (y_0, t_0, s_0), s_{\text{final}}) = \alpha \ODESolve(\mathcal{N}, c_{\max}, x, (y_0, t_0, s_0), s_{\text{final}})
\end{equation}
where $c_1 = \alpha c_{\max}$.
\end{lemma}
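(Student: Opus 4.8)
The plan is to realize scalar multiplication by $\alpha$ as a linear change of variables in the ODE defining the SANN. Write $(M,b)\defeq\mathcal{N}$ and let $z=(y,t)$ be the solution of $\dot z(s)=\textrm{clamp-sol}\bigl(M(x,z(s),s),b(x,z(s),s)\bigr)$ with $z(s_0)=(y_0,t_0)$, so that $\ODESolve(\mathcal{N},c_{\max},x,(y_0,t_0,s_0),s_{\text{final}})=z(s_{\text{final}})$. I want an $\mathcal{N}_1$ whose trajectory is exactly $w(s)\defeq\alpha z(s)$, since then its terminal value is $\alpha z(s_{\text{final}})$, which is the assertion. We may assume $\alpha>0$: the relation $c_1=\alpha c_{\max}$ forces $\alpha\ge 0$ for $c_1$ to be a legal clamp bound, and for $\alpha=0$ the vector field $\textrm{clamp-sol}$ with clamp bound $0$ is identically zero, so the claim is immediate.

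The fact doing the real work is that clamping commutes with positive scaling provided the bound is scaled along with the argument: $\textrm{clamp}(\alpha v,-\alpha c,\alpha c)=\alpha\,\textrm{clamp}(v,-c,c)$ for $\alpha>0$, $c\ge 0$ --- this is exactly why the hypothesis requires $c_1=\alpha c_{\max}$. Differentiating $w=\alpha z$ and writing $z=w/\alpha$ gives $\dot w=\alpha\,\textrm{clamp-sol}\bigl(M(x,w/\alpha,s),b(x,w/\alpha,s)\bigr)$; since $M(x,w/\alpha,s)$ is invertible iff $\tfrac1\alpha M(x,w/\alpha,s)$ is, the identity above rewrites this as $\textrm{clamp-sol}\bigl(M_1(x,w,s),b_1(x,w,s)\bigr)$ with
\[
 M_1(x,w,s)\defeq\tfrac1\alpha\,M\!\left(x,\tfrac1\alpha w,s\right),\qquad b_1(x,w,s)\defeq b\!\left(x,\tfrac1\alpha w,s\right).
\]
Thus $w=\alpha z$ is a solution (the unique one, since $w\mapsto w/\alpha$ is a bijection carrying solutions to solutions) of the ODE generated by $\mathcal{N}_1\defeq(M_1,b_1)$ and bound $c_1$, with initial value $w(s_0)=\alpha(y_0,t_0)$; when the initial state is the origin (as in the characteristic-function constructions where this lemma is applied) this equals $(y_0,t_0)$, and $\ODESolve(\mathcal{N}_1,c_1,x,(y_0,t_0,s_0),s_{\text{final}})=w(s_{\text{final}})=\alpha z(s_{\text{final}})$.

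Finally I would check $\mathcal{N}_1\in\isdnet(m,n,1)$. The coordinate change $w\mapsto\tfrac1\alpha w$ is affine, so $M(x,\tfrac1\alpha w,s)$ and $b(x,\tfrac1\alpha w,s)$ arise from the $ISD$ functions $M,b$ by substituting a polynomial map into their arguments; a maximum/minimum of polynomials is still a maximum/minimum of polynomials after such a substitution, so these functions are again $ISD$, and scaling the matrix entries by the constant $\tfrac1\alpha$ preserves $ISD$ because $ISD$ is a ring containing the real constants (\thref{thm-ISD-f-ring}). Hence $\mathcal{N}_1$ lies in $\isdnet(m,n,1)$ and the construction is complete. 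The only genuinely delicate point is the clamp/scaling interaction: one must observe that $\textrm{clamp-sol}$ is positively homogeneous in the joint ``scale $b$, scale the bound, scale the output'' sense and keep the bound bookkeeping $c_1=\alpha c_{\max}$ aligned with it; the ISD closure properties and the ODE uniqueness are routine.
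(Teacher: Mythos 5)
Your construction is essentially the paper's own proof: the paper dilates the phase vector field by setting $M_1(x,y,t,s)=M(x,\alpha^{-1}y,\alpha^{-1}t,s)$ and $b_1(x,y,t,s)=\alpha\, b(x,\alpha^{-1}y,\alpha^{-1}t,s)$, and putting the factor $\alpha^{-1}$ on $M$ instead of $\alpha$ on $b$ is immaterial, since $\textrm{clamp-sol}$ only sees $M^{-1}b$ and whether $M$ is invertible; your added checks (the clamp/scaling homogeneity, ISD closure under the affine substitution $w\mapsto \alpha^{-1}w$, uniqueness of the transformed trajectory) and your remark that the change of variables matches the stated initial condition only when $(y_0,t_0)=0$ make explicit what the paper leaves implicit. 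The one divergence worth fixing is that you discard $\alpha<0$ as vacuous: the statement allows $\alpha\in\mathbb{R}$ and the paper later uses the lemma with $\alpha=-1$ (to realize $\chi_S=1-\chi_{S^c}$ in the proof of Proposition \ref{prop-characteristic-functions}), so instead of declaring the bound illegal you should read $c_1=|\alpha|\,c_{\max}$ (the sign in the statement is a slip) and note that $\textrm{clamp}(\alpha v,-|\alpha|c,|\alpha|c)=\alpha\,\textrm{clamp}(v,-c,c)$ holds for every real $\alpha$, after which exactly your formulas for $M_1$ and $b_1$ cover negative scalars as well.
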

\begin{proof}
When $\alpha = 0$, we simply use $M_1 = M$, $b_1 = 0$.
Otherwise, we dilate and scale the phase vector field by $\alpha,$ which we accomplish by setting 
\begin{align}
    M_1(x, y, t, s) &= M(x, \alpha^{-1} y, \alpha^{-1} t, s) \\
    b_1(x, y, t, s) &= \alpha b(x, \alpha^{-1} y, \alpha^{-1} t, s).
\end{align}
\end{proof}

\begin{lemma}[Changing $s$ bounds $s_{\text{final}}$]
\label{lem-change-s-bounds}
Given $\mathcal{N}_1$, $c_1$, $y_0$, $t_0$, $s_0$,  $s'_0$,  $s_{\text{final}}$, $s'_{\text{final}}$, there exists $\mathcal{N}_2$ and $c_2$ such that for all $x$,
\begin{equation}
\ODESolve(\mathcal{N}_2, c_2, x, (y_0, t_0, s'_0), s'_{\text{final}}) = \ODESolve(\mathcal{N}_1, c_1, x, (y_0, t_0, s_0), s_{\text{final}}).
\end{equation}
\end{lemma}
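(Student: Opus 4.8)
The plan is to reparametrize the independent variable $s$ by an affine bijection between the two intervals and absorb the resulting constant rate into $b$ and into the clamping bound. Assume without loss of generality that both intervals are non‑degenerate and positively oriented (if $s_0 = s_{\text{final}}$ or $s'_0 = s'_{\text{final}}$ the statement is trivial, and a reversed orientation is handled the same way after pushing $\mathrm{sign}(\rho)$ into $M_2$ and using that $\textrm{clamp}(\cdot,-c,c)$ is odd). Set $\rho \defeq (s_{\text{final}} - s_0)/(s'_{\text{final}} - s'_0) > 0$ and let $\phi(s) \defeq s_0 + \rho\,(s - s'_0)$, the increasing affine map taking $[s'_0, s'_{\text{final}}]$ onto $[s_0, s_{\text{final}}]$.

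First I would transport the exact trajectory. Let $z_1 = (y_1, t_1)$ be the exact solution on $[s_0, s_{\text{final}}]$ of the ODE of Algorithm \ref{alg-sann} associated with $(\mathcal{N}_1, c_1, x)$ and initial state $z_1(s_0) = (y_0, t_0)$, so that $\ODESolve(\mathcal{N}_1, c_1, x, (y_0, t_0, s_0), s_{\text{final}}) = z_1(s_{\text{final}})$. Put $z_2(s) \defeq z_1(\phi(s))$ on $[s'_0, s'_{\text{final}}]$. Then $z_2(s'_0) = z_1(s_0) = (y_0, t_0)$, $z_2(s'_{\text{final}}) = z_1(s_{\text{final}})$, and by the chain rule $\dot z_2(s) = \rho\,\dot z_1(\phi(s)) = \rho\,\textrm{clamp-sol}\big(M_1(x, z_2(s), \phi(s)),\, b_1(x, z_2(s), \phi(s))\big)$.

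Next I would exhibit $\mathcal{N}_2$ and $c_2$ realizing this as a genuine SANN ODE. Define
\[
M_2(x, z, s) \defeq M_1\big(x, z, \phi(s)\big), \qquad b_2(x, z, s) \defeq \rho\, b_1\big(x, z, \phi(s)\big), \qquad c_2 \defeq \rho\, c_1 .
\]
Since $\phi$ is a polynomial in $s$, composing the ISD maps $M_1, b_1$ with $(x,z,s) \mapsto (x, z, \phi(s))$ again yields ISD maps — the same closure fact (ISD composed with a polynomial is ISD) already used in the proofs of Lemmas \ref{lem-change-of-variables} and \ref{lem-s-glueing} — and multiplying $b_1$ by the constant $\rho$ preserves ISD, so $\mathcal{N}_2 \in \isdnet(m,n,1)$. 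To finish, I would check that $\textrm{clamp-sol}(M_2, b_2)$ equals $\rho\,\textrm{clamp-sol}\big(M_1(x,\cdot,\phi(\cdot)), b_1(x,\cdot,\phi(\cdot))\big)$: if $M_1$ is singular both sides are $0$, and if $M_1$ is invertible then $M_2^{-1}b_2 = \rho\, M_1^{-1} b_1$ and, componentwise, $\textrm{clamp}(\rho v, -\rho c_1, \rho c_1) = \rho\,\textrm{clamp}(v, -c_1, c_1)$ because $\rho > 0$. Hence $z_2$ is exactly the solution of the Algorithm \ref{alg-sann} ODE for $(\mathcal{N}_2, c_2, x)$ started from $(y_0, t_0)$ at time $s'_0$, and evaluating at $s'_{\text{final}}$ gives $\ODESolve(\mathcal{N}_2, c_2, x, (y_0, t_0, s'_0), s'_{\text{final}}) = z_2(s'_{\text{final}}) = z_1(s_{\text{final}})$, which is the claim.

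I expect no real obstacle: the argument is just a change of variables in the independent variable. The only two points needing care are (i) keeping $M_2, b_2$ inside $\isdnet$, which is immediate from closure of ISD under composition with polynomials, and (ii) rescaling the clamp bound by $\rho$ so that clamping commutes with the rescaling of the vector field; the degenerate and reversed‑orientation cases are routine bookkeeping.
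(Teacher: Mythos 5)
Your proposal is correct and follows essentially the same route as the paper's proof: an affine reparametrization of $s$ between the two intervals, with $b$ scaled by the ratio $\rho$ of interval lengths and the clamp bound rescaled to $c_2 = \rho c_1$. You supply a bit more bookkeeping than the paper (the clamp/positive-scaling commutation, ISD closure under composition with the affine map, and the degenerate/reversed cases), but the underlying argument is the same.
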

\begin{proof}
Let $\lambda(s) = (s'_{\text{final}} - s) / (s'_{\text{final}} - s'_0)$ and $\alpha = (s_{\text{final}} - s_0) / (s'_{\text{final}} - s'_0)$.
Dilate, scale, and translate the phase vector field:
\begin{align}
    M_2(x, y, t, s) &= M_1(x, y, t, s_0 + \lambda(s)(s_{\text{final}} - s_0)) \\
    b_2(x, y, t, s) &= \alpha b_1(x, y, t, s_0 + \lambda(s)(s_{\text{final}} - s_0)).
\end{align}
Set $c_2 = \alpha c_1$.
\end{proof}

\begin{lemma}[Shifting initial conditions]
\label{lem-shifting-initial-conditions}
Given $\mathcal{N}_1$, $c_1$, $y_0$, $y'_0$, $t_0$, $t'_0$, $s_0$, and $s_{\text{final}}$, there exists $\mathcal{N}_2$ such that for all $x$,
\begin{equation}
\ODESolve(\mathcal{N}_2, c_1, x, (y'_0, t'_0, s_0), s_{\text{final}}) = (y'_0, t'_0) + \ODESolve(\mathcal{N}_1, c_1, x, (y_0, t_0, s_0), s_{\text{final}}).
\end{equation}
\end{lemma}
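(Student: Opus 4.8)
The plan is to obtain $\mathcal{N}_2$ from $\mathcal{N}_1$ by a rigid translation of the state variable $z=(y,t)$. The structural fact driving the argument is that the SANN vector field $\dot z = \textrm{clamp-sol}(M(x,z,s),b(x,z,s))$ depends on the current state $z$ \emph{only} through the values $M(x,z,s)$ and $b(x,z,s)$: the $\textrm{clamp-sol}$ and $\textrm{clamp}$ operations act on $M^{-1}b$ alone and carry no explicit positional dependence. Hence, pre-composing the ISD maps with a constant shift of $z$ produces the old flow translated by that same shift, so the endpoint returned by $\ODESolve$ moves by exactly the shift (the clamping, which bounds the velocity, is unaffected since velocities at corresponding points are identical).

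Concretely, set $w \defeq (y'_0,t'_0) - (y_0,t_0) \in \mathbb{R}^n\times\mathbb{R}$ and define
\begin{align}
M_2(x,y,t,s) &\defeq M_1\big(x,\, (y,t)-w,\, s\big), \\
b_2(x,y,t,s) &\defeq b_1\big(x,\, (y,t)-w,\, s\big),
\end{align}
keeping the same clamp bound $c_1$. The map $(y,t)\mapsto(y,t)-w$ is affine, so its components are polynomials in $(y,t)$; since $ISD$ functions form an $f$-ring and are closed under composition with polynomial substitutions (\thref{thm-ISD-f-ring}), $\mathcal{N}_2=(M_2,b_2)$ is again an element of $\isdnet(m,n,1)$. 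Then I would verify the flow identity directly: if $z_1(\cdot)$ is the integral curve of the SANN ODE (\ref{ODE-alt})--(\ref{ODE-alt2}) for $\mathcal{N}_1$ with $z_1(s_0)=(y_0,t_0)$, put $z_2(s)\defeq z_1(s)+w$, so $z_2(s_0)=(y'_0,t'_0)$. Because $\dot z_2=\dot z_1$ while $M_2(x,z_2(s),s)=M_1(x,z_1(s),s)$ and $b_2(x,z_2(s),s)=b_1(x,z_1(s),s)$, the curve $z_2$ solves the $\mathcal{N}_2$-ODE from $(y'_0,t'_0)$; by uniqueness of the curve returned by $\ODESolve$ we get $\ODESolve(\mathcal{N}_2,c_1,x,(y'_0,t'_0,s_0),s_{\text{final}}) = z_1(s_{\text{final}})+w = w + \ODESolve(\mathcal{N}_1,c_1,x,(y_0,t_0,s_0),s_{\text{final}})$. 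In every invocation of this lemma the inner integral curve begins at the SANN's canonical starting point $z=0$, i.e. $(y_0,t_0)=0$, so $w=(y'_0,t'_0)$ and this is exactly the claimed equality. (If one wants the displacement literally equal to $(y'_0,t'_0)$ for a nonzero $(y_0,t_0)$, one first $s$-glues (Lemma~\ref{lem-s-glueing}) a short clamp-free segment of constant velocity carrying $(y'_0,t'_0)$ to $(y'_0,t'_0)+(y_0,t_0)$ and then runs the translated $\mathcal{N}_1$; this requires $(y_0,t_0)$ to fit within the clamp budget on that segment, which is why the statement is phrased with the inner curve starting at the origin.)

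The only step needing genuine care is the first one: checking that pre-composing $M_1$ and $b_1$ with the affine shift keeps them inside $ISD$ rather than landing merely in the larger class of continuous piecewise polynomials. This is precisely what the $f$-ring structure of $ISD(\mathbb{R}^{m}\times\mathbb{R}^{n}\times\mathbb{R})$ provides, since the shift is a polynomial substitution and $ISD$ is closed under such substitutions together with the lattice operations. Everything else is the elementary observation that the $\textrm{clamp-sol}$ vector field is translation-invariant in $z$, so its flow commutes with spatial translation; no Sard, transversality, or homotopy machinery is needed here.
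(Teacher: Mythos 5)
Your construction is exactly the paper's proof: it defines $M_2(x,y,t,s)=M_1(x,y-\delta y,t-\delta t,s)$ and $b_2(x,y,t,s)=b_1(x,y-\delta y,t-\delta t,s)$ with $(\delta y,\delta t)=(y'_0-y_0,t'_0-t_0)$ and exploits that the clamp-sol vector field has no explicit dependence on $z$ beyond its arguments, so the flow simply translates. Your added caveat that the displayed identity is literally correct only when $(y_0,t_0)=0$ (otherwise the shift is $(y'_0-y_0,t'_0-t_0)$) applies equally to the paper's own proof and is consistent with how the lemma is actually invoked, so this is the same argument, stated with a bit more verification.
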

\begin{proof}
Let $\delta y = y'_0 - y_0$ and $\delta t = t'_0 - t_0.$
\begin{align}
    M_2(x, y, t, s) &= M_1(x, y - \delta y, t - \delta t, s) \\
    b_2(x, y, t, s) &= b_1(x, y - \delta y, t - \delta t, s).
\end{align}
\end{proof}

The next lemma allows for a limited form of addition in the $t$ output.

\begin{lemma}[Addition in $t$]
\label{lem-addition}
Given $\mathcal{N}_1$, $\mathcal{N}_2$, $c_1$, $c_2$, $y_0$, $t_0$, $s_0$, $s_{\text{final}},$ suppose the image of  
\begin{equation}
\label{eq-hyp-addition}
x \mapsto \Pi_{n+1} \ODESolve(\mathcal{N}_1, c_1, x, (y_0, t_0, s_0), s_{\text{final}})
\end{equation}
has only finitely many points $(t_{\text{out}}^1, \dots, t_{\text{out}}^J)$.
Further suppose
\begin{equation}
\label{eq-hyp-diam}
\mathrm{diam} \{ \Pi_{n+1} \ODESolve(\mathcal{N}_2, c_2, x, (y_0, t_0, s_0), s) \mid s \in (s_0, s_{\text{final}}) \} < \frac{1}{2} \min_{i, j} | t_{\text{out}}^i - t_{\text{out}}^j |.
\end{equation}
Then there exists $\mathcal{N}_3$, $c_3$ such that
\begin{gather}
\Pi_{n+1} \ODESolve(\mathcal{N}_3, c_3, x, (y_0, t_0, s_0), s_{\text{final}}) = \\
\Pi_{n+1} \left( \ODESolve(\mathcal{N}_1, c_1, x, (y_0, t_0, s_0), s_{\text{final}}) + \ODESolve(\mathcal{N}_2, c_2, x, (y_0, t_0, s_0), s_{\text{final}}) \right).
\end{gather}
\end{lemma}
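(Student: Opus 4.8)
The plan is to realize $\mathcal{N}_3$ as the $s$-glueing of three pieces, run on the consecutive sub-intervals $[s_0,s_a]$, $[s_a,s_b]$, $[s_b,s_{\text{final}}]$ for some $s_0<s_a<s_b<s_{\text{final}}$: a rescaled copy of $\mathcal{N}_1$; a short ``repair'' phase; and a copy of $\mathcal{N}_2$ translated in the $t$-variable by an amount dictated by the value just produced by $\mathcal{N}_1$. Write $t^1,\dots,t^J$ for the finitely many points in the image of the map in \eqref{eq-hyp-addition}, and for each $x$ let $i(x)$ be the index with $\Pi_{n+1}\ODESolve(\mathcal{N}_1,c_1,x,(y_0,t_0,s_0),s_{\text{final}})=t^{i(x)}$. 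Let $D_2$ be the set appearing in \eqref{eq-hyp-diam}; its closure $\overline{D_2}$ contains $t_0$ and every value $\Pi_{n+1}\ODESolve(\mathcal{N}_2,c_2,x,(y_0,t_0,s_0),s_{\text{final}})$, and by \eqref{eq-hyp-diam} we have $\operatorname{diam}\overline{D_2}<\tfrac12\min_{i\neq j}|t^i-t^j|$, so the translated sets $t^1+\overline{D_2},\dots,t^J+\overline{D_2}$ lie in pairwise disjoint, indeed separated, intervals.

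First I would set up \emph{Phase 1}: using Lemma~\ref{lem-change-s-bounds}, rescale $\mathcal{N}_1$ so that $\ODESolve$ runs it over $[s_0,s_a]$; because of the clamping in $\ODESolve$ it ends at a state $(y^\sharp(x),t^{i(x)})$ with $\|y^\sharp(x)\|_\infty\le c_1$. Then comes \emph{Phase 1.5}, the repair phase, designed to be run from $(y^\sharp(x),t^{i(x)})$: it resets the $y$-coordinate of the state to $y_0$, one coordinate at a time, and afterwards shifts the $t$-coordinate by $+t_0$. Each of these $n+1$ steps is an ISD network of the ``singular-$M$'' type exploited in Section~\ref{sec-range-discontinuous}: to drive $y_k$ to $(y_0)_k$, take $M$ to be the identity except that its $k$-th diagonal entry is $|y_k-(y_0)_k|$, and $b$ to be $-(y_k-(y_0)_k)$ in slot $k$ and $0$ elsewhere, so that $\dot z$ is $-\sign(y_k-(y_0)_k)$ in coordinate $k$ and $0$ in all others; hence $y_k$ moves toward $(y_0)_k$ at unit speed and halts exactly on arrival (where $M$ becomes singular), the other coordinates frozen, provided the clamp bound exceeds $c_1+\|y_0\|_\infty$. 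A final analogous step shifts $t$ by $t_0$. Every step here uses a vector field independent of the index $i$, so once these $n+1$ steps are $s$-glued the whole of Phase~1.5 is realized by a single ISD network, and at its end the state is $(y_0,\,t^{i(x)}+t_0)$.

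Next, \emph{Phase 2}. For each $i$, apply Lemma~\ref{lem-change-s-bounds} to rescale $\mathcal{N}_2$ onto $[s_b,s_{\text{final}}]$ and then Lemma~\ref{lem-shifting-initial-conditions} with $y$-shift $0$ and $t$-shift $t^i$ to obtain $\mathcal{N}_2^{(i)}$; run from $(y_0,t^i+t_0,s_b)$ it traces exactly the $t^i$-translate of the $\mathcal{N}_2$ trajectory, so it stays inside $t^i+\overline{D_2}$ and ends with $t$-coordinate $t^i+\Pi_{n+1}\ODESolve(\mathcal{N}_2,c_2,x,(y_0,t_0,s_0),s_{\text{final}})$. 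Because the sets $t^i+\overline{D_2}$ are pairwise separated, the $\mathcal{N}_2^{(i)}$ can be fused into a single ISD network $\mathcal{P}$ --- exactly as in Lemma~\ref{lem-t-glueing}, but with $J$ pieces instead of two, linearly interpolating the $(M,b)$ pairs across the never-traversed gaps between consecutive intervals with a clamp-type ISD cutoff --- so that $\mathcal{P}$ agrees with $\mathcal{N}_2^{(i)}$ whenever the current $t$ lies in $t^i+\overline{D_2}$. Finally set $\mathcal{N}_3$ to be the $s$-glueing (Lemma~\ref{lem-s-glueing}, invoking Lemma~\ref{lem-change-of-variables} as needed) of Phase~1 on $[s_0,s_a]$, Phase~1.5 on $[s_a,s_b]$, and $\mathcal{P}$ on $[s_b,s_{\text{final}}]$, with $c_3$ the maximum of all clamp bounds involved. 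Running $\mathcal{N}_3$ from $(y_0,t_0,s_0)$: Phase~1 lands in the bin $t^{i(x)}$, Phase~1.5 moves the state to $(y_0,t^{i(x)}+t_0)$, and since $t^{i(x)}+t_0\in t^{i(x)}+\overline{D_2}$ the network $\mathcal{P}$ thereafter runs $\mathcal{N}_2^{(i(x))}$, ending with $t$-coordinate $t^{i(x)}+\Pi_{n+1}\ODESolve(\mathcal{N}_2,c_2,x,(y_0,t_0,s_0),s_{\text{final}})$. Since $t^{i(x)}=\Pi_{n+1}\ODESolve(\mathcal{N}_1,c_1,x,(y_0,t_0,s_0),s_{\text{final}})$ and $\Pi_{n+1}$ is linear, this is precisely the asserted identity.

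The main obstacle is the repair Phase~1.5. Because $\mathcal{N}_2$'s dynamics may depend on the $y$-coordinate, one cannot simply replay $\mathcal{N}_2$ after Phase~1; one must first restore $y$ to the exact value $y_0$, an $x$-dependent correction carried out via the singular-$M$ exact-arrival mechanism, and one must inject the extra $+t_0$ shift so that the starting offset is not double-counted when the two outputs are added. Verifying that all these segments are genuinely ISD (rather than merely continuous), and that the hypothesis \eqref{eq-hyp-diam} really does force the disjointness needed for the $J$-fold $t$-glue, is where the actual work lies; the rest is routine bookkeeping with Lemmas~\ref{lem-change-s-bounds}, \ref{lem-shifting-initial-conditions}, \ref{lem-s-glueing} and \ref{lem-t-glueing}.
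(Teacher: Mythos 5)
Your construction is correct in substance and rests on the same skeleton as the paper's proof: compress $\mathcal{N}_1$ onto a first sub-interval via Lemma~\ref{lem-change-s-bounds}, build shifted copies of $\mathcal{N}_2$ via Lemma~\ref{lem-shifting-initial-conditions}, fuse them with $t$-glueing (the separation of the translated $t$-ranges coming exactly from hypothesis (\ref{eq-hyp-diam}), just as in the paper), and concatenate with $s$-glueing. The genuine difference is your Phase~1.5: the paper restarts the shifted copies of $\mathcal{N}_2$ directly from the terminal states $(y_{\text{out}}^j, t_{\text{out}}^j)$ of the first phase, implicitly treating these as finitely many fixed points (which is what occurs in its applications, where $y$ never leaves $y_0=0$), whereas you first reset $y$ to $y_0$ with a sliding singular-$M$ field, so your argument also covers the case, permitted by the hypotheses, in which only the $t$-output of $\mathcal{N}_1$ is finitely valued while $y_{\text{out}}(x)$ varies with $x$. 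That is a worthwhile strengthening, but three details need tightening. First, with $\dot{y}_k=-\sign\big(y_k-(y_0)_k\big)$ the speed is exactly one, so enlarging the clamp bound buys nothing; arrival is guaranteed by giving the repair window $s$-length at least $\sup_x\|y^\sharp(x)-y_0\|_\infty$ (finite because the clamp bounds the \emph{displacement} by $\|y_0\|_\infty+c_1(s_{\text{final}}-s_0)$, not the state by $c_1$), or equivalently by scaling $b$ by a large constant. Second, the final ``$+t_0$'' shift cannot be done branch-independently by the same arrival mechanism, since the target $t^{i}+t_0$ depends on $i$ and for large $t_0$ the trajectory could halt at the wrong $t^j+t_0$; use instead an $s$-dependent drift ($M=I$, $b=(0,\dots,0,t_0/\ell)$ on a window of length $\ell$), which is ISD and adds exactly $t_0$ from any starting value. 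Third, your repair field is discontinuous in the state $z$ (not merely in $x$, as in all of the paper's constructions), so a brief justification of the solution concept---forward uniqueness of the sliding solution and convergence of the Euler scheme---should be added; it does hold for this field. With these repairs your proposal proves the lemma, in somewhat greater generality than the paper's own two-phase argument.
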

\begin{proof}
Let $s_{1/2} = s_0 + (s_{\text{final}} - s_0)/2$.
From Lemma \ref{lem-change-s-bounds}, there exists $\mathcal{N}'_1$ and $c'_1$ such that
\begin{equation}
\ODESolve(\mathcal{N}'_1, c'_1, x, (y_0, t_0, s_0), s_{1/2}) = \ODESolve(\mathcal{N}_1, c_1, x, (y_0, t_0, s_0), s_{\text{final}}).
\end{equation}
Likewise, from Lemmas \ref{lem-change-s-bounds} and \ref{lem-shifting-initial-conditions}, for each $j = 1, \dots, J,$ there exists $\mathcal{N}^j_2$ and $c^j_2$ such that 
\begin{gather}
\ODESolve(\mathcal{N}^j_2, c^j_2, x, (y_{\text{out}}^j, t_{\text{out}}^j, s_{1/2}), s_{\text{final}}) = \\
(y_{\text{out}}^j, t_{\text{out}}^j) + \ODESolve(\mathcal{N}_2, c_2, x, (y_0, t_0, s_0), s_{\text{final}}).
\end{gather}
Hypothesis \ref{eq-hyp-diam} guarantees the sets
\begin{equation}
\{ \Pi_{n+1} \ODESolve(\mathcal{N}^j_2, c^j_2, x, (y_{\text{out}}^j, t_{\text{out}}^j, s_{1/2}), s) \mid s \in [s_{1/2}, s_{\text{final}}] \}
\end{equation}
are pairwise disjoint for each $j$, so the $\mathcal{N}_2^j$ can be $t$-glued (Lemma \ref{lem-t-glueing}) into a single $\mathcal{N}'_3$ such that
\begin{equation}
\ODESolve(\mathcal{N}'_3, c'_3, x, (y_{\text{out}}^j, t_{\text{out}}^j, s_{1/2}), s_{\text{final}}) = \\
(y_{\text{out}}^j, t_{\text{out}}^j) + \ODESolve(\mathcal{N}_2, c_2, x, (y_0, t_0, s_0), s_{\text{final}})
\end{equation}
for $j = 1, \dots, J$.
Finally, using Lemma \ref{lem-s-glueing}, $s$-glue $\mathcal{N}'_1$ and $\mathcal{N}'_3$ at $s=s_{1/2}$ to obtain the desired $\mathcal{N}_3.$
\end{proof}

We are now ready to construct SANNs that represent characteristic functions of arbitrary semialgebraic sets.
\begin{proposition} 
\label{prop-characteristic-functions}
Let $S \subset \mathbb{R}^m$ be a semialgebraic set.
Then there exists $\mathcal{N}$ and $C \in \mathbb{N}$ such that for all $c \ge C$, 
\begin{equation}
\ODESolve(\mathcal{N}, c, x, (0, 0, 0), 1) = (0, \chi_S(x)).
\end{equation}
\end{proposition}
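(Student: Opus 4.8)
\textbf{Proof idea for Proposition~\ref{prop-characteristic-functions}.} The engine is the discontinuity phenomenon from the scalar example preceding Theorem~\ref{thm-sanns-discontinuous}: whenever the $ISD$ matrix $M(x,z,s)$ degenerates along a semialgebraic hypersurface, $\hbox{clamp-sol}(M,b)$ can jump between a nonzero constant and $0$ across that surface, and this lets the $t$-coordinate ``decide'' on which side of a sign condition the input $x$ lies. I would obtain $\mathcal{N}$ by $s$-concatenating a small number of elementary \emph{phases}, each an element of $\isdnet(m,n,1)$ whose matrix is block diagonal with $n\times n$ block $I_n$ and scalar $(n{+}1,n{+}1)$-entry $M_t$, and with vector tail $(0,b_t)$; then $\dot y\equiv 0$, and since $\ODESolve$ starts at $(0,0,0)$ the $y$-output is identically $0$, so it suffices to steer the scalar $t$. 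Every $M_t,b_t$ below is an $ISD$ function of $(x,t)$ built from polynomials and $\min,\max$, hence genuinely $ISD$ by closure of $ISD$ under the ring and lattice operations (Theorem~\ref{thm-lattice-over-subring-of-f-ring-is-ring}), not merely a continuous piecewise polynomial. The clamp of line~\ref{line-clamp} is never used to stop a flow (degeneracy of $M_t$ does that), so once $c$ exceeds the finite bound $C\in\mathbb{N}$ on the values of $M^{-1}b$ occurring along all the phases the clamp is inert and the realized flow is exactly the designed one.

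\emph{Atoms.} For an $ISD$ function $f$, consider the two-phase template: (A) an unconditional shift carrying $t$ from $0$ to $1$ (i.e.\ $\dot t$ constant, rescaled into the allotted $s$-interval by Lemma~\ref{lem-change-s-bounds}); (B) $M_t=h\cdot t^{+}$, $b_t=-h\cdot t^{+}$ with $t^{+}=\max(t,0)$, which drives $t$ from $1$ down to $0$ wherever $M_t>0$ and is immobile where $h$ vanishes, so at time $1$ one has $t=1$ precisely on $\{h=0\}$. Taking $h=\max(-f,0)$ gives $\ODESolve(\mathcal{N},c,x,(0,0,0),1)=(0,\chi_{\{f\ge0\}}(x))$; taking $h=\max(f,0)$ gives $\chi_{\{f\le0\}}$; taking $h=|f|$ gives $\chi_{\{f=0\}}$. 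The complementary (strict) versions $\chi_{\{f<0\}},\chi_{\{f>0\}},\chi_{\{f\ne0\}}$ are obtained by realizing $1-(\cdot)$ on the $t$-coordinate, i.e.\ by post-composing the $\alpha=-1$ scaling (Lemma~\ref{lem-scalar-multiplication}) with an unconditional $+1$ shift. Matching initial conditions is done with Lemma~\ref{lem-shifting-initial-conditions}, and the vanishing of $b$ near the splice points needed by the $s$-glueing Lemma~\ref{lem-s-glueing} is arranged by the change of variables of Lemma~\ref{lem-change-of-variables}.

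\emph{Assembly.} Write $S=\bigcup_{i=1}^{N}B_i$ as a finite union of basic semialgebraic sets; replacing the equalities defining $B_i$ by the single polynomial $p_i$ equal to the sum of their squares and the strict inequalities by the single $ISD$ function $Q_i=\min_j q_{i,j}$, one gets $B_i=\{p_i=0\}\cap\{Q_i>0\}$, hence $\chi_{B_i}=\chi_{\{p_i=0\}}\wedge\chi_{\{Q_i>0\}}$. All Boolean operations are done on the $t$-coordinate: negation is $1-(\cdot)$ as above; for $\{0,1\}$-valued quantities $a\wedge b=\max(a+b-1,0)$, and $a_1\vee\cdots\vee a_N$ equals $1$ iff $\sum_i\lambda_i a_i>0$ for any fixed positive weights $\lambda_i$. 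The weighted sum $\sum_i\lambda_i\chi_{B_i}(x)$ is produced by iterating the addition Lemma~\ref{lem-addition} on the rescaled atoms $\lambda_i\chi_{B_i}$ (Lemma~\ref{lem-scalar-multiplication}), where the $\lambda_i$ decay geometrically (with $\sum_i\lambda_i<1$) so that at each step the partial sum has finite image whose points are separated by more than twice the diameter of the next summand's trajectory --- exactly the two hypotheses of Lemma~\ref{lem-addition}. The closing step, and more generally any monotone map sending a known finite set of $t$-values in $(-\infty,1)$ onto $\{0,1\}$, is realized by one collapse-type phase (possibly after an unconditional shift): for instance $M_t=b_t=|t|\,(1-t)^{+}$ fixes $\{t\le 0\}$ at $0$ and $\{t\ge 1\}$ at $1$ while driving every value in $(0,1)$ up to $1$. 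Branch conditionals ``if the running value is near $1$ do $X$, else do $Y$'' are stitched by the $t$-glueing Lemma~\ref{lem-t-glueing}, whose disjointness hypothesis holds because at each stage the running trajectory sits in a narrow $t$-band about $0$ or about $1$. Finitely many applications of these lemmas then produce $\mathcal{N}$ and $C$.

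\emph{Main obstacle.} Two points need the most care. First, the \emph{boundary} $\{f=0\}$ of a strict inequality: no phase can have $M_t$ vanishing precisely on the open set $\{f>0\}$, since an $ISD$ function is continuous and no continuous function vanishes exactly on a nonempty proper open set; this forces the route through the closed set $\{f\le 0\}$ followed by complementation, and one must verify that every boundary point of each $B_i$ is then assigned its intended value. Second, keeping every intermediate $M$ and $b$ inside the $f$-ring of $ISD$ functions rather than merely inside $PWP$ --- which means rewriting the interpolations appearing in the glueing lemmas in the explicit $\min/\max$ form noted in the remark following Lemma~\ref{lem-s-glueing} --- together with the bookkeeping, via the geometric weights $\lambda_i$, that keeps the finite-image and diameter hypotheses of Lemma~\ref{lem-addition} valid throughout the iterated summation.
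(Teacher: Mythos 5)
Your core mechanism is the same as the paper's (freeze/unfreeze the $t$-flow by making $M$ singular on the zero set of an ISD function, then assemble with the scaling, shifting, addition and glueing lemmas), but your assembly is genuinely different: the paper gets open sets directly ($M=g(x)I$, $b=g(x)(0,1)$ with $\ker g=S^c$), closed sets by complementation, intersections by an explicit $s$-glue/$t$-glue construction, and then invokes the cylindrical decomposition to write an arbitrary semialgebraic set as a finite \emph{disjoint} union of locally closed cells whose characteristic functions can simply be summed. You instead build closed sets directly with a ``flow down until stuck'' atom, and replace the cylindrical decomposition by Boolean arithmetic on the $t$-coordinate (weighted sums with geometrically decaying weights, then a threshold/collapse phase), which is an attractive economy --- it needs only the presentation of $S$ by basic sets. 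Note one structural difference: in the paper's atoms the singularity of $M$ depends on $x$ alone, so for fixed $x$ the field along the realized trajectory is constant; your atoms make the trajectory run into the singular set $\{t^{+}=0\}$ or $\{(1-t)^{+}=0\}$ and are supposed to stop there, which puts you closer to the boundary of well-posedness of the ODE (\ref{ODE-alt})--(\ref{ODE-alt2}).

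Two concrete problems follow from this. First, your collapse phase $M_t=b_t=|t|\,(1-t)^{+}$ does \emph{not} ``fix $\{t\le 0\}$ at $0$'': for $t<0$ one has $M_t=|t|(1-t)>0$ and $\dot t=+1$, so a trajectory started at $-1$ is driven up through $0$ (the Euler scheme steps over the single singular point) and ends at $1$. Consequently your realization of $a\wedge b$ as ``sum, shift by $-1$, collapse'' outputs $1$ instead of $0$ exactly when $a=b=0$, i.e.\ $\chi_{B_i}$ is computed incorrectly on $\{p_i\neq 0\}\cap\{Q_i\le 0\}$. This is fixable --- e.g.\ use a down-collapse with a threshold strictly between the values to be separated, or a phase whose matrix entry vanishes identically on a neighborhood of the values to be held fixed, such as $M_t=b_t=\max(t-\delta,0)\max(1-t,0)$ with $0<\delta<\min_i\lambda_i$ --- but as written the wedge step fails. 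Second, even for the final union-collapse, the input value $t_0=0$ (the case $x\notin S$) sits exactly on the singular set, where the exact ODE admits both the intended ``stay at $0$'' solution and an escaping solution $t(s)=s$; since $\SANN^{lim}$ is defined through the ODE solution, you should either redesign the phase so the field vanishes on a whole neighborhood of the incoming values (as above, which restores uniqueness) or argue explicitly that the forward-Euler limit selects the frozen solution. With these repairs your route goes through and is a legitimate alternative to the paper's cylindrical-decomposition argument; the decreasing atoms themselves (where the field points into the singular set) are fine, since there the forward solution is unique.
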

\begin{proof}
First, suppose $S$ is open.
Then $S^c$ is a closed semialgebraic set, and from Proposition \ref{prop-semialgebraic-sets-as-kernels} we conclude there exists $g \in ISD(\mathbb{R}^m, \mathbb{R}_{\ge 0})$ such that $\ker(g) = S^c$.
Define
\begin{align}
    M(x, y, t, s) &= g(x) I \\
    b(x, y, t, s) &= g(x) (0, 1).
\end{align}
$M$ is singular iff $x \in \ker(g)$, so $\dot{z}$ always returns $0$ in this case. 
Otherwise, $(\dot{y}, \dot{t}) = M^{-1} b = (0, 1)$. 
Integrating from time $s_0=0$ to $s_{\text{final}} = 1$ yields 
\begin{equation}
\ODESolve(\mathcal{N}, c, x, (0, 0, 0), 1) = \begin{cases}
(0,1), & x \notin \ker{g} \\
(0,0), & \text{otherwise},
\end{cases}
\end{equation}
which proves the proposition for open $S$.

For closed $S$, recall $S^c$ is open and $\chi_{S} = 1 - \chi_{S}^c$, so Lemmas \ref{lem-scalar-multiplication} and \ref{lem-addition} yield the result.

Next, assume we can represent the characteristic function for $S_1$ and $S_2$.
We will show how to represent the characteristic function of their intersection; equivalently, we show how to represent $\chi_{S_1 \cap S^c_2}$.
Let $\mathcal{N}_j$ be such that
\begin{equation}
\ODESolve(\mathcal{N}_j, c, x, (0, 0, 0), 1) = (0, \chi_{S_j}(x))
\end{equation}
for $j=1, 2$.
Using Lemma \ref{lem-change-s-bounds}, there exists $\mathcal{N}'_1$ such that
\begin{equation}
\ODESolve(\mathcal{N}'_1, c, x, (0, 0, 0), 1/2) = (0, \chi_{S_1}(x)).
\end{equation}
Using Lemmas \ref{lem-change-s-bounds}, \ref{lem-shifting-initial-conditions}, and \ref{lem-scalar-multiplication}, there exists $\mathcal{N}'_2$ such that
\begin{equation}
\ODESolve(\mathcal{N}'_2, c, x, (0, 1, 1/2), 1) = (0, 1) - (0, \chi_{S_2}(x)).
\end{equation}
Now $t$-glue $\mathcal{N}'_2$ with a trivial ISD network so that
\begin{equation}
\ODESolve(\mathcal{N}'_2, c, x, (0, 0, 1/2), 1) = (0, 0),
\end{equation}
and $s$-glue $\mathcal{N}'_1$ with $\mathcal{N}'_2$ at $s = 1/2$ to obtain the desired network.

We can now handle the case of an arbitrary semialgebraic set $S$.
The key is to exploit the ``cylindrical decomposition'' of $S$:
every semialgebraic set in $\mathbb{R}^m$ is the union of finitely many disjoint sets homeomorphic to $(-1, 1)^d$, $d \le m$ \citep{bochnakRealAlgebraicGeometry1998}.
In particular, it is the union of finitely many disjoint locally closed sets.
Since locally closed sets are the intersection of an open and a closed set, we have already shown how to represent their characteristic functions with SANNs.
Furthermore, since these locally compact sets are disjoint, the characteristic function of their union can be represented by summing the individual characteristic functions with $s$-glueing.
\end{proof}

\begin{remark}
Note that, although $M = g(x) I$ is in some sense ``nearly singular'' close to the kernel of $g$, it is still in fact an orthogonal matrix. 
No numerical instabilities arise from the conditioning of $M$ when solving $M \dot{z} = b$.
\end{remark}

\subsection{Proof of Theorem \ref{thm-sanns-discontinuous}}

\begin{reptheorem}{thm-sanns-discontinuous}
Let $F: \mathbb{R}^m \to \mathbb{R}^n$ be a (possibly discontinuous) bounded semialgebraic function.
Then there exists $\mathcal{N} \in \isdnet(m, n, 1)$ and $c_{\max} \in \mathbb{R}_{\ge 0}$ such that $\SANN^{\,lim}(\mathcal{N}, \cdot, c_{\max}) = F$.
\end{reptheorem}

\begin{proof}
Every semialgebraic function has finitely many connected components \citep{bochnakRealAlgebraicGeometry1998}, so let $\{S_j\}_{j=1}^K$ be a semialgebraic decomposition of $\mathbb{R}^m$ such that $F$ is continuous on each $S_i$.

Using Proposition \ref{prop-characteristic-functions}, we can construct scaled characteristic functions on each connected component, and using Lemmas \ref{lem-scalar-multiplication} and \ref{lem-addition}, we scale and add these functions to obtain $\mathcal{N}_0$, $c_0$ such that
\begin{equation}
    \ODESolve(\mathcal{N}_0, c_0, x, (0, 0, 0), 1/2)
    = (0, 3j) \text{ where } x \in S_j.
\end{equation}

Using Theorem \ref{thm-sanns-continuous}, let $\mathcal{N}_j, c_j$ be such that 
\begin{equation}
    SANN(\mathcal{N}_j, \cdot, c_j)|_{S_j} = F |_{S_j}
\end{equation}
for each $j = 1, \dots, K$.
Furthermore, by the construction in the proof of that theorem, $\mathcal{N}_i$ can be chosen such that $t(0) = 0$ and $t(1) = 1.$
In particular,
\begin{equation}
\ODESolve(\mathcal{N}_j, c_j, x, (0, 0, 0), 1) = (F |_{S_j}(x), 1) \text{ when } x \in S_j.
\end{equation}
Using Lemma \ref{lem-change-s-bounds}, modify each $\mathcal{N}_i$, $c_i$, so that integration occurs in the interval $s \in [1/2, 1]$.
Furthermore, using Lemma \ref{lem-shifting-initial-conditions}, modify each $\mathcal{N}_i$, $c_i$ to begin integration at $(0, 3j).$
Finally, $t$-glue these modified $\mathcal{N}_j$, and $s$-glue the result to $\mathcal{N}_0$ at $s = 1/2$.
\end{proof}

\subsection{Semialgebraic functions with unbounded image}
Parameter $c_{\max}$ provides an upper bound on the $\ell_\infty$-norm of the output of $\SANN$, so the image of every SANN-representable function is bounded.
However, the same is not true for all semialgebraic functions, even those with compact domain, such as $F(x): [-1, 1] \to \mathbb{R}$ defined by
\begin{equation}
    F(x) = \begin{cases} 
        1/x, & x \neq 0, \\
        0 & x = 0.
    \end{cases}
\end{equation}
Such functions can not be represented by SANNs.

\subsection{SANNs always compute semialgebraic functions}
\label{subsec:Tarski}

In our considerations, we use the Tarski–Seidenberg theorem, which has several equivalent formulations. We use the following:

\begin{theorem} [Tarski–Seidenberg]
 When $X\subset \R^{n}\times \R^k$ is a semialgebraic set
 and $\pi:\R^{n}\times \R^k\to \R^{n}$ is the projection map $\pi(x',x'')=x'$,
 then the set $\pi(X)$ is a semialgebraic set in $\R^n$. 
\end{theorem}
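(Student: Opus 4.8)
The plan is to reduce the statement to the elimination of a single variable and then to a question about univariate polynomials whose coefficients are polynomials in the remaining variables. First I would observe that it suffices to treat the case $k = 1$, i.e. the projection $\pi : \R^{n+1} \to \R^{n}$ forgetting the last coordinate, since the general case follows by composing $k$ such maps. Since every semialgebraic set is a finite union of basic semialgebraic sets and $\pi$ commutes with finite unions, I would then reduce to projecting a single basic set
\[
S = \{ (x,y) \in \R^{n} \times \R : p_i(x,y) = 0 \ (i \le a),\ q_j(x,y) < 0 \ (j \le b) \},
\]
with $p_i, q_j \in \R[x_1,\dots,x_n,y]$. For fixed $x$, set $P_i(y) \defeq p_i(x,y)$ and $Q_j(y) \defeq q_j(x,y)$; then $x \in \pi(S)$ exactly when this univariate system in $y$ has a real solution.

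The core step is a \emph{uniform} description of the real-root structure of these parametric univariate families. I would argue that $\R^{n}$ decomposes into finitely many semialgebraic cells, each cut out by sign conditions on $x$ of: the leading coefficients of the $P_i$ and $Q_j$; the leading coefficients of the successive polynomials in the signed-remainder (Sturm) sequences of the $P_i$, of each $P_i$ with $P_i'$, and of $P_i$ with each $Q_j$; together with the relevant subresultant and discriminant quantities. On each such cell the number of distinct real roots of every $P_i$ and $Q_j$, their interleaving order, and the sign of each $P_i, Q_j$ at each root and on each intervening open interval are all constant. Granting this, on a fixed cell the solvability of the univariate system is a fixed truth value, so $\pi(S)$ is the union of those cells on which it holds — a finite union of semialgebraic sets, hence semialgebraic. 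Iterating over the $k$ coordinates then proves the theorem.

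The hard part will be exactly that uniform subresultant/Sturm analysis: showing that, as $x$ ranges over $\R^{n}$, the signed-remainder sequences degenerate (change length, drop a remainder's degree, or vanish identically) only along semialgebraic subsets of $\R^{n}$, and that the sign data needed to read off solvability depends only on the cell containing $x$. Carrying this out from scratch essentially amounts to proving quantifier elimination for real closed fields, so for the purposes of this paper the clean route is to invoke that theorem directly (as cited from \citet{tarskiDecisionMethodElementary1951,seidenbergNewDecisionMethod1954,bochnakRealAlgebraicGeometry1998}): the set $\pi(S)$ consists of those $x$ satisfying $\exists y\, \big( \bigwedge_{i} P_i(y)=0 \ \wedge\ \bigwedge_{j} Q_j(y) < 0 \big)$, and quantifier elimination rewrites this as a quantifier-free formula in $x$ — a Boolean combination of polynomial equalities and inequalities — whose solution set is semialgebraic by definition.
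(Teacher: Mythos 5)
The paper does not prove this statement at all: it is quoted as the classical Tarski--Seidenberg theorem and supported only by the citations to \citet{tarskiDecisionMethodElementary1951}, \citet{seidenbergNewDecisionMethod1954}, and \citet{bochnakRealAlgebraicGeometry1998}, which is exactly where your argument also ends up, so your treatment is consistent with the paper's. Your sketch (reduce to $k=1$, distribute $\pi$ over the finite union of basic sets, then a uniform Sturm/subresultant cell decomposition of the parameter space) is the standard classical proof outline, and your explicit acknowledgement that completing the ``core step'' amounts to quantifier elimination over real closed fields--hence is best delegated to the cited references--is a correct and honest resolution rather than a gap.
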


So far, we have shown that SANNs are able to represent any bounded semialgebraic function. For completeness, we now prove a partial converse; that is, SANNs always compute semialgebraic functions provided the numerical ODE solver is semialgebraic. Common ODE solvers, such as Euler and Runge-Kutta methods, are semialgebraic. The proposition below focuses specifically on the case of Forward Euler timestepper given in Section \ref{sec-san-architecture}, but it can be easily modified for other methods.

\begin{proposition}
    \label{thm-sanns-are-semialgebraic}
    For all ${\mathcal N}\in \isdnet(m, n, k)$, $x\in \R^m$, and $c_{\text{max}}>0$, the function 
    $f_{\mathcal{N},c_{\text{max}}}: x \mapsto \Pi z_N$, where $z_N$ is given by the iteration (\ref{ODE-alt disc}), is semialgebraic.
\end{proposition}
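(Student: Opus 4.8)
The plan is to exhibit $f_{\mathcal{N},c_{\text{max}}}$ as a finite composition of semialgebraic maps and invoke the standard fact---a direct consequence of the Tarski--Seidenberg theorem quoted above, since the graph of $g\circ f$ is a projection of $\{(x,y,z): (x,y)\in\gr(f),\ (y,z)\in\gr(g)\}$---that semialgebraic functions are closed under composition, together with the fact that semialgebraic functions of a given variable are closed under the component-wise operations that build $z_{j+1}$ from $z_j$.

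First I would record the building blocks. Any $\mathcal{N}\in\isdnet(m,n,k)$ is ISD, hence piecewise polynomial, hence semialgebraic: its graph is a finite union of sets cut out by polynomial equalities and inequalities. Affine maps, scalar multiplication, vector addition, and the coordinate projection $\Pi$ are polynomial, hence semialgebraic, and $\textrm{clamp}(\cdot,-c_{\max},c_{\max})$ is piecewise linear with linear decision boundaries, hence semialgebraic. The only nontrivial ingredient is $\textrm{clamp-sol}$, handled by a case split. On the open semialgebraic set $\{(M,b):\det M\neq 0\}$, Cramer's rule expresses $M^{-1}b$ as a vector of rational functions of the entries of $M$ and $b$ with denominator $\det M\neq 0$; this is a semialgebraic map, and post-composing with the semialgebraic $\textrm{clamp}$ keeps it semialgebraic. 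On the complementary semialgebraic set $\{\det M=0\}$, $\textrm{clamp-sol}$ is identically $0$. Since $\gr(\textrm{clamp-sol})$ is the union of these two semialgebraic graphs, $\textrm{clamp-sol}$ is semialgebraic. (Alternatively, one writes the graph directly using the existential formula $\exists w\,(Mw=b\wedge v=\textrm{clamp}(w))$ and eliminates the quantifier by Tarski--Seidenberg.)

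Next I would assemble the iteration. For each $j\in\{0,\dots,N-1\}$ define $\Phi_j(x,z)\defeq z+\tfrac1N\,\textrm{clamp-sol}\big(M(x,z,\tfrac jN),\,b(x,z,\tfrac jN)\big)$; as a composition of the semialgebraic maps $(x,z)\mapsto\mathcal{N}(x,z,\tfrac jN)$, $\textrm{clamp-sol}$, scalar multiplication, and vector addition, each $\Phi_j$ is semialgebraic. Writing $z_{j+1}=\Phi_j(x,z_j)$ with $z_0=0$, induction on $j$ shows $x\mapsto z_j$ is semialgebraic for every $j$: the base case is the constant map $0$, and the inductive step composes the semialgebraic $x\mapsto(x,z_j)$ with the semialgebraic $\Phi_j$. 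Since $N$ is finite, $x\mapsto z_N$ is semialgebraic, and finally $f_{\mathcal{N},c_{\text{max}}}=\Pi\circ(x\mapsto z_N)$ is semialgebraic.

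The only step needing genuine care is the semialgebraicity of $\textrm{clamp-sol}$ (the inverse on the invertible locus and the gluing with the singular locus) and, underpinning everything, closure of semialgebraic functions under composition; both follow from the Tarski--Seidenberg theorem already invoked in the paper, and the rest is bookkeeping. The same argument applies verbatim to any numerical solver whose single step is a semialgebraic function of the current state and of the vector-field values---in particular to Runge--Kutta methods---since such a solver still produces $z_N$ by finitely many compositions of semialgebraic maps.
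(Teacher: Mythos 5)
Your proposal is correct and follows essentially the same route as the paper's proof: both express each iteration step as a finite composition of semialgebraic maps, handle the matrix inverse via Cramer's rule (adjugate/cofactor on the invertible locus, zero on the singular locus), and conclude by closure under composition/projection via the Tarski--Seidenberg theorem. If anything, you are slightly more careful than the paper, which folds the singular case into an indicator-function formula and does not explicitly mention the clamp operation, whereas you verify its semialgebraicity directly.
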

\begin{proof} The operation iteration steps $(x,j,z_j) \mapsto z_{j+1}$
are a composition of ISD-function and the function  $(M,b) \mapsto (\det(M),M_c,b)$,
where $M_c$ is the co-factor matrix of $M$,  the function 
$$
(\det(M),M_c,b) \mapsto ({\bf 1}_{\det(M)>0} + {\bf 1}_{\det(M)<0})\,(\det(M))^{-1}M_c b)
$$ 
that implements  
Cramer's rule, and another ISD-function given by the time-step formula
$$
(z_j,\dot{z}_j)\to  z_j +\frac{1}{N} \dot{z}_j.
$$ 
This makes the function $x \mapsto z_N$ 
semialgebraic; including the projection $\Pi_n$ results in a semialgebraic function due to the Tarski--Seidenberg theorem.
\end{proof}

\section{Numerical example: Inverse problem for electrical resistor networks}
\label{sec-eit}

This section demonstrates the feasibility of training SANNs to solve a difficult non-linear inverse problem.

Suppose we are given an electrical network $\Gamma = (V, E)$ with vertices $V$ and edges $E \subset V \times V.$ Physically, an electrical network is a system of connected resistors, and it is modeled by a graph where every edge is a resistor. An inverse problem for
an electrical network is to find one or all graphs and resistors on the edges when the outcome of all possible
voltage and current measurements in some of vertices of the graph (called the boundary nodes) are given, see Figure \ref{fig:electrical-networks}. 

We partition $V$ into boundary nodes $V_B$ and interior nodes $V_I$.
Let $\gamma: E \to \mathbb{R}_{\ge 0}$ be the edge \emph{conductivities}. 
The \emph{network response matrix} (i.e. Dirichlet-to-Neumann map) $\Lambda \in \mathbb{R}^{|V_B| \times |V_B|}$ computes the current at the boundary nodes resulting from applying voltages on $V_B$.
Our task is to recover the conductivites $\gamma$ from measurements $\Lambda$ of voltage-response on the boundary.
See \citet{curtisInverseProblemsElectrical2000} for a thorough treatment of this inverse problem, and \citet{forceyCircularPlanarElectrical2023a} for recent developments.

Equivalently, we wish to recover the \emph{Kirchhoff matrix} $K \in \mathbb{R}^{|V| \times |V|}$ where 
\begin{equation}
    K_{ij} = \begin{cases}
        \gamma(i,j) & \text{ if } (i,j) \in E \\
        -\sum_{k \neq i} \gamma(i, k) & \text{ if } i = j \\
        0 & \text{ otherwise.}
        \end{cases}
\end{equation}
$K$ is a symmetric matrix with conductivities on off-diagonals and that satisfies Kirchhoff's Law---the sum of entries in each row is $0$.
\citet{curtisInverseProblemsElectrical2000} showed that $\Lambda$ is the Schur complement of a particular submatrix of the Kirchhoff matrix.
Precisely, we write
\begin{equation}
K = 
    \begin{bmatrix} 
        A & B \\
        B^T & D
    \end{bmatrix}
\end{equation}
where $A$ corresponds to boundary nodes $V_B$ and $D$ corresponds to interior nodes $I$.
If we let $\mathcal{K}(\Gamma) \subset \mathbb{R}^{|V|\times|V|}$ denote the set of valid Kirchhoff matrices on $\Gamma$ (a semialgebraic set), then the forward map (mapping a graph and the values of the resistors to the values of the measurements) is the semialgebraic function $F: \mathcal{K}(\Gamma) \to \mathbb{R}^{|V_B| \times |V_B|}$ defined
\begin{equation}
    F(K) \defeq A - BD^{-1}B^T = \Lambda
\end{equation}
The inverse problem is to recover an element of the preimage $F^{-1}(\{ \Lambda \})$.
Since preimages of semialgebraic sets (in this case the single point $\{ \Lambda \}$) of semialgebraic functions ($F$) are themselves semialgebraic \citep{bochnakRealAlgebraicGeometry1998}, we conclude $F^{-1}(\{ \Lambda \})$ is a semialgebraic set.

In certain special network topologies, such as for rectangular networks 
(Figure \ref{fig:electrical-networks}), 
the preimage of $F^{-1}(\{\Lambda\})$ is a singleton, that is, the values $\gamma(i,j),$ $(i,j)\in E$ of the resistors on the graph $(V,E)$ are uniquely determined by the voltage-to-current map $\Lambda$ \citep{curtisInverseProblemsElectrical2000}.

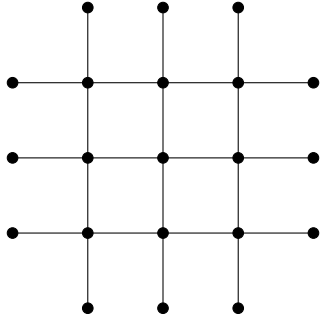
\begin{figure}[h]
    \centering
    \begin{tikzpicture}
            \filldraw[black] (0,0) circle;
            \foreach \x in {1,...,3}
                \filldraw[black] (\x,0) circle (2pt);
            \foreach \y in {1,...,3}
                \foreach \x in {0,...,4}
                    \filldraw[black] (\x,\y) circle (2pt);
            \foreach \x in {1,...,3}
                \filldraw[black] (\x,4) circle (2pt);
            \foreach \x in {1,...,3}
                \draw (\x,0) -- (\x,4);
            \foreach \y in {1,...,3}
                \draw (0,\y) -- (4,\y);
        \end{tikzpicture}
    \caption{A $3 \times 3$ rectangular electrical network. Each edge in the graph represents a resistor with unknown conductance.}
    \label{fig:electrical-networks}
\end{figure}

This inverse problem has had a wide impact in several fields. In medical imaging it has led to new imaging modalities, for example, in monitoring lung function. Recently, EIT is used in monitoring the lungs of the COVID-19 patients, see \cite{EITcovid}. 

\subsection{Numerical results from training.}
\label{sec-training-eit}

We demonstrate the feasibility of training SANNs to perform non-trivial tasks using the network EIT problem as an example. Specifically, given a voltage-response map $\Lambda$ for a $2 \times 2$ rectangular electrical network, the task is to recover the non-zero, off-diagonal entries of the Kirchhoff matrix $K$. We trained a 4-layer SANN to minimize the loss $\mathcal{L}_\text{total}$ from equation (\ref{eq:total_loss}). We compare this against a feed-forward neural network trained to minimize $\mathcal{L}_\text{accuracy}$ from equation (\ref{eq:total_loss}) on the same data. Both networks were trained using the Adam optimizer. We also compare against a known reconstruction algorithm from \cite{curtisInverseProblemsElectrical2000} that is exact for noiseless data. The results are shown in Table \ref{tab:training_EIT} below.
Let us shortly discuss these results: The Curtis and Morrow (C\&M) algorithm is an analytical method that solves the problem using the theory of planar graphs, and therefore it is provably correct for all inputs, including inputs outside the training data used for the other algorithms, such as  exceptional resistor networks with very small or large resistors. However, the algorithm is very sensitive to noise. The algorithm based on a feed forward network is used in our tests as a black-box algorithm and it works well with noisy data.
The SANN is between these two algorithms---it is explainable in the sense that SANN can represent the rational matrix functions, and it works moderately well with noisy data. We note that in the numerical tests the training of SANNs was not fine tuned and by improving the training methods,
it is possible that the results could be improved.

\begin{table}[h]
    \centering
    \begin{tabular}{c|c|c|c}
         & \# parameters & Relative error (noiseless) & Relative error ($1\%$ noise)) \\ \hline
         Feed forward network & $17,712$ & $0.0157$  & $\mathbf{0.0180}$ \\ \hline
         SANN & $17,640$ & $0.0234$ & $0.0273$ \\ \hline
         C\&M & --- & $\mathbf{2.49e-07}$ & $0.0469$
    \end{tabular}
    \caption{Results of training neural networks from data to solve the network EIT problem on $2 \times 2$ rectangular electrical networks. The last row (C\&M) shows the reconstruction algorithm from \cite{curtisInverseProblemsElectrical2000}.}
    \label{tab:training_EIT}
\end{table}

As in the linear inverse problem example from Section \ref{sec-training-linear}, a trained SANN performs comparably to a standard feed-forward neural network of the same size, demonstrating the feasibility of training these networks. We do not claim to demonstrate numerical advantages of SANNs at this stage; further research is necessary to develop new techniques that fully leverage their expressive power.

Both networks outperform the C\&M algorithm on noisy input data. The C\&M algorithm reconstructs conductances edge-by-edge, starting from the boundary and working inward; this causes errors to accumulate rapidly for interior edges. In contrast, neural networks incorporate learned prior knowledge of the expected distribution of conductances, making them more robust to noisy measurements within the data distribution.

\subsection{Continuous EIT problem (PDEs)}

The inverse problem for electrical networks described above is a discrete version of the medical imaging problem encountered in Electrical Impedance Tomography \citep{BorceaEIT}, an inverse problem for finite element models for the (anisotropic) conductivity equation. If $\Omega \subset \mathbb R^2$ is the interior of a simple polygon, and if one is given a triangulation of $\overline{\Omega}$, then the Finite Element Method (FEM) model for the Dirichlet problem for an elliptic
partial differential equation,
\begin{equation}\label{PDE elliptic}
\nabla \cdot (\sigma(x) \nabla u(x)) = 0,\quad x\in \Omega,\quad u|_{\partial \Omega} = F,
\end{equation}
for a given matrix or scalar valued conductivity function $\sigma(x)$, is the matrix equation 
\begin{equation*}
\left( \begin{array}{cc} I_B & 0 \\ L' & L'' \end{array} \right) u = \left( \begin{array}{cc} f \\ 0 \end{array} \right)
\end{equation*}
where $u = (u_1 \cdots u_N)^t$ corresponds to values of the FEM solution at the vertices of the triangulation, $f = (f_1 \cdots f_B)^t$ corresponds to the boundary value $f$ at the boundary vertices, and the matrix $(L' \ L'')$ depends on $\sigma$ and the triangulation and has rows whose elements add up to zero.

If one considers the triangulation of $\overline{\Omega}$ as a graph, and if to each edge of the graph one assigns a resistor with a given conductivity, then the Dirichlet problem for this resistor network is to find a solution $u = (u_1 \cdots u_N)^t$, corresponding to voltages at each vertex, of the matrix equation 
\begin{equation}\label{FEM model equation}
\left( \begin{array}{cc} I_B & 0 \\ K' & K'' \end{array} \right) u = \left( \begin{array}{cc} f \\ 0 \end{array} \right)
\end{equation}
where $f = (f_1 \cdots f_B)^t$ corresponds to voltages at the boundary vertices. Also here, the rows of the matrix $(K' \ K'')$, which depends on the resistors, add up to zero.

Thus, formally, the FEM model for an elliptic
partial differential equation and resistor networks lead to the same kind of equation, see \cite{LassasSaloTzou} for details. 

Electrical Impedance Tomography (EIT) is a method proposed for medical and industrial imaging, where the objective is to determine the electrical conductivity or impedance in a medium by making voltage to current measurements on its boundary \citep{BorceaEIT}. It has led to new imaging modalities, for example the {\color{blue}\href{https://www.draeger.com/en_uk/Hospital/Products/Ventilation-and-Respiratory-Monitoring/ICU-Ventilation-and-Respiratory-Monitoring/PulmoVista-500}{{}Pulmovista500}} device by Draeger AG \& Co. used to
  monitor lung function. Recently, EIT is being used to monitor the lungs of COVID-19 patients \citep{EITcovid}.

The boundary measurements are modeled by the  Dirichlet-to-Neumann map $\Lambda_a$, which maps the boundary value $f$ of the electric voltage to the electric current through the boundary (i.e., the conormal derivative) $\Lambda_a(f) =  {\nu} \cdot \sigma u|_{\partial \Omega}$.

In FEM, let
$\Omega$ be the interior of a simple polygon, and fix a triangulation of $\overline{\Omega}$ with vertices $\{x_1,\ldots,x_N\}$ of which $\{x_1,\ldots,x_B\}$ lie on $\partial \Omega$. Let $A(\Omega)$ be the set of continuous functions $\overline{\Omega} \to \mathbb R$ which are affine on each triangle. If $v_k$ is the unique function in $A(\Omega)$ which is $1$ at $x_k$ and $0$ at all the other vertices, then $A(\Omega) = \{ \sum_{k=1}^N c_k v_k \,;\, c_k \in \mathbb C \}$.

There is a voltage-to-current map $\Lambda_{\sigma}^{\mathrm{FEM}}$ related to the FEM model that is
a discrete version of the Dirichlet-to-Neumann map. To define this map, let us consider the space $A(\partial \Omega) = A(\Omega)|_{\partial \Omega} = \{ \sum_{j=1}^B c_j v_j|_{\partial \Omega} ; c_j \in \mathbb R \}$, and 
a quadratic form $Q_{\sigma}$ on $A(\partial \Omega)$ that is given 
for  $f\in A(\partial \Omega)$ by
\begin{equation*}
Q_{\sigma}(f,f) = \min_u \int_{\Omega} \sigma(x) \nabla u(x) \cdot \nabla u(x)\, dx, 
\end{equation*}
where the minimum is taken over $u \in A(\Omega)$ 
satisfying the boundary condition $u|_{\partial \Omega} =f$. 
The  voltage-to-current map $\Lambda_{\sigma}^{\mathrm{FEM}}$  is a symmetric matrix 
$\Lambda_{\sigma}^{\mathrm{FEM}} :\R^B\to \R^B$ determined
by the formula
$$
\vec f\cdot \Lambda_{\sigma}^{\mathrm{FEM}} \vec f=
Q_{\sigma}(f,f)
$$
where $f=\sum_{j=1}^B f_j v_j|_{\partial \Omega}$
is the boundary current corresponding to a vector
$\vec f=(f_j)_{j=1}^B$.

The solution of the matrix equation (\ref{FEM model equation}) can be found using Cramer's formula as a meromorphic function. This implies that there is a SANN that  solves
the FEM approximation problem (\ref{FEM model equation}) for the partial differential equation
(\ref{PDE elliptic}) and another SANN that represents the voltage-to-current map.

\section{Additional applications}

\subsection{Semialgebraic optimization}
\label{sec-optimization}

Consider the general class of optimization problems
\begin{align}
    \label{eq-opt-problem}
    f(\theta) = \underset{\theta^*}{\mathrm{arginf}} & \quad g(\theta^*, \theta)
    \quad \hbox{subject to the condition }  \theta^* \in \mathcal{S}(\theta)\subset \mathbb{R}^n
\end{align}
where $\theta \in \mathbb{R}^m,$
and both $g: \mathbb{R}^n \times \mathbb{R}^m \to \mathbb{R}$ and $\mathcal{S}: \mathbb{R}^m \to \mathcal{P}(\mathbb{R}^n)$ are semialgebraic functions.
Here, $\mathcal{P}(\mathbb{R}^n) \defeq \{ x \mid x \subset \mathbb{R}^n\}$ is the power set of $\mathbb{R}^n$, and $\mathcal{S}(\theta)$ is a semialgebraic constraint set that is allowed to vary based on $\theta$.
Using the Tarski--Seidenberg theorem, we can show that $f$ is a (possibly set-valued) semialgebraic function.
Indeed, the graph of $f$ is 
\begin{equation}
   \left\{ \Big( \theta, \Pi_{n} \partial \big( \overline{\mathrm{epi}(g)} \cap (\mathcal{S}(\theta) \times \mathbb{R}^m \times \mathbb{R}) \big) \Big) \mid \theta \in \mathbb{R}^m \right\}
\end{equation}
where $\Pi_n$ projects onto the first $n$ coordinates, $\partial$ is the boundary operation, and $\overline{\mathrm{epi}(g)}$ is the closure of the epigraph $\mathrm{epi}(g)=\{(\theta^*,\theta,t)\in
(\mathbb{R}^n \times \mathbb{R}^m \times \mathbb{R}) \mid t \ge g(\theta^*,\theta) \}$ of $g$.
Since this is a semialgebraic set, $f$ is a semialgebraic function.
If we impose the additional assumption that $f$ is bounded (which holds if $\mathcal{S}$ is bounded, for example), then by Theorem \ref{thm-sanns-discontinuous} it can be represented by a SANN.

\begin{remark}
Optimization problem (\ref{eq-opt-problem}) includes as special cases polynomial optimization and quadratic/linear/mixed-integer programming.
It is at least NP-hard; there is no general polynomial-time algorithm to compute $f$ from a description of $g$ and $\mathcal{S}$ \citep{lasserreIntroductionPolynomialSemiAlgebraic2015}.
\end{remark}

Many interesting problems can be formulated in terms of (\ref{eq-opt-problem}).
Below, we give a new potential application using SANNs as a hypernetwork to sparsify ReLU networks.

\subsection{SANNs as hypernetworks}

In this section, we demonstrate how SANNs can be used as hypernetworks to optimize various properties of target ReLU networks.

To connect the general class of optimization problems (\ref{eq-opt-problem}) to hypernetworks, suppose we have a fixed target architecture $\mathcal{N}_{\text{target}}$ with trained weights $\theta \in \mathbb{R}^m$.
Let $\mathcal{S}(\theta) \subset \mathbb{R}^m$ be the set of all weights generating \emph{equivalent} networks \citep{New1}; i.e. every choice of weights from $\mathcal{S}(\theta)$ produces the same input-output pairs as $\theta$ when used in architecture $\mathcal{N}_{\text{target}}$.
When $\mathcal{N}_{\text{target}}$ is a feed-forward ReLU network, $\mathcal{S}(\theta)$ is a semialgebraic set \citep{valluriSpaceCoefficientsFeedforward2023a}.

Now, different choices of objective function $g$ optimize various properties of the target network. 
For example, when $g$ counts the number of non-zero elements of $\theta^*$ (the ``$L^0$ norm''), $f$ returns the sparsest-possible equivalent network.
Using sparse $\theta^*$ is both more memory-efficient \citep{bolcskeiOptimalApproximationSparsely2018} and interpretable \citep{fanInterpretabilityArtificialNeural2021} than using $\theta$.  
Likewise, $f$ can return a ``nearly-equivalent'' network with small Lipschitz constant by relaxing $\mathcal{S}$ and using $g$ that computes the Lipschitz constant of $\mathcal{N}_{\text{target}}$ with weights $\theta^*$.
Reducing the Lipschitz constant of the target network improves its robustness, interpretability, and generalization performance (see \citet{ducotterdImprovingLipschitzconstrainedNeural2024} and references therein).

\subsection{Semialgebraic transformer}
\label{sec-transformer}

In this section, we demonstrate how a variation of the transformer architecture \citep{vaswaniAttentionAllYou2023} is expressible as a SANN. Our presentation of the transformer architecture is inspired by \citet{furuyaTransformersAreUniversal2024}.

We start with the SoftMax attention mechanism at the heart of the transformer architecture.
$\SoftMax: \mathbb{R}^{n \times n} \to \mathbb{R}_+^{n \times n}$ operates row-wise on its input matrix:
\begin{equation}
\label{eq-softmax}
\SoftMax(X)_{i,j} := \frac{ \exp(X_{i,j}) }{ \sum_{\ell=1}^n \exp(X_{i,\ell})}.
\end{equation}
$\SoftMax$ is a continuous approximation to the row-wise function $\argmax: \mathbb{R}^n \to \mathbb{R}^n$
\begin{equation}
    (\argmax{x})_i := \begin{cases}
        1/s & \text{ if } x_i \ge x_j \text{ for all } j = 1 \dots n, \\
        0 & \text{ otherwise.}
    \end{cases}
\end{equation}
where $s$ is the number of indices $j$ such that $x_j = \max{x}$.
It is not a semialgebraic function since $\exp: \mathbb{R} \to \mathbb{R}$ is not semialgebraic; however, for our purposes we can use the following piecewise-rational function $\saxp: \mathbb{R} \to \mathbb{R}$ instead:
\begin{equation}
    \saxp(x) = \begin{cases}
        x^2 + x + 1 & x \ge 0 \\
        1 / (x^2 - x + 1) & x < 0.
    \end{cases}
\end{equation}
$\saxp$ shares the following properties with $\exp$:
\begin{itemize}
    \item $\mathrm{Im}(\saxp) = \mathrm{Im}(\exp) = (0,+\infty)$
    \item $\saxp(0) = \exp(0)$ and $\saxp'(0) = \exp'(0)$
    \item $\lim_{x \to -\infty}\saxp(x) = 0$, and $\lim_{x \to +\infty} \saxp(x) = +\infty$.
\end{itemize}

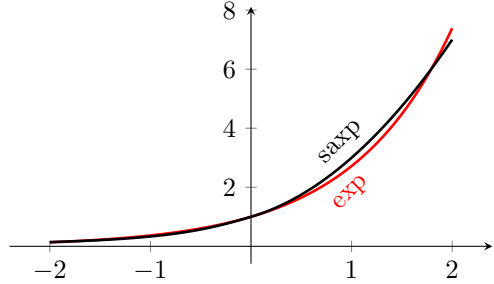
\begin{figure}[ht]
\centering
\begin{tikzpicture}[every mark/.append style={mark size=1pt}]
\usetikzlibrary {arrows.meta} 
\begin{axis}[
    samples=100, 
    width=8cm, height=5cm,
    axis x line=center,
    axis y line=center,
    enlarge x limits = true,
    enlarge y limits = true,]
  \addplot[domain=-2:2, red, line width=1] ({x}, {exp(x)});
  \node[rotate=45, red] at (1,1.8) {$\exp$};
  
  \addplot[domain=-2:0, line width=1] ({x}, {1/(x^2 - x + 1)});
  \addplot[domain=0:2, line width=1] ({x}, {x^2 + x + 1});
  \node[rotate=45] at (0.9,3.5) {$\saxp$};
\end{axis}
\end{tikzpicture}
\caption{$\saxp$ (black) is a semialgebraic approximation to $\exp$ (red). It is simple, accurate near $0$, and has the correct asymptotic behavior as $x \to \pm \infty$.}
\end{figure}

$\saxp$ is a good approximation of $\exp$ near $0$, but $\saxp = o(\exp)$ as $x \to \infty$.
In fact this is true for every semialgebraic function since every semialgebraic function is bounded above by a polynomial \citep{bochnakRealAlgebraicGeometry1980}.
This is not a concern for our transformers since layer normalization will keep the input to $\saxp$ near $0$.

By using $\saxp$ in place of $\exp$ in equation (\ref{eq-softmax}), we define $\SArgMax$ (``Semialgebraic ArgMax''):
\begin{equation}
    \label{eq-sargmax}
    \SArgMax(X)_{i,j} := \frac{ \saxp(X_{i,j}) }{ \sum_{\ell=1}^n \saxp(X_{i,\ell})}.
\end{equation}
Like $\SoftMax$, every entry of $\SArgMax(X)$ lies in the interval $(0, 1)$, and every row sums to $1$. 
This yields the semialgebraic attention mechanism
\begin{equation}
    \label{eq-att}
    \Att_{\theta^h}(X) := VX \; \SArgMax(X^T Q^T K X / \sqrt{k})
\end{equation}
where ${\theta^h} := (K, Q, V) \in \mathbb{R}^{k \times d_{\text{in}}} \times \mathbb{R}^{k \times d_{\text{in}}} \times \mathbb{R}^{d_{\text{head}} \times d_{\text{in}}}$ (``Key'', ``Query'', and ``Value'') are learnable parameters.
A multi-head attention mechanism with a skip connection is simply a linear combination of attention blocks:
\begin{equation}
    \label{eq-matt}
    \MAtt_\theta(X) := X + \sum_{h=1}^H W^h \Att_{\theta^h}(X)
\end{equation}
where $\theta := (W^h, \theta^h), h = 1, \dots H$ are the learnable parameters specific to each attention head. 

Transformers are constructed by composing multi-head attention, MLP, and layer normalization \cite{} blocks.
MLP blocks with ReLU activation are simply
\begin{equation}
    \label{eq-mlp}
    \mathrm{MLP}_\theta(X) := \mathrm{ReLU}(WX + B)
\end{equation}
with parameters $\theta := (W, B)$.
Row-wise layer normalization is
\begin{align}
    \label{eq-layernorm}
    \LayerNorm(X)_{i,j} &:= \frac{X_{i,j} - \mu_i}{\sigma_i} \qquad \text{where} \\
    \mu_i &:= \frac{1}{n} \sum_{\ell=1}^n X_{i,\ell} \\
    \sigma_i &:= \sqrt{\frac{1}{n} \sum_{\ell=1}^n (X_{i,\ell} - \mu_i)^2}.
\end{align}

Recall that every function defined piecewise on a semialgebraic decomposition of $\mathbb{R}^m$ using only addition, subtraction, multiplication, division, and extraction of roots is a semialgebraic function.
From this, it is clear that $\MAtt$, $\mathrm{MLP}$, and $\LayerNorm$ in equations (\ref{eq-matt}), (\ref{eq-mlp}), and (\ref{eq-layernorm}) are semialgebraic.
Furthermore, they are all continuous, so they are bounded when their domain is restricted to be compact, such as after a tokenization step.
Finally, note that the composition of semialgebraic functions is itself a semialgebraic function.

Transformers with semialgebraic attention are built as compositions of equations (\ref{eq-matt}), (\ref{eq-mlp}), and (\ref{eq-layernorm}), so they are bounded semialgebraic functions and thus representable by a SANN by Theorem \ref{thm-sanns-discontinuous}.

We emphasize that our only change to the transformer was to replace $\exp$ with the semialgebraic approximation $\saxp$.
This modified transformer is naturally expressible as a SANN without further modification of our architecture.

\end{document}